\newcommand{\acksection}{\section*{Acknowledgments and Disclosure of Funding}}
\def\simiid{\overset{\textnormal{\fontsize{6}{6}\selectfont
i.i.d.}}{\sim}}
\def\IID{{\mathrm{IID}}}
\def\nnz{{\mathrm{nnz}}}
\def\polylog{{\mathrm{polylog}}}
\def\tinydots{\textnormal{\fontsize{6}{6}\selectfont \dots}}
\def\Det{{\mathrm{Det}}}
\DeclareMathOperator{\adj}{\mathrm{adj}}
\def\xbh{\hat{\x}}
\def\g {\mathbf{g}}
\def\ee{\mathrm{e}}
\def\Nc{\mathcal{N}}
\def\p{\mathbf p}
\def\Y{\mathbf Y}
\def\R{\mathbf R}
\def\H{\mathbf H}
\def\Hbh{\widehat{\H}}
\def\Hbt{\widetilde{\H}}
\newcommand{\BlackBox}{\rule{1.5ex}{1.5ex}}  
\def\DPP{{\mathrm{DPP}}}
\DeclareMathOperator*{\argmin}{\mathop{\mathrm{argmin}}}
\def\x{\mathbf x}
\def\y{\mathbf y}
\def\a{\mathbf a}
\def\b{\mathbf b}
\def\v{\mathbf v}
\def\p{\mathbf{p}}
\def\e{\mathbf e}
\def\zero{\mathbf 0}
\def\one{\mathbf 1}
\def\Poisson{\mathrm{Poisson}}
\def\X{\mathbf X}
\def\Xb{\bar{\X}}
\def\Xc{\mathcal{X}}
\def\A{\mathbf A}
\def\C{\mathbf C}
\def\M{\mathbf M}
\def\S{\mathbf S}
\def\Sb{\bar\S}
\def\St{\widetilde{S}}
\def\Sc{\mathcal{S}}
\def\Z{\mathbf Z}
\def\I{\mathbf I}
\def\A{\mathbf A}
\def\E{\mathbb E}
\def\R{\mathbb R} 
\def\tr{\mathrm{tr}}
\let\origtop\top
\renewcommand\top{{\scriptscriptstyle{\origtop}}} 
\definecolor{silver}{cmyk}{0,0,0,0.3}
\definecolor{yellow}{cmyk}{0,0,0.9,0.0}
\definecolor{reddishyellow}{cmyk}{0,0.22,1.0,0.0}
\definecolor{black}{cmyk}{0,0,0.0,1.0}
\definecolor{darkYellow}{cmyk}{0.2,0.4,1.0,0}
\definecolor{darkSilver}{cmyk}{0,0,0,0.1}
\definecolor{grey}{cmyk}{0,0,0,0.5}
\definecolor{darkgreen}{cmyk}{0.6,0,0.8,0}
\newcommand{\Green}[1]{{\color{darkgreen}  {#1}}}
\newcommand{\Blue}[1]{\color{blue}{#1}\color{black}}
\newcommand{\Brown}[1]{{\color{brown}{#1}\color{black}}}
\newenvironment{proofof}[2]{\par\vspace{2mm}\noindent\textbf{Proof of {#1} {#2}}\ }{\hfill\BlackBox}
\newenvironment{proof}{\par\noindent{\bf Proof\ }}{\hfill\BlackBox\\[2mm]}
\newtheorem{theorem}{Theorem}
\newtheorem{example}{Example}
\newtheorem{lemma}[theorem]{Lemma}
\newtheorem{proposition}[theorem]{Proposition}
\newtheorem{remark}[theorem]{Remark}
\newtheorem{corollary}[theorem]{Corollary}
\newtheorem{definition}{Definition}
\newtheorem{conjecture}[theorem]{Conjecture}
\newtheorem{claim}[theorem]{Claim}
\newtheorem{assumption}[theorem]{Assumption}
\definecolor{lasallegreen}{rgb}{0.03, 0.47, 0.19}
\title{Debiasing Distributed Second Order Optimization\\
with Surrogate Sketching and Scaled Regularization}
\author{%
  \textbf{Micha{\l } Derezi\'{n}ski} \\
  Department of Statistics\\
  University of California, Berkeley\\
  \texttt{mderezin@berkeley.edu} \\
   \and
  \textbf{Burak Bartan} \\
  Department of Electrical Engineering\\
  Stanford University\\
  \texttt{bbartan@stanford.edu} \\
   \and
  \textbf{Mert Pilanci} \\
  Department of Electrical Engineering\\
  Stanford University\\
  \texttt{pilanci@stanford.edu} \\
   \and
  \textbf{Michael W. Mahoney} \\
  ICSI and Department of Statistics\\
  University of California, Berkeley\\
  \texttt{mmahoney@stat.berkeley.edu} \\
}
\begin{document}
\maketitle

\begin{abstract}
In distributed second order optimization, a standard strategy is to average many local estimates, each of which is based on a small sketch or batch of the data. 
However, the local estimates on each machine are typically biased, relative to the full solution on all of the data, and this can limit the effectiveness of averaging.  
Here, we introduce a new technique for debiasing the local estimates, which leads to both theoretical and empirical improvements in the convergence rate of distributed second order methods. Our technique has two novel components: (1) modifying standard sketching techniques to obtain what we call a surrogate sketch; and (2) carefully scaling the global regularization parameter for local computations. Our surrogate sketches are based on determinantal point processes, a family of distributions for which the bias of an estimate of the inverse Hessian can be computed exactly. Based on this computation, we show that when the objective being minimized is $l_2$-regularized with parameter $\lambda$ and individual machines are each given a sketch of size $m$, then to eliminate the bias, local estimates should be computed using a shrunk regularization parameter given by $\lambda^{\prime}=\lambda\cdot(1-\frac{d_{\lambda}}m)$, where $d_{\lambda}$ is the $\lambda$-effective
dimension of the Hessian (or, for quadratic problems, the data matrix). 
\end{abstract}
\section{Introduction}
\label{sec:intro}

We consider the task of second order optimization in a distributed or parallel setting. Suppose that $q$ workers are each given a small sketch of the data (e.g., a random sample or a random projection) and a parameter vector $\x_t$. The goal of the $k$-th worker is to construct a local estimate $\Delta_{t,k}$ of the Newton step relative to a convex loss on the full dataset. The estimates are then averaged and the parameter vector is updated using this averaged step, obtaining $\x_{t+1} = \x_t + \frac1q\sum_{k=1}^q \Delta_{t,k}$. This basic strategy has been extensively studied and it has proven effective for a variety of optimization tasks because of its communication-efficiency \cite{mahoney2018giant}. However, a key problem that limits the scalability of this approach is that local estimates of second order steps are typically biased, which means that for a sufficiently large $q$, adding more workers will not lead to any improvement in the convergence rate. 
Furthermore, for most types of sketched estimates this bias is difficult to compute, or even approximate, which makes it difficult to correct.

In this paper, we propose a new class of sketching methods, called \emph{surrogate sketches}, which allow us to debias local estimates of the Newton step, thereby making distributed second order optimization more scalable. 
In our analysis of the surrogate sketches, we exploit recent developments in determinantal point processes (DPPs) to give exact formulas for the bias of the estimates produced with those sketches, enabling us to correct that bias. 
Due to algorithmic advances in DPP sampling, surrogate sketches can be implemented in time nearly linear in the size of the data, when the number of data points is much larger than their dimensionality, so our results lead to direct improvements in the time complexity of distributed second order optimization. 
Remarkably, our analysis of the bias of surrogate sketches leads to a simple technique for debiasing the local Newton estimates for $l_2$-regularized problems, which we call \emph{scaled regularization}. We show that the regularizer used on the sketched data should be scaled down compared to the global regularizer, and we give an explicit formula for that scaling. Our empirical results demonstrate that scaled regularization significantly reduces the bias of local Newton estimates not only for surrogate sketches, but also for a range of other sketching techniques.

\subsection{Debiasing via Surrogate Sketches and Scaled Regularization}
\label{sec:intro-debiasing}

Our scaled regularization technique applies to sketching the Newton step over a convex loss, as described in Section \ref{sec:unbiased}, however, for concreteness, we describe it here in the context of regularized least squares.
Suppose that the data is given in the form of an $n\times d$ matrix $\A$ and an $n$-dimensional vector $\b$, where $n\gg d$. For a given regularization parameter $\lambda>0$, our goal is to approximately solve the following problem: 
\begin{align}
  \x^* = \argmin_\x\,\frac12\|\A\x-\b\|^2 + \frac\lambda2\|\x\|^2.\label{eq:least-squares}
\end{align}
Following the classical sketch-and-solve paradigm, we use a random $m\times n$ sketching matrix $\S$, where $m\ll n$, to replace this large $n\times d$ regularized least squares problem $(\A,\b,\lambda)$ with a smaller $m\times d$ problem of the same form. We do this by sketching both the matrix $\A$ and the vector $\b$, obtaining the problem $(\S\A,\S\b,\lambda^{\prime})$ given by:
\begin{align}
  \xbh = \argmin_\x\,\frac12\|\S\A\x-\S\b\|^2 + \frac{\lambda^{\prime}}{2}\|\x\|^2,\label{eq:sketch-and-solve}
\end{align}
where we deliberately allow $\lambda^{\prime}$ to be different than $\lambda$. The question we pose is: What is the right choice of $\lambda^{\prime}$ so as to minimize $\|\E[\xbh]-\x^*\|$, i.e., the bias of $\xbh$, which will dominate the estimation error in the case of massively parallel averaging? We show that the choice of $\lambda^{\prime}$ is controlled by a classical notion of effective dimension for regularized least squares \cite{ridge-leverage-scores}.
\begin{definition}
Given a matrix $\A$ and regularization parameter $\lambda\geq 0$, the
  $\lambda$-effective dimension of $\A$ is defined as
  $d_{\lambda}=d_\lambda(\A) = \tr(\A^\top\A(\A^\top\A+\lambda\I)^{-1})$.
\end{definition}
For surrogate sketches, which we define in Section \ref{s:surrogate}, it is in fact possible to bring the bias down to zero and we give an exact formula for the correct $\lambda^{\prime}$ that achieves this (see Theorem~\ref{t:least-squares} in Section~\ref{sec:unbiased} for a statement which applies more generally to the Newton's method).
\begin{theorem}\label{t:intro-unbiased}
    If $\xbh$ is constructed using a size $m$ surrogate sketch from Definition \ref{d:surrogate}, then:
    \[\E[\xbh] = \x^*\quad\text{for}\quad \lambda^{\prime} = \lambda\cdot\Big(1 - \frac{d_\lambda}{m}\Big).\] 
\end{theorem}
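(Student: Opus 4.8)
The plan is to reduce the statement to a single matrix identity and then to exploit the determinantal structure of the surrogate sketch to evaluate the relevant expectation in closed form. First I would write down the closed forms of the two minimizers. Differentiating \eqref{eq:sketch-and-solve} and setting the gradient to zero gives $\xbh = (\A^\top\S^\top\S\A + \lambda'\I)^{-1}\A^\top\S^\top\S\b$, while \eqref{eq:least-squares} gives $\x^* = (\A^\top\A + \lambda\I)^{-1}\A^\top\b$. Since the surrogate sketch $\S$ from Definition~\ref{d:surrogate} is drawn from a distribution that depends only on $\A$ and $\lambda$ and never on the target vector $\b$, and since both $\xbh$ and $\x^*$ are linear in $\b$, it suffices to prove the matrix-valued identity
\[\E\big[(\A^\top\S^\top\S\A + \lambda'\I)^{-1}\A^\top\S^\top\S\big] = (\A^\top\A + \lambda\I)^{-1}\A^\top,\]
with the expectation taken over $\S$; once this holds, right-multiplication by an arbitrary $\b$ yields $\E[\xbh]=\x^*$ for every $\b$ simultaneously.

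The engine of the proof is the exact expectation formula afforded by the DPP underlying the surrogate sketch. I would establish (this is the content inherited from the general Newton statement, Theorem~\ref{t:least-squares}) that the determinantal weighting makes the sketched ridge map unbiased up to a deterministic rescaling of the regularizer, namely
\[\E\big[(\A^\top\S^\top\S\A + \lambda'\I)^{-1}\A^\top\S^\top\S\big] = \big(\A^\top\A + \tfrac{\lambda'}{1 - d_\lambda/m}\,\I\big)^{-1}\A^\top.\]
Granting this, the theorem is immediate: matching the right-hand side against $(\A^\top\A + \lambda\I)^{-1}\A^\top$ forces $\tfrac{\lambda'}{1 - d_\lambda/m} = \lambda$, i.e. $\lambda' = \lambda\,(1 - d_\lambda/m)$, which is exactly the claimed scaled regularization.

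The hard part will be proving this rescaling identity, and this is precisely where the surrogate construction earns its name: for a generic i.i.d.\ or sub-Gaussian sketch the expectation of the inverse Gram matrix has no closed form, because averaging does not commute with matrix inversion. The determinantal point process is engineered so that this obstruction disappears. Concretely, I would expand the expectation as a sum over sampled subsets $S$ and observe that the DPP sampling probability contributes a factor proportional to the determinant of the ridge-augmented submatrix; combining this determinantal weight with the inverse $(\A_S^\top\A_S + \lambda'\I)^{-1}$ through the adjugate identity $\M^{-1} = \adj(\M)/\det(\M)$ cancels the random normalization, and summing the resulting expressions over all $S$ collapses—via Cauchy--Binet applied to the matrix obtained by stacking $\A$ on top of $\sqrt{\lambda'}\,\I$—to a single determinant of the full ridge-augmented Gram matrix. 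Tracking how the effective dimension $d_\lambda = \tr(\A^\top\A(\A^\top\A+\lambda\I)^{-1})$ enters the normalizing constant of the surrogate DPP is what produces the factor $1 - d_\lambda/m$; this bookkeeping of determinant ratios is the main technical obstacle.

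Finally, I would remark that the least-squares claim is just the quadratic special case of the Newton-step result: for a general $\ell_2$-regularized convex loss, the local Newton estimate solves a reweighted ridge problem in which $\A$ is replaced by the square root of the local Hessian, so the same determinantal computation applies verbatim after this substitution, yielding Theorem~\ref{t:least-squares}. The intuition behind the correction is that subsampling to $m$ points shrinks the Gram matrix and thereby inflates the effective regularization, and the factor $1 - d_\lambda/m$ deflates $\lambda'$ by exactly the fraction of effective dimension captured per sample; the determinantal structure is what upgrades this heuristic into an exact identity.
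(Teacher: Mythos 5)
Your overall mechanism is the paper's own --- reduce to a matrix-expectation identity (valid, since $\xbh$ and $\x^*$ are both linear in $\b$ and the surrogate sketch distribution $\bar\Sc_\mu^m(\A,\lambda)$ does not depend on $\b$), then evaluate the expectation by combining the determinantal weight with the adjugate identity --- but your pivotal intermediate lemma is false, and this creates a genuine gap. You claim that for the surrogate sketch and \emph{every} $\lambda'$,
\[
\E\big[(\A^\top\S^\top\S\A+\lambda'\I)^{-1}\A^\top\S^\top\S\big]
=\Big(\A^\top\A+\tfrac{\lambda'}{1-d_\lambda/m}\,\I\Big)^{-1}\A^\top,
\]
and you attribute this to Theorem~\ref{t:least-squares}; but that theorem (like Lemma~\ref{l:least-squares}) asserts unbiasedness only at the single value $\lambda'=\lambda(1-\tfrac{d_\lambda}{m})$, not a one-parameter family of closed forms. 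Unwinding Definition~\ref{d:surrogate}: $\S=\tfrac1{\sqrt m}\Xb$ with $\Xb\sim\Det_\mu^\gamma(\A,\lambda)$ and $\gamma=m-d_\lambda$, so the determinantal weight is $\det(\A^\top\X^\top\X\A+\lambda\gamma\I)$, while the matrix you invert is $\A^\top\S^\top\S\A+\lambda'\I=\tfrac1m(\A^\top\Xb^\top\Xb\A+m\lambda'\I)$. The cancellation you describe --- weight times inverse equals an adjugate, whose expectation can then be exchanged --- requires the weight determinant and the inverted matrix to carry the \emph{same} regularizer, i.e.\ $m\lambda'=\lambda\gamma$, which is exactly $\lambda'=\lambda(1-\tfrac{d_\lambda}{m})$ and nothing else. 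For any other $\lambda'$ the claimed formula is false, not merely unproven: in the rank-one case, writing $Z=\A^\top\X^\top\X\A$ for the underlying unweighted ensemble ($\X\sim\mu^M$, $M\sim\Poisson(\gamma)$, so $\E[Z]=\gamma\,\A^\top\A$), your identity is equivalent, after clearing the determinantal weight, to $\E[(Z+m\lambda')^{-1}]=(\E[Z]+m\lambda')^{-1}$, which contradicts strict Jensen for the strictly convex map $z\mapsto(z+m\lambda')^{-1}$ and non-degenerate $Z$.

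Consequently your final ``matching'' step --- solving $\tfrac{\lambda'}{1-d_\lambda/m}=\lambda$ --- is circular: the identity you match against holds only at the very value of $\lambda'$ you are solving for, so nothing is derived. The repair is precisely what the paper does: fix $\lambda'=\lambda\gamma/m$ at the outset, observe that then $\A^\top\S^\top\S\A+\lambda'\I=\tfrac1m(\A^\top\Xb^\top\Xb\A+\lambda\gamma\I)$ reproduces the design's own regularizer exactly, and apply the first formula of Lemma~\ref{l:least-squares}, $\E[(\A^\top\Xb^\top\Xb\A+\lambda\gamma\I)^{-1}\A^\top\Xb^\top\Xb]=(\A^\top\A+\lambda\I)^{-1}\A^\top$, to conclude $\E[\xbh]=\x^*$ directly, with no free parameter left to match. (The lemma itself is proved in the paper via determinant-preserving random matrices; for row-sampling $\mu$ your subset-expansion-plus-Cauchy--Binet route is a legitimate equivalent, but it too only closes at the matching regularizer.) The remaining elements of your write-up --- the reduction over arbitrary $\b$, the entry of $d_\lambda$ through $\E[\#(\Xb)]=\gamma+d_\lambda$, and the remark that Theorem~\ref{t:intro-unbiased} is the quadratic instance of Theorem~\ref{t:least-squares} --- are consistent with the paper.
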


\ifthenelse{\boolean{arxivVersion}}{
\begin{figure}
\centering
\centerline{\includegraphics[width=0.4\textwidth]{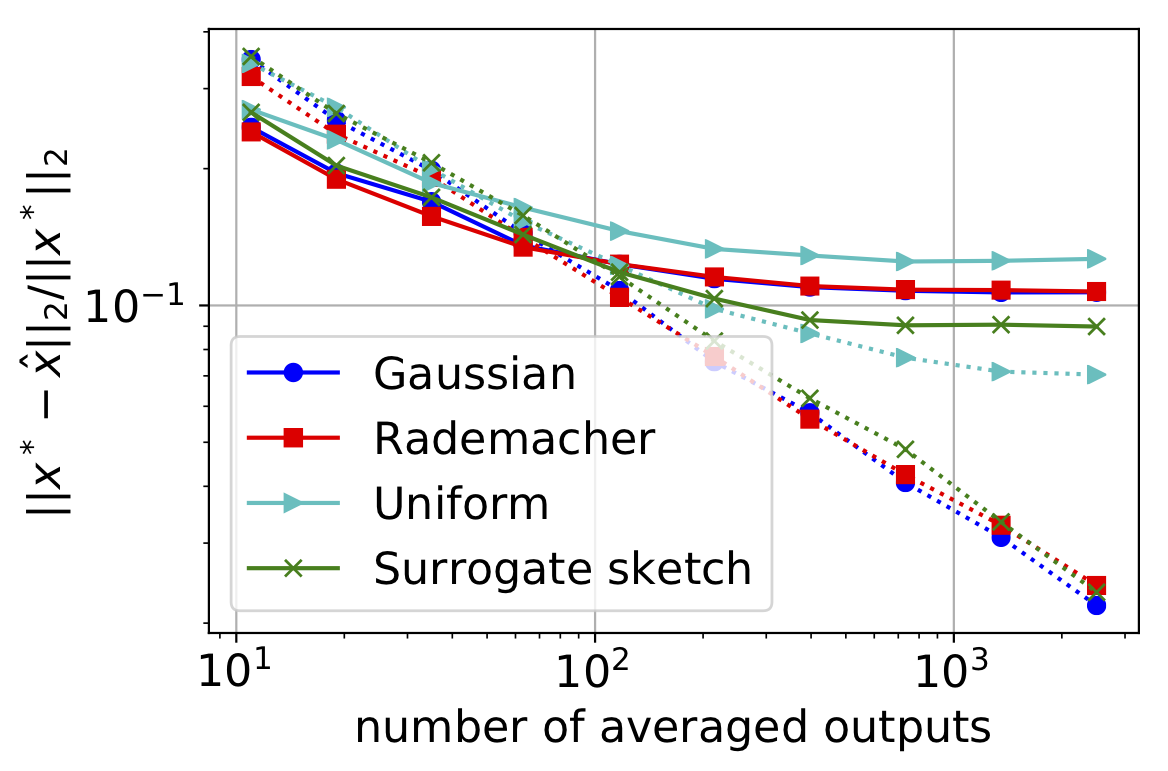}}
\caption{Estimation error against the number of averaged outputs for the Boston housing prices dataset (see Section \ref{sec:numerical}). The dotted curves show the error when the regularization parameter is rescaled as in Theorem \ref{t:intro-unbiased}.}
\label{fig:intro-debiasing}
\end{figure}
}{
\begin{wrapfigure}{r}{0.4\textwidth}
\centering
\vspace{-5mm}
\centerline{\includegraphics[width=0.38\textwidth]{src/regularized_ls_bostonhousing_gaus_m20.png}}
\vspace{-2mm}
\caption{Estimation error against the number of averaged outputs for the Boston housing prices dataset (see Section \ref{sec:numerical}). The dotted curves show the error when the regularization parameter is rescaled as in Theorem \ref{t:intro-unbiased}.}
\vspace{-5mm}
\label{fig:intro-debiasing}
\end{wrapfigure}
}

Thus, the regularization parameter used to compute the local estimates should be smaller than the global regularizer $\lambda$. While somewhat surprising, this observation does align with some prior empirical \cite{sketched-ridge-regression} and theoretical \cite{surrogate-design} results which suggest that random sketching or sampling introduces some amount of \emph{implicit regularization}. From this point of view, it makes sense that we should compensate for this implicit effect by reducing the amount of \emph{explicit} regularization being used. 

One might assume that the above formula for $\lambda^{\prime}$ is a unique property of surrogate sketches. 
 However, we empirically show that our scaled regularization applies much more broadly, by testing it with the standard Gaussian sketch ($\S$ has i.i.d.~entries $\Nc(0,1/m)$), a Rademacher sketch ($\S$ has i.i.d.~entries equal $\frac1{\sqrt m}$ or $-\frac1{\sqrt m}$ with probability $0.5$), and uniform row sampling. In Figure \ref{fig:intro-debiasing}, we plot normalized estimates of the bias, $\|(\frac1q\sum_{k=1}^q\xbh_k)-\x^*\|/\|\x^*\|$, by averaging $q$ i.i.d.~copies of $\xbh$, as $q$ grows to infinity, showing the results with both scaled (dotted curves) and un-scaled (solid curves) regularization. Remarkably, the scaled regularization seems to correct the bias of $\xbh$ very effectively for Gaussian and Rademacher sketches as well as for the surrogate sketch, resulting in the estimation error decaying to zero as $q$ grows. For uniform sampling, scaled regularization also noticeably reduces the bias. In Section \ref{sec:numerical} we present experiments on more datasets which further verify these claims.

\begin{table}
\centering
\begin{tabular}{r|lll|ll}
&Sketch& Averaging& Regularizer  & Convergence Rate & Assumption\\
  \hline\hline
  &&&&\\
\cite{mahoney2018giant}&i.i.d.~row sample &uniform&$\lambda$
  &$\big(\frac1{\alpha q}+\frac1{\alpha^2}\big)^t$
    &$\alpha\geq 1$
  \\  &&&& \\
\cite{determinantal-averaging}&i.i.d.~row sample &determinantal&$\lambda$
  &$\big(\frac d{\alpha q}\big)^t$
    &$\alpha\geq d$
  \\  &&&&\\
Thm.~\ref{t:main} &surrogate sketch &uniform&$\lambda\cdot\big(1-\frac{d_{\lambda}}{m}\big)$
  &$\big(\frac1{\alpha q}\big)^t$
& $\alpha\geq d$
  \end{tabular}
  \vspace{3mm}
    \caption{Comparison of convergence guarantees for the Distributed Iterative Hessian Sketch on regularized least squares (see Theorem \ref{t:main}), with $q$ workers and sketch size $m=\tilde O(\alpha\,d)$. Note that both the references \cite{mahoney2018giant,determinantal-averaging} state their results for uniform sampling sketches.  This can be easily adapted to leverage score sampling, in which case each sketch costs $\tilde O(\nnz(\A) + \alpha d^3)$ to construct. }
    \vspace{-3mm}
    \label{tab:convergence}
\end{table}

\subsection{Convergence Guarantees for Distributed Newton Method}
\label{sec:intro-theory}

We use the debiasing technique introduced in Section \ref{sec:intro-debiasing} to obtain the main technical result of this paper, which gives a convergence and time complexity guarantee for distributed Newton's method with surrogate sketching. 
Once again, for concreteness, we present the result here for the regularized least squares problem \eqref{eq:least-squares}, but a general version for convex losses is given in Section \ref{sec:convergence_analysis} (see Theorem \ref{t:newton}). Our goal is to perform a distributed and sketched version of the classical Newton step: $\x_{t+1}=\x_t-\H^{-1}\g(\x_t)$, where $\H=\A^\top\A+\lambda\I$ is the Hessian of the quadratic loss, and $\g(\x_t)=\A^\top(\A\x_t-\b) + \lambda\x_t$ is the gradient.
To efficiently approximate this step, while avoiding the $O(nd^2)$ cost of computing the exact Hessian, we use a distributed version of the so-called Iterative Hessian Sketch (IHS), which replaces the Hessian with a sketched version $\Hbh$, but keeps the exact gradient, resulting in the update direction $\Hbh^{-1}\g(\x_t)$ \cite{PilWai14b,ozaslan2019regularized,lacotte2020optimal,lacotte2019faster}. 
Our goal is that $\Hbh$ should be cheap to construct and it should lead to an unbiased estimate of the exact Newton step $\H^{-1}\g$. When the matrix $\A$ is sparse, it is desirable for the algorithm to run in time that depends on the input sparsity, i.e., the number of non-zeros denoted $\nnz(\A)$.
\begin{theorem}\label{t:main}
Let $\kappa$ denote the condition number of the Hessian $\H$, let $\x_0$ be the initial parameter vector and take any $\alpha\geq d$. There is an algorithm which returns a Hessian sketch $\Hbh$ in time $O(\nnz(\A)\log(n) + \alpha d^3\,\polylog(n,\kappa,1/\delta))$, such that if $\Hbh_1,...,\Hbh_q$ are i.i.d.~copies of $\Hbh$ then,
\begin{align*}
    \text{Distributed IHS:}\qquad \x_{t+1} = \x_t - \frac1q\sum_{k=1}^q\Hbh_k^{-1}\g(\x_t), 
\end{align*}
with probability $1-\delta$ enjoys a linear convergence rate given as follows:
\begin{align*}
    \|\x_t-\x^*\|^2 \leq \rho^t\kappa\,\|\x_0-\x^*\|^2,\qquad\text{where}\quad\rho=\frac1{\alpha q}.
\end{align*}
\end{theorem}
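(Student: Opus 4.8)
The plan is to reduce the entire convergence analysis to a single spectral-norm bound on an averaged operator, exploiting the \emph{exact} unbiasedness that the surrogate sketch provides. For the quadratic objective \eqref{eq:least-squares} the gradient satisfies $\g(\x_t)=\H(\x_t-\x^*)$, and the exact Newton step reaches the optimum in one iteration. Hence the distributed update rewrites cleanly as $\x_{t+1}-\x^*=\Q_t(\x_t-\x^*)$ with $\Q_t=\I-\frac1q\sum_{k=1}^q\Hbh_k^{-1}\H$, and everything reduces to controlling the spectral norm of the symmetrized operator $\H^{1/2}\Q_t\H^{-1/2}=\I-\frac1q\sum_k\H^{1/2}\Hbh_k^{-1}\H^{1/2}$.

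First I would invoke the unbiasedness result (Theorem~\ref{t:intro-unbiased}, in the Newton-step form of Theorem~\ref{t:least-squares}): with the scaled regularizer $\lambda^{\prime}=\lambda(1-d_\lambda/m)$, each surrogate-sketched inverse Hessian satisfies $\E[\Hbh_k^{-1}]=\H^{-1}$, so the centered matrices $\Y_k=\H^{1/2}\Hbh_k^{-1}\H^{1/2}-\I$ are i.i.d.\ and mean-zero. This is precisely where the gain over determinantal averaging originates: because the bias is exactly zero rather than merely small, plain uniform averaging suffices and the leading $d$ factor in the prior rate disappears.

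Next I would establish two second-order estimates for a single size-$m$ surrogate sketch with $m=\tilde O(\alpha d)$: (i) a spectral-embedding bound $\Hbh\succeq c\,\H$ holding with high probability, which keeps $\|\H^{1/2}\Hbh_k^{-1}\H^{1/2}\|$ bounded and hence controls $\|\Y_k\|$; and (ii) a second-moment bound $\|\E[\Y_k^2]\|=O(d/m)=O(1/\alpha)$, computed from the exact DPP formulas underlying the surrogate sketch. Feeding these into a matrix Bernstein inequality for the mean-zero average $\frac1q\sum_k\Y_k$ yields $\|\frac1q\sum_k\Y_k\|\le\sqrt{\rho}$ with $\rho=O(1/(\alpha q))$, up to the polylogarithmic factors that appear in the stated running time. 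Consequently $\|\H^{1/2}\Q_t\H^{-1/2}\|^2\le\rho$, giving the one-step contraction $\|\x_{t+1}-\x^*\|_\H^2\le\rho\,\|\x_t-\x^*\|_\H^2$.

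Finally I would take a union bound over the $T$ iterations so that this per-step contraction holds simultaneously for all $t$ (absorbing $\log(T/\delta)$ into the polylog and the failure probability $\delta$), iterate to obtain $\|\x_T-\x^*\|_\H^2\le\rho^T\|\x_0-\x^*\|_\H^2$, and convert from the $\H$-norm to the Euclidean norm, which introduces the condition-number factor $\kappa$ and yields the claimed $\|\x_T-\x^*\|^2\le\rho^T\kappa\|\x_0-\x^*\|^2$; the running-time claim then follows by plugging in the near-linear-time DPP sampler realizing the surrogate sketch. I expect the main obstacle to be steps (i)--(ii): proving the DPP second-moment bound $\|\E[\Y_k^2]\|=O(d/m)$ together with a high-probability spectral-embedding guarantee strong enough to make the unbounded factor $\Hbh_k^{-1}$ admissible in matrix Bernstein. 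This is the crux, since it is exactly the pairing of \emph{exact} unbiasedness with an $O(d/m)$ variance that produces the $1/(\alpha q)$ rate in place of the $d/(\alpha q)$ of prior work.
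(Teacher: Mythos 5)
Your proposal follows the paper's proof in its essential skeleton: the reduction to the spectral norm of the centered averaged operator is exactly the paper's inequality \eqref{eq:dist_next_iter}, the exact unbiasedness $\E[\Hbh_k^{-1}]=\H^{-1}$ (Theorem \ref{t:hessian}) is used in the same way to eliminate the bias term, the contraction is run in the $\H$-norm and converted to the Euclidean norm at a cost of $\kappa$, and the runtime comes from the DPP sampler of Theorem \ref{t:fast-surrogate}. (One small simplification you miss: since the objective is quadratic, $\H$ is fixed and the same sketches serve every iteration, so a single concentration event covers all $t$ and no union bound over iterations is needed.)

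The one place you genuinely diverge is the concentration step, and there your specific plan has a gap. You propose to \emph{compute} $\|\E[\Y_k^2]\|=O(d/m)$ from ``the exact DPP formulas'' and feed it into matrix Bernstein. But the exact determinantal identities available (Lemma \ref{l:least-squares}) give only \emph{first} moments of inverse-type expressions; there is no closed form for second moments of $\Hbh_k^{-1}$ under a determinantal design, so step (ii) cannot be computed exactly, only bounded indirectly. Moreover, vanilla matrix Bernstein needs bounded summands, and the only almost-sure bound is $\|\Y_k\|\leq\kappa$; conditioning on your spectral-embedding event (i) to truncate perturbs the means of the $\Y_k$ by up to order $\delta\kappa$, which must be handled explicitly. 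The paper's route is precisely the rigorous version of what you sketch: it proves the high-probability embedding $\|\Z-\I\|\leq\eta$ for a single surrogate sketch (Lemma \ref{l:subspace_embedding}, which requires matrix Chernoff for the i.i.d.~rows \emph{and} the Strongly Rayleigh concentration of Theorem \ref{t:kyng} for the DPP rows, via symmetric homogenization --- this is where the real technical work lives, and your proposal leaves (i) unproved), converts that event plus the almost-sure bound $\kappa$ into $p$-th moment bounds on $\|\Z^{-1}-\E[\Z^{-1}]\|$ (Lemma \ref{l:moments}), and then applies a Rosenthal/Khintchine-type inequality with symmetrization and Markov to the average (Lemmas \ref{l:rosenthal} and \ref{l:matrix_concent2}); the $\log\kappa$ this forces is exactly why $\kappa$ appears in the polylog of the sketch size. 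So your plan is repairable --- the variance bound you want follows from your own step (i) rather than from exact formulas --- but as stated, steps (i)--(ii), which you correctly identify as the crux, are asserted rather than proved, and the paper's moment-based argument is the missing content.
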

\begin{remark}
    To reach $\|\x_t-\x^*\|^2\leq\epsilon\cdot\|\x_0-\x^*\|^2$ we need $t\leq\frac{\log(\kappa/\epsilon)}{\log(\alpha q)}$ iterations. See Theorem \ref{t:newton} in Section \ref{sec:convergence_analysis} for a general result on convex losses of the form $f(\x) = \frac1n\sum_{i=1}^n\ell_i(\x^\top\varphi_i) + \frac\lambda2\|\x\|^2$.
\end{remark}
Crucially, the linear convergence rate $\rho$ decays to zero as $q$ goes to infinity, which is possible because the local estimates of the Newton step produced by the surrogate sketch are unbiased. Just like commonly used sketching techniques, our surrogate sketch can be interpreted as replacing the matrix $\A$ with a smaller matrix $\S\A$, where $\S$ is a $m\times n$ sketching matrix, with $m=\tilde O(\alpha d)$ denoting the sketch size. Unlike the Gaussian and Rademacher sketches, the sketch we use is very sparse, since it is designed to only sample and rescale a subset of rows from $\A$, which makes the multiplication very fast. 
Our surrogate sketch has two components: (1) standard i.i.d.~row sampling according to the so-called $\lambda$-ridge leverage scores \cite{drineas2006sampling,ridge-leverage-scores}; and (2) non-i.i.d.~row sampling according to a determinantal point process (DPP) \cite{dpp-ml}.  While leverage score sampling has been used extensively as a sketching technique for second order methods, it typically leads to biased estimates, so combining it with a DPP is crucial to obtain strong convergence guarantees in the distributed setting. The primary computational costs in constructing the sketch come from estimating the leverage scores and sampling from the DPP.

\subsection{Related Work}\label{sec:related-work}

While there is extensive literature on distributed second order methods, it is useful to first compare to the most directly related approaches. In Table \ref{tab:convergence}, we contrast Theorem \ref{t:main} with two other results which also analyze variants of the Distributed IHS, with all sketch sizes fixed to $m=\tilde O(\alpha d)$. The algorithm of \cite{mahoney2018giant} simply uses an i.i.d.~row sampling sketch to approximate the Hessian, and then uniformly averages the estimates. This leads to a bias term $\frac1{\alpha^2}$ in the convergence rate, which can only be reduced by increasing the sketch size. In \cite{determinantal-averaging}, this is avoided by performing weighted averaging, instead of uniform, so that the rate decays to zero with increasing $q$. Similarly as in our work, determinants play a crucial role in correcting the bias, however with significantly different trade-offs. While they avoid having to alter the sketching method, the weighted average introduces a significant amount of variance, which manifests itself through the additional factor $d$ in the term $\frac{d}{\alpha q}$. Our surrogate sketch avoids the additional variance factor while maintaining the scalability in $q$. The only trade-off is that the time complexity of the surrogate sketch has a slightly worse polynomial dependence on $d$, and as a result we require the sketch size to be at least $\tilde O(d^2)$, i.e., that $\alpha\geq d$. Finally, unlike the other approaches, our method uses a scaled regularization parameter to debias the Newton estimates.

Distributed second order optimization has been considered by many other works in the literature and many methods have been proposed such as DANE \cite{dane}, AIDE \cite{aide}, DiSCO \cite{disco}, and others \cite{MokhtariLR17,BajovicJKJ17}. Distributed averaging has been discussed in the context of linear regression problems in works such as \cite{bartan2020distributed_privacy} and studied for ridge regression in \cite{sketched-ridge-regression}. However, unlike our approach, all of these methods suffer from biased local estimates for regularized problems. Our work deals with distributed versions of iterative Hessian sketch and Newton sketch and convergence guarantees for non-distributed version are given in \cite{PilWai14b} and \cite{pilanci2017newton}. Sketching for constrained and regularized convex programs and minimax optimality has been studied in \cite{PilWai14a,yang2017randomized,sridhar2020lower}. Optimal iterative sketching algorithms for least squares problems were investigated in \cite{lacotte2020limiting,lacotte2020optimal,lacotte2019faster,lacotte2020effective,lacotte2019high}.
Bias in distributed averaging has been recently considered in \cite{bartan2020distributed}, which provides expressions for regularization parameters for Gaussian sketches. The theoretical analysis of \cite{bartan2020distributed} assumes identical singular values for the data matrix whereas our results make no such assumption. Finally, our analysis of surrogate sketches builds upon a recent line of works which derive expectation formulas for determinantal point processes in the context of least squares regression \cite{unbiased-estimates-journal,correcting-bias,minimax-experimental-design,surrogate-design}.

\section{Surrogate Sketches}
\label{s:surrogate}

In this section, to motivate our surrogate sketches, we consider several standard sketching techniques and discuss their shortcomings. 
Our purpose in introducing surrogate sketches is to enable exact analysis of the sketching bias in second order optimization, thereby permitting us to find the optimal hyper-parameters for distributed averaging.


Given an $n\times d$ data matrix $\A$, we define a standard
sketch of $\A$ as the matrix $\S\A$, where $\S\sim \Sc_\mu^m$ is a random
$m\times n$ matrix with $m$ i.i.d.~rows distributed according to measure $\mu$
with identity covariance, rescaled so that $\E[\S^\top\S]=\I$. This
includes such standard sketches as:
\begin{enumerate}
  \item \emph{Gaussian sketch:} each row of $\S$ is distributed as
    $\Nc(\zero,\frac1m\I)$.
    \item \emph{Rademacher sketch:} each entry of $\S$ is $\frac1{\sqrt m}$ with probability
      $1/2$ and $-\frac1{\sqrt m}$ otherwise.
  \item \emph{Row sampling:} each row of $\S$ is
    $\frac1{\sqrt{p_im}}\,\e_i$, where $\Pr\{i\}=p_i$ and $\sum_ip_i=1$.
  \end{enumerate}
Here, the row sampling sketch can be uniform (which is common in practice), and it also includes row norm squared sampling and leverage score sampling (which leads to better results), where the distribution $p_i$ depends on the data matrix $\A$. 

Standard sketches are generally chosen so that the sketched covariance matrix $\A^\top\S^\top\S\A$ is an unbiased estimator of the full data covariance matrix, $\A^\top\A$. 
This is ensured by the fact that $\E[\S^\top\S]=\I$. 
However, in certain applications, it is not the data covariance matrix itself that is of primary interest, but rather its inverse. 
In this case, standard sketching techniques no longer yield unbiased estimators. 
Our surrogate sketches aim to correct this bias, so that, for example, we can construct an unbiased estimator for the regularized inverse covariance matrix, $(\A^\top\A+\lambda\I)^{-1}$ (given some $\lambda>0$). 
This is important for regularized least squares and second order optimization. 

We now give the definition of a surrogate sketch.  
Consider some $n$-variate measure $\mu$, and let $\X\sim\mu^m$ be the i.i.d.~random design of size $m$ for $\mu$, i.e., an $m\times n$ random matrix with i.i.d.~rows drawn from $\mu$. 
Without loss of generality, assume that $\mu$ has identity covariance, so that $\E[\X^\top\X]=m\I$. In particular, this implies that $\frac1{\sqrt m}\X\sim\Sc_\mu^m$ is a random sketching matrix.

Before we introduce the surrogate sketch, we define a so-called \emph{determinantal design} (an extension of the definitions proposed by \cite{surrogate-design,correcting-bias}), which uses determinantal rescaling to transform the distribution of $\X$ into a non-i.i.d.~random matrix $\Xb$. The transformation is parameterized by the matrix $\A$, the regularization parameter $\lambda>0$ and a parameter $\gamma>0$ which controls the size of the matrix $\Xb$.
\begin{definition}\label{d:det}
Given scalars $\lambda,\gamma> 0$ and a matrix $\A\in\R^{n\times d}$, we define the determinantal design $\Xb\sim\Det_\mu^\gamma(\A,\lambda)$ as a random matrix with randomized row-size, so that
\begin{align*}
  \Pr\big\{\Xb\in E\big\} \propto
  \E\Big[\det(\A^\top\X^\top\X\A+\lambda\gamma\I)\cdot\one_{[\X\in E]}\Big],\quad
  \text{where}\quad \X\sim \mu^{M},\quad M\sim\mathrm{Poisson}(\gamma).
\end{align*}
\end{definition}
We next give the key properties of determinantal designs that make them useful for
sketching and second-order optimization. The following lemma is an
extension of the results shown for determinantal point processes by \cite{surrogate-design}. 
\begin{lemma}\label{l:least-squares}
 Let $\Xb\sim\Det_\mu^\gamma(\A,\lambda)$. Then, we have:
  \begin{align*}
   \E\Big[\big(\A^\top\Xb^\top\Xb\A+\lambda\gamma\I\big)^{-1}\A^\top\Xb^\top\Xb\Big]
    &= \big(\A^\top\A+\lambda\I\big)^{-1}\A^\top,
    \\
   \E\Big[\big(\A^\top\Xb^\top\Xb\A+\lambda\gamma\I\big)^{-1}\Big]
    &= \gamma^{-1}\big(\A^\top\A+\lambda\I\big)^{-1}.
  \end{align*}
\end{lemma}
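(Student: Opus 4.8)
The plan is to reduce both identities to the evaluation of a few expected determinants/adjugates under the determinantal reweighting, and then to produce all of them from a single ``master'' expected-determinant formula.

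\textbf{Reduction.} By Definition~\ref{d:det}, the law of $\Xb$ is the law of $\X\sim\mu^{M}$ (with $M\sim\Poisson(\gamma)$) reweighted by $\det(\A^\top\X^\top\X\A+\lambda\gamma\I)$ and renormalized. Hence, writing $\M:=\A^\top\X^\top\X\A+\lambda\gamma\I$ and $Z:=\E_\X[\det(\M)]$, for any integrable statistic $g$ we have $\E[g(\Xb)]=\E_\X[\det(\M)\,g(\X)]/Z$. Taking $g(\X)=\M^{-1}$ and $g(\X)=\M^{-1}\A^\top\X^\top\X$, and using $\det(\M)\,\M^{-1}=\adj(\M)$, the two claims become the closed forms $\E_\X[\adj(\M)]=\gamma^{-1}Z\,(\A^\top\A+\lambda\I)^{-1}$ and $\E_\X[\adj(\M)\,\A^\top\X^\top\X]=Z\,(\A^\top\A+\lambda\I)^{-1}\A^\top$. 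I would first record that $Z<\infty$ (so the reweighting is a genuine probability measure), which is immediate since $\det(\M)$ is bounded by a polynomial in the sampled rows with finite Poisson expectation.

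\textbf{The master formula.} The technical core is the identity $\E_\X[\det(\A_L^\top\X^\top\X\A_R+\C)]=\det(\C+\gamma\,\A_L^\top\A_R)$, valid for arbitrary $n\times d'$ matrices $\A_L,\A_R$ and invertible $\C\in\R^{d'\times d'}$ (and then all $\C$ by continuity). I would prove it in four moves: (i) write $\X^\top\X=\sum_{j=1}^{M}\x_j\x_j^\top$, so the argument is $\C$ plus a sum of $M$ rank-one terms $(\A_L^\top\x_j)(\A_R^\top\x_j)^\top$; (ii) apply the matrix determinant lemma and expand $\det(\I+\cdot)$ as a sum of principal minors (Cauchy--Binet), giving $\det(\C)\sum_{T}\det((\cdot)_{T,T})$ over subsets $T$ of sampled rows; (iii) average over the Poisson design via the factorial-moment identity $\E[\sum_{|T|=k}h(\x_T)]=\tfrac{\gamma^k}{k!}\E[h(\x_1,\dots,\x_k)]$ to collapse the subset sum; and (iv) use isotropy $\E[\x\x^\top]=\I$ through the expected generalized-variance identity $\E[\det(\mathbf X_k K\mathbf X_k^\top)]=k!\,e_k(K)$ (with $\mathbf X_k$ a $k\times n$ matrix of i.i.d.\ rows from $\mu$ and $K=\A_R\C^{-1}\A_L^\top$), so the $k$-sum resums to $\det(\C)\det(\I+\gamma K)$; a Sylvester determinant swap then gives $\det(\C+\gamma\A_L^\top\A_R)$. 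Specializing to $\A_L=\A_R=\A$, $\C=\lambda\gamma\I$ yields $Z=\gamma^{d}\det(\A^\top\A+\lambda\I)$.

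\textbf{Extracting the two averages.} Given the master formula, the second identity follows by differentiating $\C\mapsto\det(\C+\gamma\A^\top\A)$ at $\C=\lambda\gamma\I$: since $\partial_\C\det(\M)=\adj(\M)$, this gives $\E_\X[\adj(\M)]=\adj(\gamma(\A^\top\A+\lambda\I))=\gamma^{d-1}\adj(\A^\top\A+\lambda\I)$, and dividing by $Z$ leaves $\gamma^{-1}(\A^\top\A+\lambda\I)^{-1}$. For the first identity I would instead feed a rank-one augmentation into the master formula with $d'=d+1$, $\A_L=[\A\mid\zero]$, $\A_R=[\A\mid-\b]$, and the non-symmetric constant $\C=\left(\begin{smallmatrix}\lambda\gamma\I & \zero\\ \s^\top & \theta\end{smallmatrix}\right)$, carrying a free test vector $\s\in\R^{d}$ and scalar $\theta$. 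The left-hand determinant cofactor-expands as $\theta\det(\M)+\s^\top\adj(\M)\,\A^\top\X^\top\X\b$, while the right-hand side evaluates by a Schur complement to $\gamma^{d}\det(\A^\top\A+\lambda\I)\,(\theta+\s^\top(\A^\top\A+\lambda\I)^{-1}\A^\top\b)$; matching the $\s$-linear parts (the $\theta$ terms cancel) yields $\E_\X[\adj(\M)\,\A^\top\X^\top\X\b]=\gamma^{d}\det(\A^\top\A+\lambda\I)\,(\A^\top\A+\lambda\I)^{-1}\A^\top\b$ for every $\b$, and dividing by $Z$ and ranging over $\b$ gives the first identity as an equality of $d\times n$ matrices.

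\textbf{Main obstacle.} The crux is steps (ii)--(iv) of the master formula: aligning the Cauchy--Binet principal-minor expansion with the Poisson factorial moments, and pinning the expected-volume constant to exactly $k!\,e_k(K)$ for a possibly non-symmetric $K$. I would justify the latter from the permutation expansion of the determinant, where independence and isotropy annihilate all but matched index pairs, leaving precisely $k!$ times the sum of $k\times k$ principal minors of $K$. Once the two-sided, non-symmetric-$\C$ version is in hand, the augmentation producing the first identity is purely formal; the only subtlety to watch is that $\C$ there is not symmetric, so I would confirm the master-formula derivation used only invertibility of $\C$ (and continuity to drop it), never symmetry.
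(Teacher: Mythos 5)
Your proposal is correct, and it takes a genuinely more self-contained route than the paper. The paper does not re-derive the underlying expected-determinant facts: it imports the \emph{determinant preserving} (d.p.) framework of \cite{surrogate-design} (closure under addition, and the fact that $\A^\top\X^\top\X\A$-type matrices under Poisson row-size are d.p.), which immediately gives $\E[\det(\M)]=\det(\E[\M])$ for the normalization, $\E[\adj(\M)]=\adj(\E[\M])$ for the second identity, and then obtains the first identity coordinate-wise from the rank-one determinant update $\e_i^\top\adj(\M)\,\u=\det(\M+\u\e_i^\top)-\det(\M)$ applied to $\u=\A^\top\X^\top\X\b$, using that the perturbed matrix $\A^\top\X^\top\X(\A+\b\e_i^\top)+\lambda\gamma\I$ is also d.p. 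Your master formula $\E_\X[\det(\A_L^\top\X^\top\X\A_R+\C)]=\det(\C+\gamma\A_L^\top\A_R)$ is precisely the d.p. statement in the needed two-sided generality, and your Sylvester/Cauchy--Binet expansion, Poisson factorial moments $\E\big[\binom{M}{k}\big]=\gamma^k/k!$, and the isotropy computation $\E[\det(\mathbf X_k K\mathbf X_k^\top)]=k!\,e_k(K)$ are a valid from-scratch proof of it (the subset sum truncates at $k\le d'$ since $\rank(\A_R\C^{-1}\A_L^\top)\le d'$, which also settles Fubini given finite moments of $\mu$ of order $2d'$). Your two extraction steps then mirror the paper's structure in disguise: the block-augmented determinant $\det\bigl(\begin{smallmatrix}\M & -\u\\ \s^\top & \theta\end{smallmatrix}\bigr)=\theta\det(\M)+\s^\top\adj(\M)\u$ is the same cofactor identity the paper uses, just packaged as a $(d+1)$-dimensional instance of the master formula, while your Jacobi-differentiation derivation of the second identity replaces the paper's direct appeal to $\E[\adj]=\adj[\E]$ (watch the transpose in $\partial_\C\det=\adj^\top$, which is harmless here since you evaluate at symmetric points; alternatively, match coefficients of the two polynomials in $\C$ and avoid differentiation under the expectation entirely). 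What your route buys is a proof that does not lean on the cited framework and makes explicit exactly where Poisson randomization and isotropy enter; what the paper's route buys is brevity and reuse of established lemmas.
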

The row-size of $\Xb$, denoted by $\#(\Xb)$, is a random
variable, and this variable is \emph{not} distributed according to
$\mathrm{Poisson}(\gamma)$, even though $\gamma$ can be used to control its expectation. 
As a result of the determinantal rescaling, the distribution of $\#(\Xb)$ is shifted towards 
larger values relative to $\Poisson(\gamma)$, so that its expectation becomes:
\begin{align*}
  \E\big[\#(\Xb)\big] = \gamma + d_\lambda,\quad\text{where}\quad d_\lambda=\tr(\A^\top\A (\A^\top\A+\lambda\I)^{-1}).
\end{align*}
We can now define the \emph{surrogate sketching matrix} $\bar\S$ by rescaling the matrix $\Xb$, similarly to how we defined the standard sketching matrix $\S=\frac1{\sqrt m}\X$ for $\X\sim\mu^m$.
\begin{definition}\label{d:surrogate}
Let $m>d_\lambda$. Moreover, let $\gamma>0$ be
  the unique positive scalar for which $\E[\#(\Xb)]=m$, where
  $\Xb\sim\Det_\mu^{\gamma}(\A,\lambda)$. Then, $\bar\S=\frac1{\sqrt
    m}\Xb\sim\bar\Sc_\mu^m(\A,\lambda)$ is a \emph{surrogate sketching matrix}
  for $\Sc_\mu^m$.
\end{definition}
Note that many different surrogate sketches can be defined for a single sketching distribution $\Sc_\mu^m$, depending on the choice of $\A$ and $\lambda$. 
In particular, this means that a surrogate sketching distribution (even when the pre-surrogate i.i.d. distribution is Gaussian or the uniform distribution) always depends on the data matrix $\A$, whereas many standard sketches (such as Gaussian and uniform) are oblivious to the data matrix. 

Of particular interest to us is the class of surrogate \emph{row sampling} sketches, i.e. where the probability measure $\mu$ is defined by $\mu\big(\{\frac1{\sqrt{p_i}}\e_i^\top\}\big) = p_i$ for $\sum_{i=1}^n p_i=1$. In this case, we can straightforwardly leverage the algorithmic results on sampling from determinantal point processes \cite{dpp-intermediate,dpp-sublinear} to obtain efficient algorithms for constructing surrogate sketches.
\begin{theorem}\label{t:fast-surrogate}
    Given any $n\times d$ matrix $\A$, $\lambda>0$ and $(p_1,...,p_n)$, we can construct the surrogate row sampling sketch with respect to $p$ (of any size $m\leq n$) in time $O(\nnz(\A)\log(n) + d^4\log(d))$.
\end{theorem}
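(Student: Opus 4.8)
The plan is to specialize the determinantal design of Definition~\ref{d:det} to the row-sampling measure $\mu$ and then reduce the sampling task to a determinantal point process on the rows of $\A$, for which the fast samplers of \cite{dpp-intermediate,dpp-sublinear} apply. First I would exploit Poisson thinning: when $\X\sim\mu^M$ with $M\sim\Poisson(\gamma)$ and $\mu$ draws the rescaled row $\frac1{\sqrt{p_i}}\e_i^\top$ with probability $p_i$, the number of times index $i$ is drawn is an independent $\Poisson(\gamma p_i)$ variable, so $\X^\top\X$ is diagonal and $\A^\top\X^\top\X\A=\sum_j\v_j\v_j^\top$, where the $\v_j$ range over the sampled rows $\frac1{\sqrt{p_{i_j}}}\a_{i_j}$. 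The next step is to expand the rescaling determinant via the characteristic-polynomial/Cauchy--Binet identity
\[
\det\Big(\lambda\gamma\I+\sum_j\v_j\v_j^\top\Big)=\sum_{T:\,|T|\le d}(\lambda\gamma)^{d-|T|}\det\big(\{\v_j\}_{j\in T}\big)^2,
\]
which exhibits the design as a mixture over a volume-weighted ``core'' subset $T$ of the Poisson sample together with the remaining ``background'' points.

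The key structural claim I would establish is that this volume-rescaled Poisson sample factorizes into two independent pieces: a core subset $T$ distributed as the $\lambda$-ridge-leverage DPP on the rows of $\A$ (the $L$-ensemble with $L=\frac1\lambda\A\A^\top$), whose inclusion marginals are the ridge leverage scores $\a_i^\top(\A^\top\A+\lambda\I)^{-1}\a_i$ summing to $d_\lambda$; and an independent ordinary Poisson background of expected size $\gamma$, together giving $\E[\#(\Xb)]=\gamma+d_\lambda$. Two points need care. Any $T$ that repeats an index has zero squared volume, so the core consists of distinct rows and is a genuine subset rather than a multiset. And a short computation shows that the Poisson intensity $\gamma p_i$ attached to a core index exactly cancels the $1/p_i$ in its squared volume, while the powers of $\gamma$ cancel against $(\lambda\gamma)^{d-|T|}$; hence the core law is $\propto\lambda^{-|T|}\det(\A_T\A_T^\top)$, independent of both $p$ and $\gamma$. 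Only the background retains the dependence on $p$ (and its size on $\gamma$).

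With the decomposition in hand the algorithm splits accordingly. The background is ordinary i.i.d.\ row sampling from $p$ and costs no more than the other terms. Sampling the core is the crux: I would compute constant-factor approximate $\lambda$-ridge leverage scores in $O(\nnz(\A)\log n+\poly(d))$ time, use their sum to set $\gamma\approx m-d_\lambda$ and the scores themselves to down-sample to $\tilde O(d^2)$ candidate rows, and then run an exact $L$-ensemble DPP sampler on the reduced $\tilde O(d^2)\times d$ matrix, at cost $O(d^4\log d)$ for forming the reduced kernel and its spectral decomposition. The distortion-free rejection step of \cite{dpp-intermediate,dpp-sublinear} ensures that using \emph{approximate} leverage scores affects only the running time and not the output law, so the core---and hence $\bar\S=\frac1{\sqrt m}\Xb$---is sampled exactly, within the stated $O(\nnz(\A)\log n+d^4\log d)$.

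The main obstacle is the factorization claimed in the second paragraph: proving, directly from the Poisson-reweighted Definition~\ref{d:det}, that volume-rescaled Poisson sampling splits into an independent ridge-leverage DPP core and a Poisson background, and identifying the $p$- and $\gamma$-independent core kernel. I would obtain this through a Laplace-functional / generating-function computation over the Poisson process, in the same spirit as the expectation identities behind Lemma~\ref{l:least-squares} and the analyses of \cite{surrogate-design,correcting-bias}. Once the decomposition is in place, verifying that the core matches the hypotheses of the samplers in \cite{dpp-intermediate,dpp-sublinear} and accounting for the determination of $\gamma$ through an approximation of $d_\lambda$ are comparatively routine.
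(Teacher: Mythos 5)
Your proposal is correct and takes essentially the same route as the paper: your factorization of the determinantal design into a $\DPP(\frac1\lambda\A\A^\top)$ core plus an independent $\Poisson(\gamma)$ i.i.d.\ background is exactly the paper's Lemma~\ref{l:composition} (proved there by the same Cauchy--Binet expansion and cancellation of the $p$- and $\gamma$-dependence that you outline), and your approximate-leverage-score, down-sample, and rejection-based exact sampler for the core is precisely the paper's Algorithm~\ref{alg:dpp} with Lemma~\ref{l:alg}, imported from \cite{dpp-intermediate,dpp-sublinear,alpha-dpp}. The only cosmetic difference is that you present the decomposition via Poisson thinning and a generating-function argument, while the paper verifies it by a direct finite computation over subsets of the Poisson sample; the content is the same.
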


\section{Unbiased Estimates for the Newton Step}
\label{sec:unbiased}

Consider a convex minimization problem defined by the following loss function:
\begin{align*}
    f(\x) = \frac1n\sum_{i=1}^n\ell_i(\x^\top\varphi_i) + \frac{\lambda}{2}\|\x\|^2,
\end{align*}
where each $\ell_i$ is a twice differentiable convex function and $\varphi_1,...,\varphi_n$ are the input feature vectors in $\R^d$. 
For example, if $\ell_i(z)=\frac12(z-b_j)^2$, then we recover the regularized least squares task; and if $\ell_i(z)=\log(1+\ee^{-z b_j})$, then we recover logistic regression. 
The Newton's update for this minimization task can be written as follows:
\begin{align*}
  \x_{t+1} &= \x_t -\Big(\overbrace{\frac1n\sum_i \ell_i''(\x_t^\top\varphi_i)\,
  \varphi_i\varphi_i^\top +\lambda\I}^{\text{Hessian }\H(\x_t)}\Big)^{-1}
\Big(\overbrace{\frac1n\sum_i \ell_i'(\x^\top\varphi_i)\,\varphi_i\ +\
        \lambda\x_t}^{\text{gradient }\g(\x_t)}\Big).
\end{align*}
Newton's method can be interpreted as solving a regularized least squares problem which is the local approximation of $f$ at the current iterate $\x_t$. Thus, with the appropriate choice of matrix $\A_t$ (consisting of scaled row vectors $\varphi_i^\top$) and vector $\b_t$, the Hessian and gradient can be written as: $\H(\x_t) = \A_t^\top\A_t+\lambda\I$ and $\g(\x_t) = \A_t^\top\b_t + \lambda\x_t$. We now consider two general strategies for sketching the Newton step, both of which we discussed in Section \ref{sec:intro} for regularized least squares.

\subsection{Sketch-and-Solve}

We first analyze the classic sketch-and-solve paradigm which has been popularized in the context of least squares, but also applies directly to the Newton's method. This approach involves constructing sketched versions of both the Hessian and the gradient, by sketching with a random matrix $\S$. Crucially, we modify this classic technique by allowing the regularization parameter to be different than in the global problem, obtaining the following sketched version of the Newton step:
\begin{align*}
    \xbh_{\mathrm{SaS}} =\x_t - \Hbt_t^{-1}\tilde\g_t,\quad\text{for}\quad \Hbt_t=\A_t^\top\S^\top\S\A_t+\lambda^{\prime}\I,
    \quad \tilde\g_t = \A_t^\top\S^\top\S\b_t + \lambda^{\prime}\x_t.
\end{align*}

Our goal is to obtain an unbiased estimate of the full Newton step, i.e., such that $\E[\xbh_{\mathrm{SaS}}]=\x_{t+1}$, by combining a surrogate sketch with an appropriately scaled regularization $\lambda^{\prime}$. 

We now establish the correct choice of surrogate sketch and scaled regularization to achieve unbiasedness.
The following result is a more formal and generalized version of Theorem \ref{t:intro-unbiased}. 
We let $\mu$ be any distribution that satisfies the assumptions of Definition~\ref{d:surrogate}, so that
$\Sc_\mu^m$ corresponds to any one of the standard sketches discussed in Section \ref{s:surrogate}.
\begin{theorem}\label{t:least-squares}
  If $\xbh_{\mathrm{SaS}}$ is constructed using a surrogate sketch
  $\S\sim\bar\Sc_\mu^m(\A_t,\lambda)$ of size
  $m> d_{\lambda}$, then:
  \begin{align*}
      \E[\xbh_{\mathrm{SaS}}] = \x_{t+1}\quad\text{for}\quad\lambda^{\prime} = \lambda\cdot\Big(1 - \frac{d_{\lambda}}{m}\Big).
  \end{align*}
\end{theorem}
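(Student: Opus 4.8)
The plan is to reduce the claim directly to the two expectation identities of Lemma~\ref{l:least-squares}, by rewriting the surrogate sketch in terms of its underlying determinantal design and then choosing $\lambda^{\prime}$ to make the sketched Hessian match the normalization appearing in that lemma. By Definition~\ref{d:surrogate}, the surrogate sketch is $\S = \frac1{\sqrt m}\Xb$ with $\Xb \sim \Det_\mu^\gamma(\A_t,\lambda)$, where $\gamma$ is the unique positive scalar for which $\E[\#(\Xb)] = m$. Since $\E[\#(\Xb)] = \gamma + d_\lambda$, this pins down $\gamma = m - d_\lambda$, which is positive exactly because $m > d_\lambda$. Using $\S^\top\S = \frac1m\Xb^\top\Xb$, I would record the sketched Hessian and gradient as
\[\Hbt_t = \frac1m\big(\A_t^\top\Xb^\top\Xb\A_t + m\lambda^{\prime}\I\big),\qquad \tilde\g_t = \frac1m\A_t^\top\Xb^\top\Xb\b_t + \lambda^{\prime}\x_t.\]

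The crucial step is to select $\lambda^{\prime}$ so that the ridge term inside $\Hbt_t$ coincides with the determinantal normalization $\lambda\gamma\I$ that appears throughout Lemma~\ref{l:least-squares}; that is, $m\lambda^{\prime} = \lambda\gamma = \lambda(m - d_\lambda)$, which is precisely $\lambda^{\prime} = \lambda(1 - d_\lambda/m)$. With this identification, $\Hbt_t^{-1} = m(\A_t^\top\Xb^\top\Xb\A_t + \lambda\gamma\I)^{-1}$, and I would split the expected step into the two pieces that the lemma governs:
\[\E[\Hbt_t^{-1}\tilde\g_t] = \E\big[(\A_t^\top\Xb^\top\Xb\A_t + \lambda\gamma\I)^{-1}\A_t^\top\Xb^\top\Xb\big]\b_t + m\lambda^{\prime}\,\E\big[(\A_t^\top\Xb^\top\Xb\A_t + \lambda\gamma\I)^{-1}\big]\x_t.\]

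Applying the first identity of Lemma~\ref{l:least-squares} collapses the leading factor to $(\A_t^\top\A_t + \lambda\I)^{-1}\A_t^\top = \H^{-1}\A_t^\top$, giving $\H^{-1}\A_t^\top\b_t$; applying the second identity turns the trailing factor into $\gamma^{-1}(\A_t^\top\A_t + \lambda\I)^{-1} = \gamma^{-1}\H^{-1}$, so the second piece becomes $\frac{m\lambda^{\prime}}{\gamma}\H^{-1}\x_t$. The final check is the scalar identity $\frac{m\lambda^{\prime}}{\gamma} = \frac{\lambda\gamma}{\gamma} = \lambda$, which makes the two pieces recombine as $\H^{-1}(\A_t^\top\b_t + \lambda\x_t) = \H^{-1}\g$. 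Hence $\E[\xbh_{\mathrm{SaS}}] = \x_t - \H^{-1}\g = \x_{t+1}$, as claimed.

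I do not expect a deep analytic obstacle here, since Lemma~\ref{l:least-squares} already carries the full probabilistic content; the work is careful bookkeeping. The one subtlety worth highlighting is the dual role played by $\lambda^{\prime}$: it simultaneously rescales the ridge term of the sketched gradient $\tilde\g_t$ and supplies the determinantal normalization of $\Hbt_t$, and these two demands are reconciled only at $\lambda^{\prime} = \lambda\gamma/m$. I would therefore state $\gamma = m - d_\lambda$ and the collapse $m\lambda^{\prime}/\gamma = \lambda$ explicitly, as it is exactly these scalar relations that convert the two expectation formulas into exact unbiasedness rather than a mere approximation.
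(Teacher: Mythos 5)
Your proposal is correct and follows essentially the same route as the paper's own proof: identify $\gamma = m - d_\lambda$ from Definition~\ref{d:surrogate}, observe $\lambda^{\prime} = \lambda\gamma/m$, rescale the sketched Hessian and gradient so that both expectations match the normalization $\lambda\gamma\I$ of Lemma~\ref{l:least-squares}, and apply its two identities to recover $\H^{-1}\g(\x_t)$. The only difference is cosmetic bookkeeping (you factor out $1/m$ from $\Hbt_t$ before inverting, while the paper cancels it inside the expectation), so nothing further is needed.
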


\subsection{Newton Sketch}
\label{sec:unbiased-ns}
We now consider the method referred to as the Newton Sketch \cite{pilanci2017newton,pilanci2016fast}, which differs from the sketch-and-solve paradigm in that it only sketches the Hessian, whereas the gradient is computed exactly. Note that in the case of least squares, this algorithm exactly reduces to the Iterative Hessian Sketch, which we discussed in Section \ref{sec:intro-theory}. This approach generally leads to more accurate estimates than sketch-and-solve, however it requires exact gradient computation, which in distributed settings often involves an additional communication round. Our Newton Sketch estimate uses the same $\lambda^{\prime}$ as for the sketch-and-solve, however it enters the Hessian somewhat differently:
\begin{align*}
    \xbh_{\mathrm{NS}} =\x_t - \Hbh_t^{-1}\g(\x_t),\quad\text{for}\quad \Hbh_t=\tfrac\lambda{\lambda^{\prime}}\A_t^\top\S^\top\S\A_t+\lambda\I = \tfrac{\lambda}{\lambda^{\prime}}\Hbt_t.
\end{align*}
The additional factor $\frac{\lambda}{\lambda^{\prime}}$ comes as a result of using the exact gradient. One way to interpret it is that we are scaling the data matrix $\A_t$ instead of the regularization. The following result shows that, with $\lambda^{\prime}$ chosen as before, the surrogate Newton Sketch is unbiased.
\begin{theorem}\label{t:hessian}
  If $\xbh_{\mathrm{NS}}$ is constructed using a surrogate sketch
  $\S\sim\bar\Sc_\mu^m(\A_t,\lambda)$ of size
  $m> d_{\lambda}$, then:
  \begin{align*}
      \E[\xbh_{\mathrm{NS}}] = \x_{t+1}\quad\text{for}\quad\lambda^{\prime} = \lambda\cdot\Big(1 - \frac{d_{\lambda}}{m}\Big).
  \end{align*}
\end{theorem}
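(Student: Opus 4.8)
The plan is to reduce the theorem to a single expectation identity for the inverse sketched Hessian and then invoke the second formula of Lemma~\ref{l:least-squares}. Since the Newton Sketch retains the exact gradient $\g(\x_t)$, which is deterministic and independent of the random sketch $\S$, we have $\E[\xbh_{\mathrm{NS}}] = \x_t - \E[\Hbh_t^{-1}]\,\g(\x_t)$. Comparing this with the exact step $\x_{t+1} = \x_t - \H^{-1}\g(\x_t)$, where $\H = \A_t^\top\A_t + \lambda\I$, it therefore suffices to establish the matrix identity $\E[\Hbh_t^{-1}] = \H^{-1} = (\A_t^\top\A_t + \lambda\I)^{-1}$.

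Next I would rewrite $\Hbh_t$ in terms of the underlying determinantal design. Writing $\S = \frac1{\sqrt m}\Xb$ with $\Xb\sim\Det_\mu^\gamma(\A_t,\lambda)$, so that $\S^\top\S = \frac1m\Xb^\top\Xb$, the definition $\Hbh_t = \frac{\lambda}{\lambda^{\prime}}\A_t^\top\S^\top\S\A_t + \lambda\I$ becomes
\begin{align*}
\Hbh_t = \frac{\lambda}{\lambda^{\prime} m}\A_t^\top\Xb^\top\Xb\A_t + \lambda\I = \frac{\lambda}{\lambda^{\prime} m}\Big(\A_t^\top\Xb^\top\Xb\A_t + \lambda^{\prime} m\,\I\Big).
\end{align*}
The purpose of factoring out $\frac{\lambda}{\lambda^{\prime} m}$ is to put the bracketed matrix into exactly the form $\A_t^\top\Xb^\top\Xb\A_t + \lambda\gamma\I$ that appears in Lemma~\ref{l:least-squares}; so the argument hinges on checking that $\lambda^{\prime} m = \lambda\gamma$.

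To verify this, recall from Definition~\ref{d:surrogate} that $\gamma$ is the scalar for which $\E[\#(\Xb)] = m$, together with the expected row-size formula $\E[\#(\Xb)] = \gamma + d_\lambda$; hence $\gamma = m - d_\lambda$. With the prescribed $\lambda^{\prime} = \lambda(1 - d_\lambda/m)$ this gives $\lambda^{\prime} m = \lambda(m - d_\lambda) = \lambda\gamma$, exactly as required. Consequently
\begin{align*}
\Hbh_t^{-1} = \frac{\lambda^{\prime} m}{\lambda}\big(\A_t^\top\Xb^\top\Xb\A_t + \lambda\gamma\I\big)^{-1} = \gamma\,\big(\A_t^\top\Xb^\top\Xb\A_t + \lambda\gamma\I\big)^{-1},
\end{align*}
and taking expectations and applying the second identity of Lemma~\ref{l:least-squares}, namely $\E[(\A_t^\top\Xb^\top\Xb\A_t + \lambda\gamma\I)^{-1}] = \gamma^{-1}(\A_t^\top\A_t + \lambda\I)^{-1}$, the leading $\gamma$ cancels the $\gamma^{-1}$ and yields $\E[\Hbh_t^{-1}] = (\A_t^\top\A_t + \lambda\I)^{-1} = \H^{-1}$, which closes the reduction from the first paragraph.

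The computation itself is short, so the only genuine subtlety—and the reason the Newton Sketch carries the extra factor $\frac{\lambda}{\lambda^{\prime}}$ that is absent from sketch-and-solve—is the scalar bookkeeping: the prefactor must be arranged so that the additive regularization inside the inverse equals precisely $\lambda\gamma$ (to match the lemma) while the leading scalar equals precisely $\gamma$ (so that it cancels the $\gamma^{-1}$ the lemma produces). Both requirements hold simultaneously only because of the identity $\lambda^{\prime} m = \lambda\gamma$, which is in turn a direct consequence of $\E[\#(\Xb)] = \gamma + d_\lambda$. I would therefore treat establishing $\lambda^{\prime} m = \lambda\gamma$ as the crux of the proof, with every other step being direct substitution and cancellation.
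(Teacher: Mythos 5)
Your proof is correct and follows essentially the same route as the paper: both arguments reduce the claim to $\E[\Hbh_t^{-1}]=\H^{-1}$ (legitimate since the exact gradient is deterministic), use $\gamma = m - d_\lambda$ from Definition~\ref{d:surrogate} to get the scalar identity $\lambda^{\prime} m = \lambda\gamma$, rewrite $\Hbh_t^{-1} = \gamma\,(\A_t^\top\Xb^\top\Xb\A_t+\lambda\gamma\I)^{-1}$, and finish with the second formula of Lemma~\ref{l:least-squares}. Your write-up simply makes explicit the scalar bookkeeping that the paper compresses into the line $\lambda^{\prime}=\lambda\cdot\gamma/m$.
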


\section{Convergence Analysis}
\label{sec:convergence_analysis}

Here, we study the convergence guarantees of the surrogate Newton Sketch with distributed averaging. Consider $q$ i.i.d.~copies $\Hbh_{t,1},...,\Hbh_{t,q}$ of the Hessian sketch $\Hbh_t$ defined in Section \ref{sec:unbiased-ns}. We start by finding an upper bound for the distance between the optimal Newton update and averaged Newton sketch update at the $t$'th iteration, defined as $\xbh_{t+1}=\x_t-\frac1q\sum_{k=1}^q\Hbh_{t,k}^{-1}\g(\x_t)$. We will use Mahalanobis norm as the distance metric. Let $\|\v\|_{\M}$ denote the Mahalanobis norm, i.e., $\|\v\|_{\M} = \sqrt{\v^\top \M \v}$. The distance between the updates is equal to the distance between the next iterates:
\begin{align*}
    \|\x_{t+1} - \xbh_{t+1} \|_{\H_t} = \| (\bar\H_t^{-1} - \H_t^{-1}) \g_t \|_{\H_t},\quad\text{where}\quad\bar\H_t^{-1} = \frac1q\sum_{k=1}^q\Hbh_{t,k}^{-1}.
\end{align*}
We can bound this quantity in terms of the spectral norm approximation error of $\bar\H_t^{-1}$ as follows:
\begin{align*}
    \|(\bar\H_t^{-1} - \H_t^{-1}) \g_t \|_{\H_t} 
    \leq \| \H_t^{\frac{1}{2}} (\bar\H_t^{-1} - \H_t^{-1}) \H_t^{\frac{1}{2}} \| \cdot \| \H_t^{-1} \g_t \|_{\H_t}.
\end{align*}
Note that the second term, $\H_t^{-1} \g_t$, is the exact Newton step. 
To upper bound the first term, we now focus our discussion on a particular variant of surrogate sketch $\bar\Sc_\mu^m(\A,\lambda)$ that we call surrogate leverage score sampling. Leverage score sampling is an i.i.d.~row sampling method, i.e., the probability measure $\mu$ is defined by $\mu\big(\{\frac1{\sqrt{p_i}}\e_i^\top\}\big) = p_i$ for $\sum_{i=1}^n p_i=1$. Specifically, we consider the so-called $\lambda$-ridge leverage scores which have been used in the context of regularized least squares \cite{ridge-leverage-scores}, where the probabilities must satisfy $p_i\geq\frac12\a_i^\top(\A_t^\top\A_t+\lambda\I)^{-1}\a_i/d_\lambda$ ($\a_i^\top$ denotes a row of $\A_t$). Such $p_i$'s can be found efficiently using standard random projection techniques \cite{fast-leverage-scores,cw-sparse}.
\begin{lemma} \label{l:matrix_concent}
    If $n\geq m\geq C\alpha d_\lambda\polylog(n,\kappa,1/\delta)$ and we use the surrogate leverage score sampling sketch of size $m$, then the i.i.d.~copies $\Hbh_{t,1},...,\Hbh_{t,q}$ of the sketch $\Hbh_t$  with probability $1-\delta$ satisfy:
    \begin{align*}
        \| \H_t^{\frac{1}{2}} (\bar\H_t^{-1} - \E[\bar\H_t^{-1}]) \H_t^{\frac{1}{2}} \|
        \leq \frac{1}{\sqrt{\alpha q}},\qquad\text{where}\quad \bar\H_t^{-1} = \frac1q\sum_{k=1}^q\Hbh_{t,k}^{-1}.
    \end{align*}
\end{lemma}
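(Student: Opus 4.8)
The plan is to read the left-hand side as a matrix concentration statement for an i.i.d.\ average and to control it with a matrix Bernstein inequality, the crucial point being that the surrogate leverage-score sketch makes the per-sketch variance as small as $1/\alpha$. Write $\Z_k=\H_t^{1/2}\Hbh_{t,k}^{-1}\H_t^{1/2}$, so the quantity of interest is $\|\frac1q\sum_{k=1}^q(\Z_k-\E[\Z_k])\|$. First I would record that these are i.i.d.\ positive semidefinite matrices with $\E[\Z_k]=\I$: since $\S=\frac1{\sqrt m}\Xb$ with $\Xb\sim\Det_\mu^\gamma(\A_t,\lambda)$, $\gamma=m-d_\lambda$ and $\lambda^{\prime}=\lambda\gamma/m$, one has $\Hbh_{t,k}=\frac1\gamma(\A_t^\top\Xb^\top\Xb\A_t+\lambda\gamma\I)$, hence $\Hbh_{t,k}^{-1}=\gamma(\A_t^\top\Xb^\top\Xb\A_t+\lambda\gamma\I)^{-1}$, and the second identity of Lemma~\ref{l:least-squares} gives $\E[\Hbh_{t,k}^{-1}]=\H_t^{-1}=\E[\bar\H_t^{-1}]$, so $\E[\Z_k]=\I$. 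This reduces the lemma to a concentration-around-the-mean bound.

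The heart of the argument is a per-sketch bound: with high probability a single surrogate leverage-score sketch of size $m$ is a $\lambda$-ridge subspace embedding, i.e.\ $\|\Z_k-\I\|\le\epsilon_0$ with $\epsilon_0\lesssim\sqrt{d_\lambda\,\polylog(n,\kappa,1/\delta)/m}$, which under $m\ge C\alpha d_\lambda\polylog(n,\kappa,1/\delta)$ is at most $1/\sqrt\alpha$ once logs are absorbed into $C$. For ordinary i.i.d.\ ridge leverage-score sampling this is the standard matrix-Chernoff subspace-embedding guarantee; the work here is to carry it over to the determinantal design, whose determinantal rescaling breaks independence of the sampled rows. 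I would handle this either by invoking the concentration results for determinantal designs in the cited literature \cite{surrogate-design}, or by coupling $\Xb$ to an i.i.d.\ leverage-score sample and arguing that determinantal rescaling only upweights well-spread samples. Establishing this embedding for the non-i.i.d.\ surrogate sketch is the main obstacle.

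Given the per-sketch embedding, the variance is small: on the good event $(1-\epsilon_0)\I\preceq\Z_k\preceq(1+\epsilon_0)\I$ forces $(\Z_k-\I)^2\preceq\epsilon_0^2\I$, so the matrix variance obeys $v=\|\E[(\Z_k-\I)^2]\|\lesssim\epsilon_0^2\lesssim1/\alpha$. To apply matrix Bernstein I would truncate $\Z_k$ on the (high-probability) embedding event so that the centered summands satisfy the almost-sure bound $R=\epsilon_0\lesssim1/\sqrt\alpha$; the crude deterministic bound $\|\Z_k\|\le(\|\A_t\|^2+\lambda)/\lambda=\mathrm{poly}(\kappa)$, which follows from $\Hbh_{t,k}\succeq\lambda\I$, controls both the mean shift caused by truncation and the embedding failure probability, which the $\polylog(\kappa,1/\delta)$ budget in $m$ renders negligible. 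Matrix Bernstein then yields $\|\frac1q\sum_k(\Z_k-\I)\|\lesssim\sqrt{v\log(d/\delta)/q}+R\log(d/\delta)/q\lesssim\frac1{\sqrt\alpha}\sqrt{\log(d/\delta)/q}+\frac{\log(d/\delta)}{\sqrt\alpha\,q}$, and taking $C$ (and the power of the polylog) large enough makes the right-hand side at most $1/\sqrt{\alpha q}$ with probability $1-\delta$.

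Besides the determinantal subspace embedding, the only delicate bookkeeping is the truncation step together with the union bound over the $q$ sketches, both of which are routine once the per-sketch quality $\epsilon_0$ and the deterministic $\mathrm{poly}(\kappa)$ bound are in hand.
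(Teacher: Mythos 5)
Your setup coincides with the paper's: writing $\Z_k=\H_t^{1/2}\Hbh_{t,k}^{-1}\H_t^{1/2}$, the identity $\Hbh_{t,k}=\gamma^{-1}(\A_t^\top\Xb^\top\Xb\A_t+\lambda\gamma\I)$ and Lemma \ref{l:least-squares} give $\E[\Z_k]=\I$, and your per-sketch estimate $\|\Z_k-\I\|\leq\epsilon_0$ with $\epsilon_0$ of order $1/\sqrt{\alpha}$ is (up to inverting the matrix) the paper's Lemma \ref{l:subspace_embedding}. The genuine gap is in how you aggregate over the $q$ copies. The hypothesis allows only $m\geq C\alpha d_\lambda\polylog(n,\kappa,1/\delta)$, with no dependence on $q$, so once $m$ is fixed the per-sketch embedding failure probability $\delta_0$ is a fixed positive constant. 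Your truncation-plus-Bernstein argument then incurs (i) a union-bound failure probability of order $q\delta_0$, since all $q$ sketches must agree with their truncations, and (ii) a truncation bias $\|\E[\tilde{\Z}_k]-\I\|\leq\kappa\delta_0$ which does not decay as $q$ grows. Because the bound to be proven, $1/\sqrt{\alpha q}$, tends to zero while $\delta$ and $\delta_0$ stay fixed, both items fail for large $q$: as soon as $\kappa\delta_0>1/\sqrt{\alpha q}$ the bias alone exceeds the target, and as soon as $q\delta_0>\delta$ the failure probability is too large. Fixing this forces $\delta_0$ to shrink with $q$, hence $m\geq C\alpha d_\lambda\log(qn\kappa/\delta)$, a $\log q$ dependence the lemma does not grant and which would undermine the $q\to\infty$ scalability that motivates it. The paper's proof avoids truncation altogether: Lemma \ref{l:moments} turns the embedding plus the almost-sure bound $\|\Z_k-\I\|\leq\kappa$ into moment bounds $\E[\|\Z_k-\I\|^p]^{1/p}\leq\tilde\eta$ for $p$ of order $\max\{\log(1/\delta),\log d\}$ (the embedding failure probability is taken of order $(\tilde\eta/\kappa)^p$, costing only a factor $p\log(\kappa/\tilde\eta)$ in the sketch size, with no $q$), and Lemma \ref{l:matrix_concent2} then aggregates via symmetrization, the matrix Khintchine/Rosenthal inequality of \cite{cgt12} (Lemma \ref{l:rosenthal}), and Markov's inequality, which delivers the $1/\sqrt{q}$ scaling uniformly in $q$.

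The second gap is the per-sketch embedding itself, which you rightly call the main obstacle but then defer either to ``the cited literature'' or to an unspecified coupling with i.i.d.\ leverage-score sampling; neither fills the hole, since the statement needed for the \emph{regularized} determinantal design is not off the shelf, and the coupling heuristic has no evident formalization. The paper proves it as Lemma \ref{l:subspace_embedding}: by Lemma \ref{l:composition}, $\Xb\sim\Det_\mu^\gamma(\A,\lambda)$ splits into a $\DPP(\frac1\lambda\A\A^\top)$ component plus an independent i.i.d.\ leverage-score component of Poisson size; the i.i.d.\ component is controlled by Tropp's matrix Chernoff bound (Theorem \ref{t:tropp}), conditioned on the Poisson count and combined with a Poisson tail bound, while the DPP component is controlled by the Strongly Rayleigh matrix concentration of \cite{kyng2018matrix} (Theorem \ref{t:kyng}) after a symmetric homogenization step. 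The structure is asymmetric in a way your sketch misses: the i.i.d.\ part alone provides the lower bound $\H_t^{-1/2}\Hbh_{t,k}\H_t^{-1/2}\succeq(1-O(\eta))\I$, while the positive semidefinite DPP contribution only needs to be bounded above. So even granting a Bernstein-style finish, the Strongly Rayleigh machinery (not a coupling) is what you would need here --- and you would still have to replace truncation by moment bounds to obtain the statement uniformly in $q$.
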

Note that, crucially, we can invoke the unbiasedness of the Hessian sketch, $\E[\bar\H_t^{-1}]=\H_t^{-1}$, so we obtain that with probability at least $1-\delta$,
\begin{align} \label{eq:dist_next_iter}
    \|\x_{t+1} - \xbh_{t+1} \|_{\H_t} \leq  \frac{1}{\sqrt{\alpha q}} \cdot\| \H_t^{-1} \g_t \|_{\H_t}.
\end{align}
We now move on to measuring how close the next Newton sketch iterate is to the global optimizer of the loss function $f(\x)$. For this part of the analysis, we assume that the Hessian matrix is $L$-Lipschitz.
\begin{assumption} \label{a:hessian_lipschitz}
    The Hessian matrix $\H(\x)$ is $L$-Lipschitz continuous, that is, $\| \H(\x) - \H(\x^\prime) \| \leq L \| \x - \x^\prime \|$ for all $\x$ and $\x^\prime$.
\end{assumption}
Combining \eqref{eq:dist_next_iter} with Lemma 14 from \cite{determinantal-averaging} and letting $\eta=1/\sqrt\alpha$, we obtain the following convergence result for the distributed Newton Sketch using surrogate leverage score sampling sketch.
\begin{theorem}\label{t:newton}
Let $\kappa$ and $\lambda_{\min}$ be the condition number and smallest eigenvalue of the Hessian $\H(\x_t)$, respectively. The distributed Newton Sketch update constructed using a surrogate leverage score sampling sketch of size $m=O(\alpha d\cdot \polylog(n,\kappa,1/\delta))$ and averaged over $q$ workers, satisfies:
\begin{align*}
    \| \xbh_{t+1} - \x^* \| \leq \max \left\{ \frac{1}{\sqrt{\alpha q}} \sqrt{\kappa} \|\x_t - \x^*\|, \frac{2L}{\lambda_{\min}} \| \x_t - \x^* \|^2 \right\}.
\end{align*}
\end{theorem}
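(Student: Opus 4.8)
The plan is to treat the averaged sketch step as an \emph{inexact} Newton step and to feed the already-established spectral bound \eqref{eq:dist_next_iter} into a deterministic inexact-Newton convergence argument. The starting observation is that $\H_t^{-1}\g_t = \x_t - \x_{t+1}$ is exactly the full Newton step, so \eqref{eq:dist_next_iter} reads $\|\x_{t+1} - \xbh_{t+1}\|_{\H_t} \le \eta\,\|\x_t - \x_{t+1}\|_{\H_t}$ with inexactness $\eta = 1/\sqrt{\alpha q}$; this is precisely the hypothesis required by Lemma~14 of \cite{determinantal-averaging}. The remaining work is then bookkeeping: combine this sketching error with the quadratic convergence of the exact Newton iterate and convert between the Mahalanobis norm $\|\cdot\|_{\H_t}$ and the Euclidean norm.

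First I would split the target via the triangle inequality through the exact Newton iterate $\x_{t+1} = \x_t - \H_t^{-1}\g_t$, writing $\|\xbh_{t+1} - \x^*\| \le \|\xbh_{t+1} - \x_{t+1}\| + \|\x_{t+1} - \x^*\|$. The second summand is controlled by the standard Newton--Kantorovich estimate: under the $L$-Lipschitz Hessian of Assumption~\ref{a:hessian_lipschitz}, the exact step satisfies $\|\x_{t+1} - \x^*\| \le \frac{L}{2\lambda_{\min}}\|\x_t - \x^*\|^2$, which produces the quadratic term in the claimed maximum. The first summand is the sketching error; converting \eqref{eq:dist_next_iter} out of the $\H_t$-norm using $\lambda_{\min}\I \preceq \H_t \preceq \lambda_{\max}\I$ costs a factor $\sqrt{\lambda_{\max}/\lambda_{\min}} = \sqrt\kappa$ and yields the linear term $\frac{1}{\sqrt{\alpha q}}\sqrt\kappa\,\|\x_t-\x^*\|$, once the Newton-step length is re-expressed in terms of the distance to the optimum.

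That last re-expression is the one genuinely nontrivial step. To bound $\|\x_t - \x_{t+1}\|_{\H_t} = \|\H_t^{-1}\g_t\|_{\H_t}$ by $\|\x_t - \x^*\|$, I would use $\g_t = \g(\x_t) - \g(\x^*)$ (since $\g(\x^*)=\zero$) together with the fundamental-theorem representation $\g(\x_t) - \g(\x^*) = \big(\int_0^1 \H(\x^* + s(\x_t-\x^*))\,\d s\big)(\x_t - \x^*)$; Lipschitz continuity then shows the averaged Hessian agrees with $\H_t$ up to an $O(L\|\x_t-\x^*\|)$ perturbation, so the Newton-step length equals $\|\x_t-\x^*\|_{\H_t}$ up to a quadratic correction. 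Absorbing that correction into the quadratic term and using $a + b \le 2\max\{a,b\}$ is what turns the sum into the stated maximum; this repackaging is exactly what Lemma~14 of \cite{determinantal-averaging} performs, so invoking it with $\eta = 1/\sqrt\alpha$ and $q$ averaged workers discharges the bookkeeping.

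Finally, the bound \eqref{eq:dist_next_iter} holds only on the $1-\delta$ event of the matrix concentration Lemma~\ref{l:matrix_concent}, so the conclusion inherits the same probability $1-\delta$; the unbiasedness $\E[\bar\H_t^{-1}] = \H_t^{-1}$ from Theorem~\ref{t:hessian} is what makes that concentration sharp enough to drive $\eta$ to zero with $q$. I expect the main obstacle to be the clean fusion of the linear sketching error with the quadratic Newton term into a single maximum: the norm conversions and the averaged-Hessian comparison must be tracked carefully so that the constants $\sqrt\kappa$ and $\frac{2L}{\lambda_{\min}}$ emerge exactly as stated, even though no individual estimate is deep once Lemma~\ref{l:matrix_concent} and Lemma~14 of \cite{determinantal-averaging} are in hand.
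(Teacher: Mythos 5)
Your proposal is correct and takes essentially the same route as the paper: the paper's entire proof consists of establishing \eqref{eq:dist_next_iter} from Lemma~\ref{l:matrix_concent} together with the unbiasedness $\E[\bar\H_t^{-1}]=\H_t^{-1}$, and then invoking Lemma~14 of \cite{determinantal-averaging} with $\eta=1/\sqrt{\alpha}$ to produce the stated maximum. Your extra sketch of that lemma's internals (triangle inequality through the exact Newton iterate, the Newton--Kantorovich quadratic term, and the $\sqrt{\kappa}$ norm conversion) is consistent with how the cited lemma operates, so nothing is missing.
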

\begin{remark}
The convergence rate for the distributed Iterative Hessian Sketch algorithm as given in Theorem \ref{t:main} is obtained by using \eqref{eq:dist_next_iter} with $\H_t=\A^\top\A + \lambda \I$. The assumption that $\alpha\geq d$ in Theorem~\ref{t:main} is only needed for the time complexity (see Theorem \ref{t:fast-surrogate}). The convergence rate holds for $\alpha\geq 1$.
\end{remark} 
\section{Numerical Results}
\label{sec:numerical}
In this section we present numerical results, with further details provided in Appendix \ref{sec:additional_numerical}. Figures \ref{fig:regularized_ls_cifar} and \ref{fig:regularized_ls_detavg_comparison} show the estimation error as a function of the number of averaged outputs for the regularized least squares problem discussed in Section \ref{sec:intro-debiasing}, on Cifar-10 and Boston housing prices datasets, respectively. 

\begin{figure} 
\begin{minipage}[b]{0.32\linewidth}
  \centering
  \centerline{\includegraphics[width=\columnwidth]{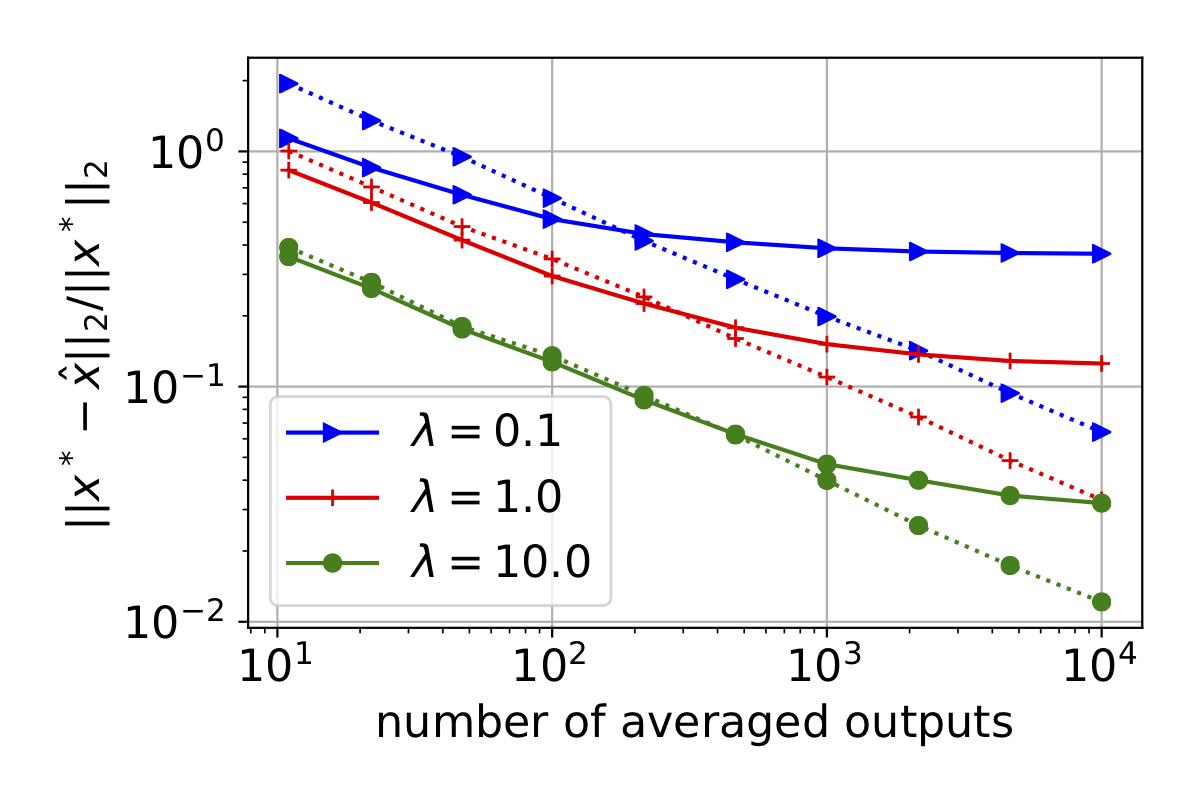}}
  \centerline{(a) Gaussian}\medskip
\end{minipage}
\hfill
\begin{minipage}[b]{0.32\linewidth}
  \centering
  \centerline{\includegraphics[width=\columnwidth]{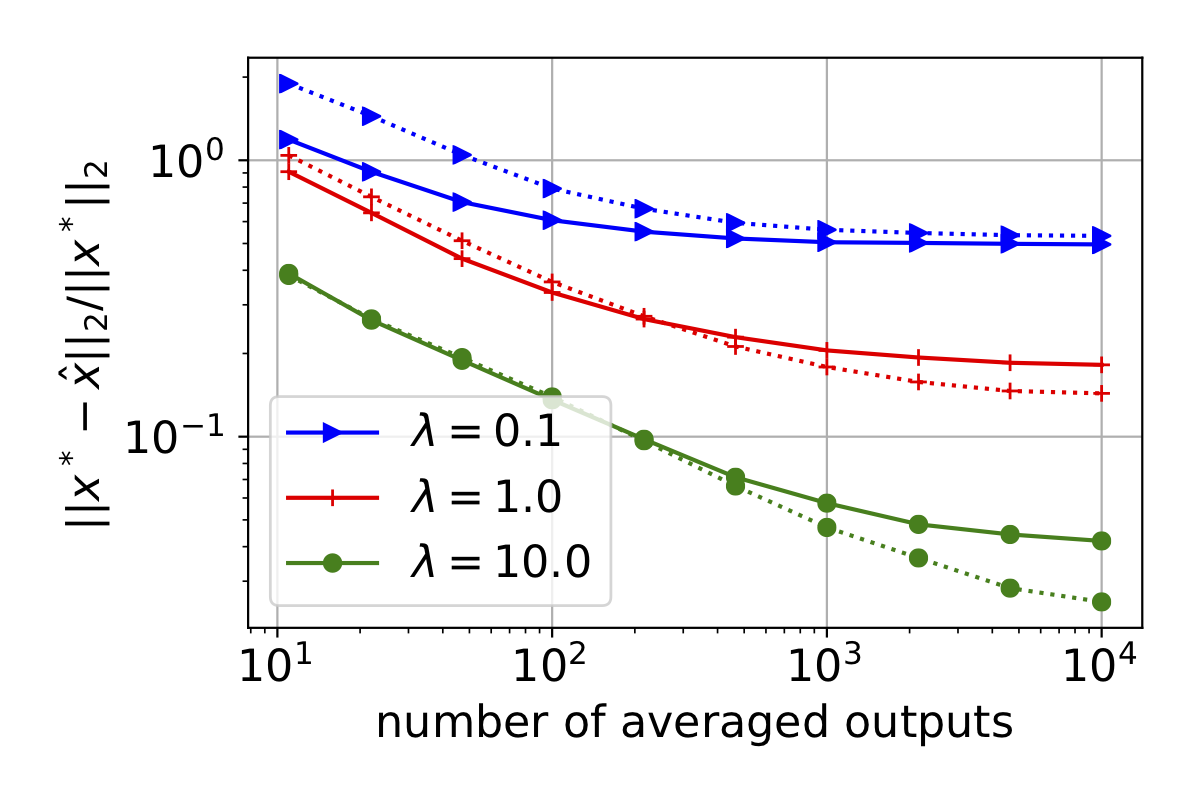}}
  \centerline{(b) Uniform}\medskip
\end{minipage}
\hfill
\begin{minipage}[b]{0.32\linewidth}
  \centering
  \centerline{\includegraphics[width=\columnwidth]{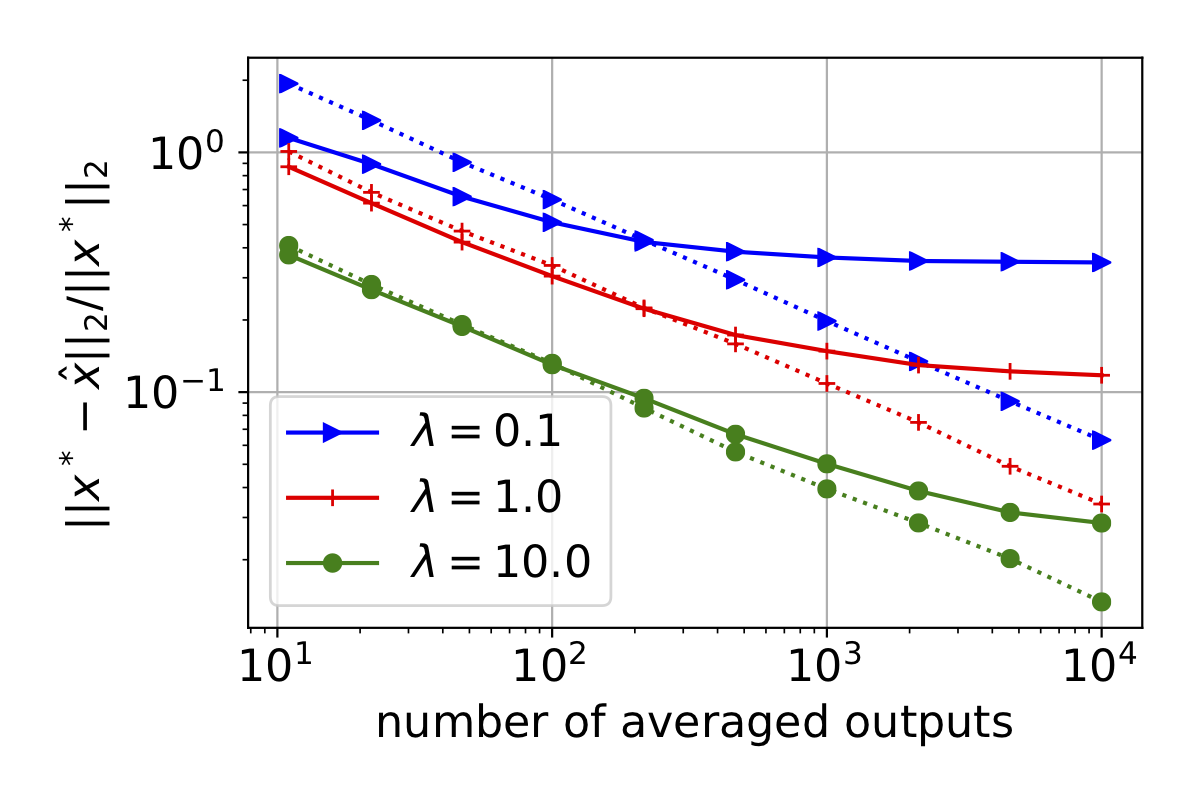}}
  \centerline{(c) Surrogate sketch}\medskip
\end{minipage}
\vspace{-2mm}
\caption{Estimation error against the number of averaged outputs for regularized least squares on first two classes of Cifar-10 dataset ($n=10000$, $d=3072$, $m=1000$) for different regularization parameter values $\lambda$. The dotted lines show the error for the debiased versions (obtained using $\lambda^{\prime}$ expressions) for each straight line with the same color and marker.}
\vspace{-3mm}
\label{fig:regularized_ls_cifar}
\end{figure}

\begin{figure} 
\begin{minipage}[b]{0.32\linewidth}
  \centering
  \centerline{\includegraphics[width=\columnwidth]{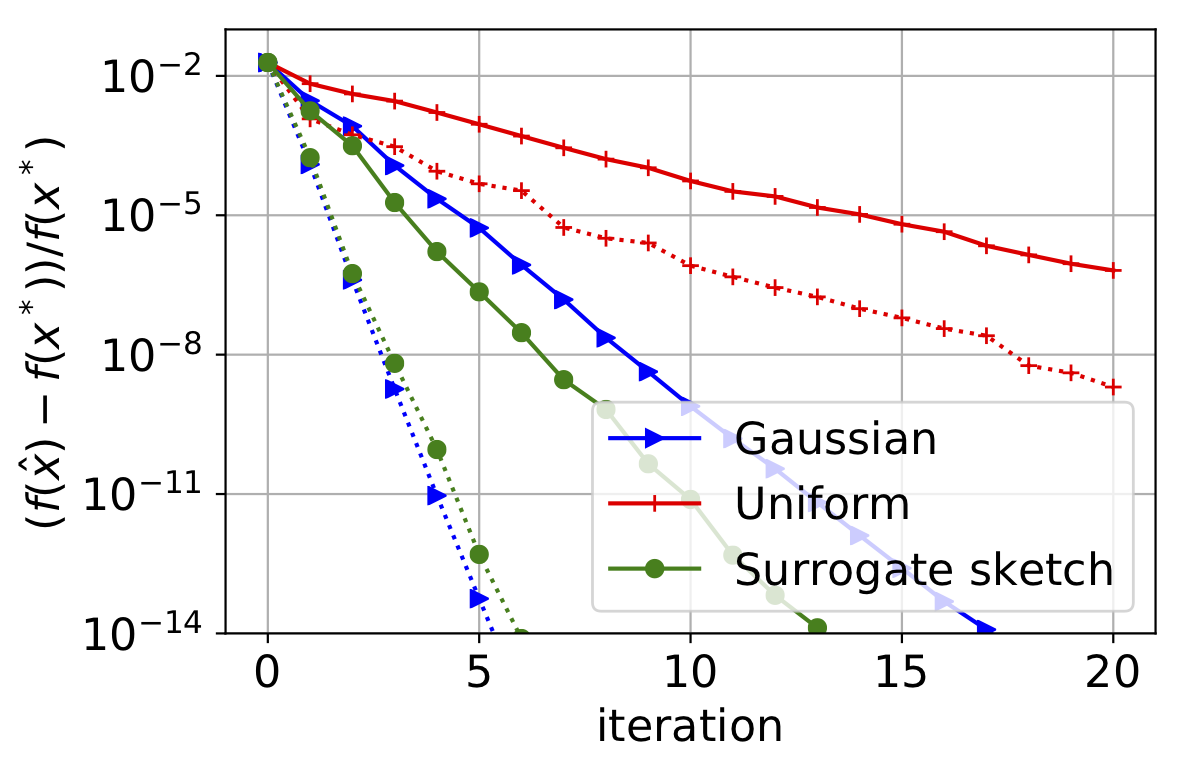}}
  \centerline{(a) statlog-australian-credit }\medskip
\end{minipage}
\hfill
\begin{minipage}[b]{0.32\linewidth}
  \centering
  \centerline{\includegraphics[width=\columnwidth]{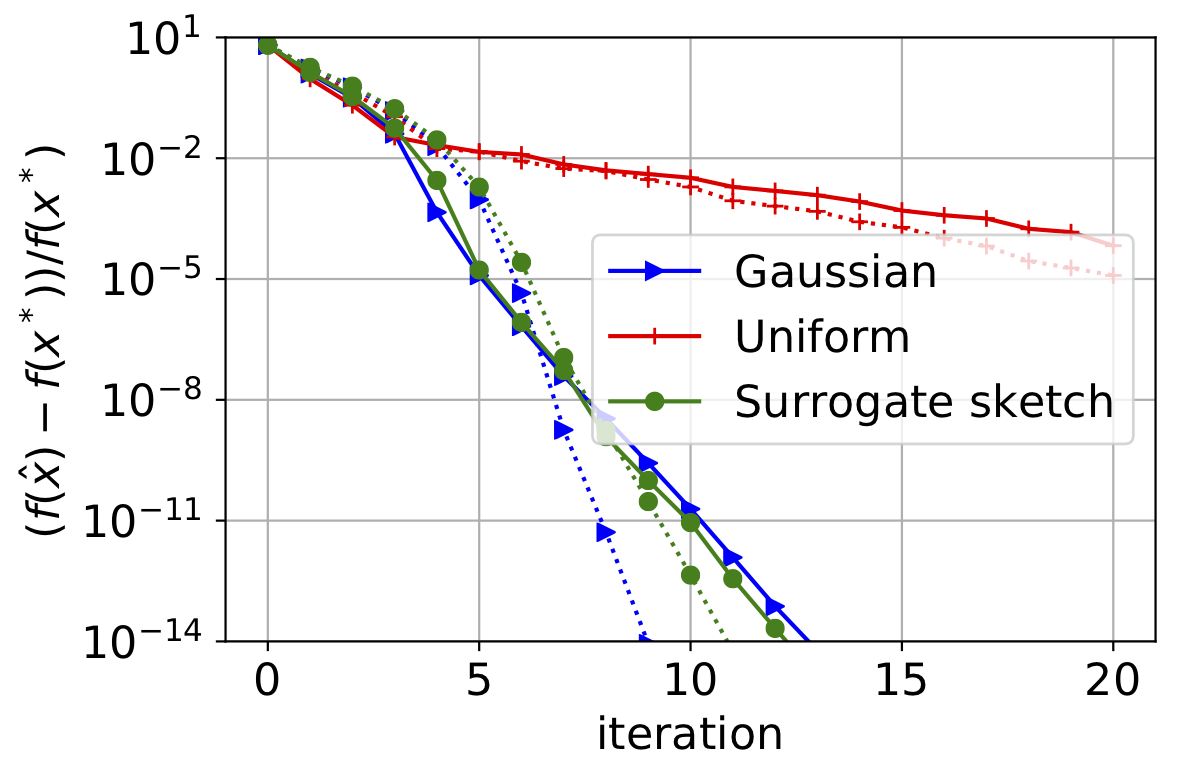}}
  \centerline{(b) breast-cancer-wisc }\medskip
\end{minipage}
\hfill
\begin{minipage}[b]{0.32\linewidth}
  \centering
  \centerline{\includegraphics[width=\columnwidth]{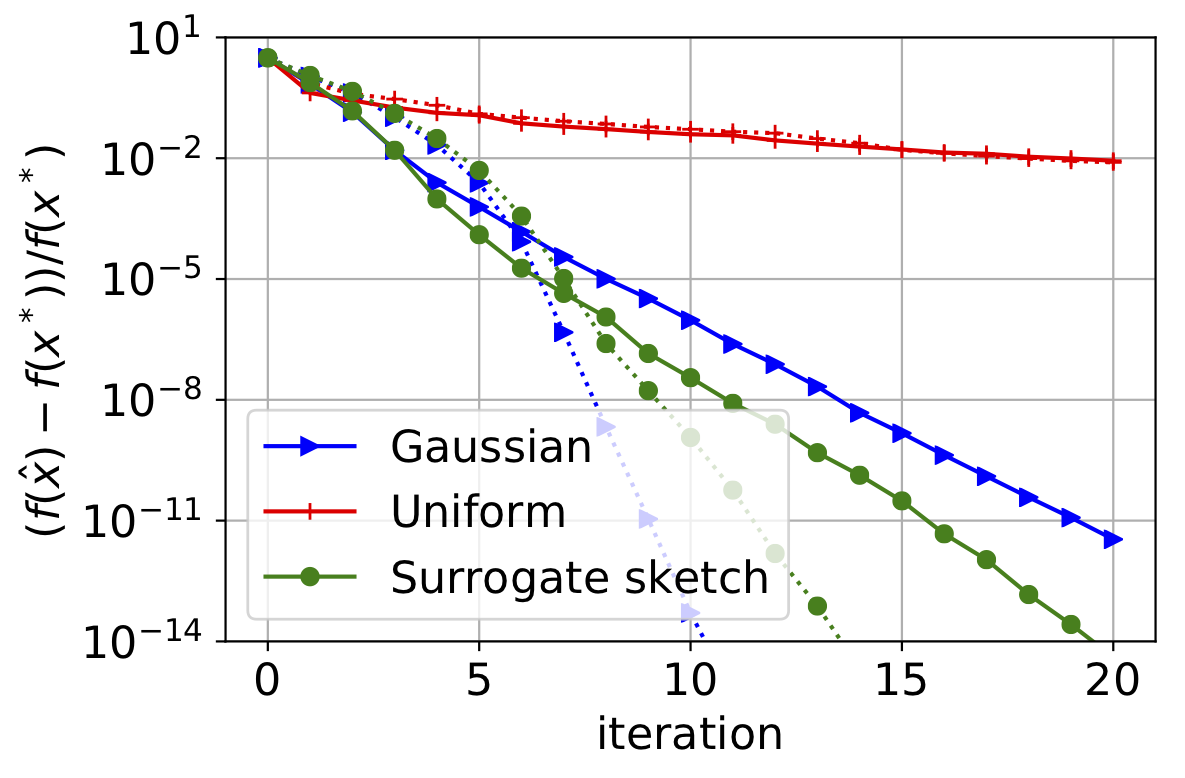}}
  \centerline{(c) ionosphere }\medskip
\end{minipage}
\vspace{-2mm}
\caption{Distributed Newton Sketch algorithm for logistic regression with $\ell_2$-regularization on different UCI datasets. The dotted curves show the error for when the regularization parameter is rescaled using the provided expression for $\lambda^{\prime}$. In all the experiments, we have $q=100$ workers and $\lambda=10^{-4}$. The dimensions for each dataset are ($690\times 14$), ($699\times 9$), ($351\times 33$), and the sketch sizes are $m=50,50,100$ for plots a,b,c, respectively. The step size for distributed Newton sketch updates has been determined via backtracking line search with parameters $\tau=2$, $c=0.1$, $a_0=1$. }
\vspace{-3mm}
\label{fig:dist_newton_sketch}
\end{figure}

\ifthenelse{\boolean{arxivVersion}}{
\begin{figure}
\centering
\centerline{\includegraphics[width=0.4\textwidth]{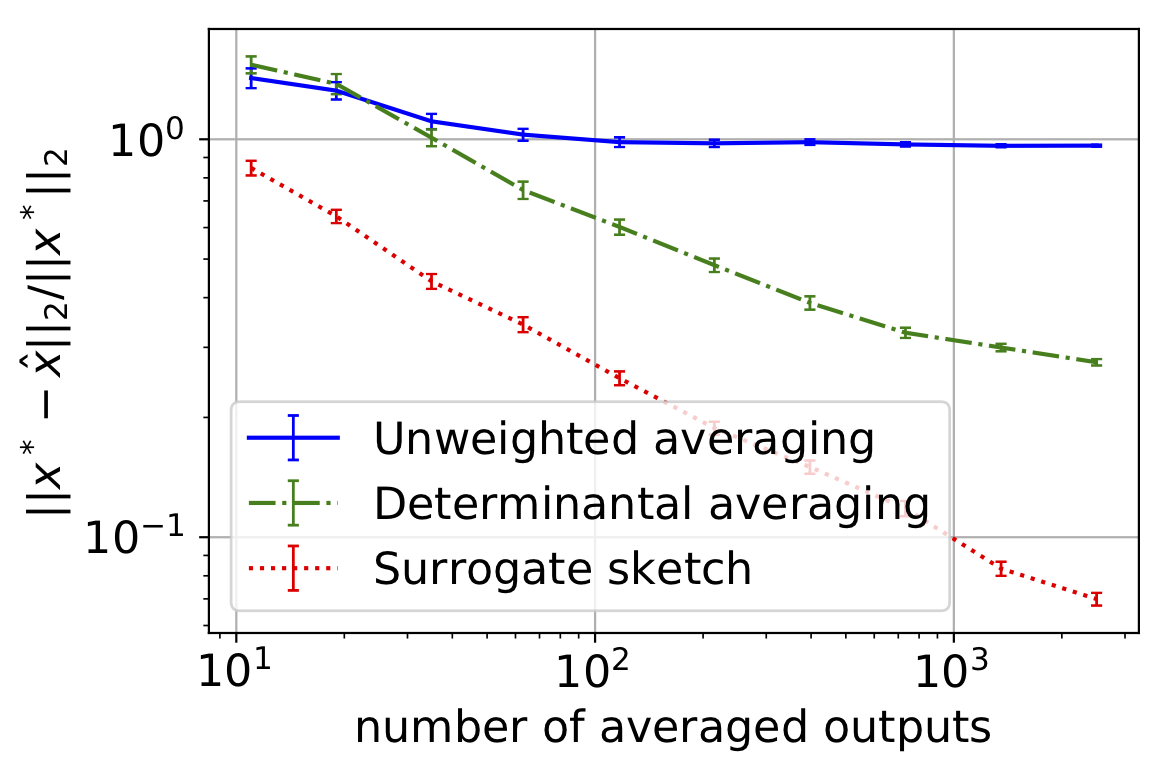}}
\caption{Estimation error of the \emph{surrogate sketch}, against uniform sampling with \emph{unweighted averaging} \cite{mahoney2018giant} and \emph{determinantal averaging} \cite{determinantal-averaging}.}
\label{fig:regularized_ls_detavg_comparison}
\end{figure}
}{
\begin{wrapfigure}{r}{0.4\textwidth}
\centering
\vspace{-1mm}
\centerline{\includegraphics[width=0.38\textwidth]{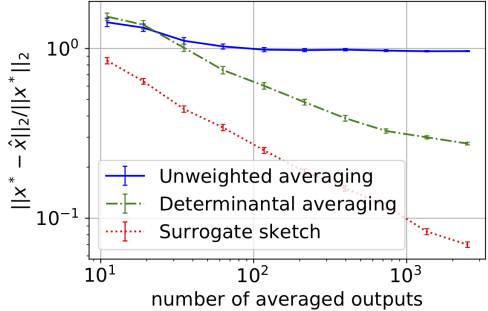}}
\vspace{-1mm}
\caption{Estimation error of the \emph{surrogate sketch}, against uniform sampling with \emph{unweighted averaging} \cite{mahoney2018giant} and \emph{determinantal averaging} \cite{determinantal-averaging}.}
\vspace{-3mm}
\label{fig:regularized_ls_detavg_comparison}
\end{wrapfigure}
}

Figure \ref{fig:regularized_ls_cifar} illustrates that  when the number of averaged outputs is large, rescaling the regularization parameter using the expression $\lambda^{\prime}=\lambda\cdot(1-\frac{d_\lambda}{m})$, as in Theorem \ref{t:intro-unbiased}, improves on the estimation error for a range of different $\lambda$ values. We observe that this is true not only for the surrogate sketch but also for the Gaussian sketch (we also tested the Rademacher sketch, which performed exactly as the Gaussian did). For uniform sampling, rescaling the regularization parameter does not lead to an unbiased estimator, but it significantly reduces the bias in most instances. 
Figure \ref{fig:regularized_ls_detavg_comparison} compares the surrogate row sampling sketch  to the standard i.i.d.~row sampling used in conjunction with averaging methods suggested by \cite{mahoney2018giant} (unweighted averaging) and \cite{determinantal-averaging} (determinantal averaging), on the Boston housing dataset. 
We used: $\lambda = 10$, $\lambda^{\prime}=4.06$, and sketch size $m=50$. We show an average over 100 trials, along with the standard error. We observe that the better theoretical guarantees achieved by the surrogate sketch, as shown in Table~\ref{tab:convergence}, translate to improved empirical performance.

Figure \ref{fig:dist_newton_sketch} shows the estimation error against iterations for the distributed Newton sketch algorithm running on a logistic regression problem with $\ell_2$ regularization on three different binary classification UCI datasets. We observe that the rescaled regularization technique leads to significant speedups in convergence, particularly for Gaussian and surrogate sketches.

\section{Conclusion}
\label{sec:conclusion}
We introduced two techniques for debiasing distributed second order methods. First, we defined a family of sketching methods called \emph{surrogate sketches}, which admit exact bias expressions for local Newton estimates. Second, we proposed \emph{scaled regularization}, a method for correcting that bias.  

\section*{Acknowledgements}
This work was partially supported by the National Science Foundation under grant IIS-1838179.
Also, MD and MWM acknowledge DARPA, NSF, and ONR for providing partial support of this work.

\bibliographystyle{plain}
\bibliography{pap,REFS_1}

\begin{thebibliography}{10}

\bibitem{ridge-leverage-scores}
Ahmed~El Alaoui and Michael~W. Mahoney.
\newblock Fast randomized kernel ridge regression with statistical guarantees.
\newblock In {\em Proceedings of the 28th International Conference on Neural
  Information Processing Systems}, pages 775--783, Montreal, Canada, December
  2015.

\bibitem{BajovicJKJ17}
D.~Bajovi\'c, D.~Jakoveti\'c, N.~Kreji\'c, and N.K. Jerinki\'c.
\newblock Newton-like method with diagonal correction for distributed
  optimization.
\newblock {\em SIAM Journal on Optimization}, 27(2):1171--1203, 2017.

\bibitem{bartan2020distributed}
Burak Bartan and Mert Pilanci.
\newblock Distributed averaging methods for randomized second order
  optimization.
\newblock {\em arXiv e-prints, arXiv:2002.06540}, 2020.

\bibitem{bartan2020distributed_privacy}
Burak Bartan and Mert Pilanci.
\newblock Distributed sketching methods for privacy preserving regression.
\newblock {\em arXiv e-prints, arXiv:2002.06538}, 2020.

\bibitem{matrix-mathematics}
Dennis~S. Bernstein.
\newblock {\em Matrix Mathematics: Theory, Facts, and Formulas}.
\newblock Princeton University Press, second edition, 2011.

\bibitem{borcea2009negative}
Julius Borcea, Petter Br{\"a}nd{\'e}n, and Thomas Liggett.
\newblock Negative dependence and the geometry of polynomials.
\newblock {\em Journal of the American Mathematical Society}, 22(2):521--567,
  2009.

\bibitem{alpha-dpp}
Daniele Calandriello, Micha{\l} Derezi\'nski, and Michal Valko.
\newblock Sampling from a $k$-dpp without looking at all items.
\newblock {\em arXiv e-prints, arXiv:2006.16947}, 2020.

\bibitem{poisson-tail}
Cl\'{e}ment Canonne.
\newblock A short note on poisson tail bounds.
\newblock Technical report, Columbia University, 2017.

\bibitem{cw-sparse}
Kenneth~L. Clarkson and David~P. Woodruff.
\newblock Low-rank approximation and regression in input sparsity time.
\newblock {\em J. ACM}, 63(6):54:1--54:45, January 2017.

\bibitem{dpp-intermediate}
Micha{\l} Derezi\'{n}ski.
\newblock Fast determinantal point processes via distortion-free intermediate
  sampling.
\newblock In {\em Proceedings of the 32nd Conference on Learning Theory}, 2019.

\bibitem{dpp-sublinear}
Micha{\l} Derezi\'{n}ski, Daniele Calandriello, and Michal Valko.
\newblock Exact sampling of determinantal point processes with sublinear time
  preprocessing.
\newblock In H.~Wallach, H.~Larochelle, A.~Beygelzimer, F.~d~Alch\'{e}-Buc,
  E.~Fox, and R.~Garnett, editors, {\em Advances in Neural Information
  Processing Systems 32}, pages 11542--11554. Curran Associates, Inc., 2019.

\bibitem{minimax-experimental-design}
Micha{\l} Derezi{\'n}ski, Kenneth~L. Clarkson, Michael~W. Mahoney, and
  Manfred~K. Warmuth.
\newblock Minimax experimental design: Bridging the gap between statistical and
  worst-case approaches to least squares regression.
\newblock In Alina Beygelzimer and Daniel Hsu, editors, {\em Proceedings of the
  Thirty-Second Conference on Learning Theory}, volume~99 of {\em Proceedings
  of Machine Learning Research}, pages 1050--1069, Phoenix, USA, 25--28 Jun
  2019.

\bibitem{surrogate-design}
Micha{\l} {Derezi{\'n}ski}, Feynman {Liang}, and Michael~W. {Mahoney}.
\newblock {Exact expressions for double descent and implicit regularization via
  surrogate random design}.
\newblock {\em arXiv e-prints}, page arXiv:1912.04533, Dec 2019.

\bibitem{determinantal-averaging}
Micha{\l} Derezi\'{n}ski and Michael~W Mahoney.
\newblock Distributed estimation of the inverse hessian by determinantal
  averaging.
\newblock In H.~Wallach, H.~Larochelle, A.~Beygelzimer, F.~d~Alch\'{e}-Buc,
  E.~Fox, and R.~Garnett, editors, {\em Advances in Neural Information
  Processing Systems 32}, pages 11401--11411. Curran Associates, Inc., 2019.

\bibitem{unbiased-estimates-journal}
Micha{\l} Derezi{\'n}ski and Manfred~K. Warmuth.
\newblock Reverse iterative volume sampling for linear regression.
\newblock {\em Journal of Machine Learning Research}, 19(23):1--39, 2018.

\bibitem{correcting-bias}
Micha{\l} {Derezi{\'n}ski}, Manfred~K. {Warmuth}, and Daniel {Hsu}.
\newblock {Correcting the bias in least squares regression with volume-rescaled
  sampling}.
\newblock In {\em Proceedings of the 22nd International Conference on
  Artificial Intelligence and Statistics}, 2019.

\bibitem{fast-leverage-scores}
Petros Drineas, Malik Magdon-Ismail, Michael~W. Mahoney, and David~P. Woodruff.
\newblock Fast approximation of matrix coherence and statistical leverage.
\newblock {\em J. Mach. Learn. Res.}, 13(1):3475--3506, December 2012.

\bibitem{drineas2006sampling}
Petros Drineas, Michael~W Mahoney, and S~Muthukrishnan.
\newblock Sampling algorithms for $\ell_2$ regression and applications.
\newblock In {\em Proceedings of the seventeenth annual ACM-SIAM symposium on
  Discrete algorithm}, pages 1127--1136. Society for Industrial and Applied
  Mathematics, 2006.

\bibitem{cgt12}
Alex Gittens, Richard~Y. Chen, and Joel~A. Tropp.
\newblock {The masked sample covariance estimator: an analysis using matrix
  concentration inequalities}.
\newblock {\em Information and Inference: A Journal of the IMA}, 1(1):2--20, 05
  2012.

\bibitem{dpp-ml}
Alex Kulesza and Ben Taskar.
\newblock {\em Determinantal Point Processes for Machine Learning}.
\newblock Now Publishers Inc., Hanover, MA, USA, 2012.

\bibitem{kyng2018matrix}
Rasmus Kyng and Zhao Song.
\newblock A matrix chernoff bound for strongly rayleigh distributions and
  spectral sparsifiers from a few random spanning trees.
\newblock In {\em 2018 IEEE 59th Annual Symposium on Foundations of Computer
  Science (FOCS)}, pages 373--384. IEEE, 2018.

\bibitem{lacotte2020limiting}
Jonathan Lacotte, Sifan Liu, Edgar Dobriban, and Mert Pilanci.
\newblock Limiting spectrum of randomized hadamard transform and optimal
  iterative sketching methods.
\newblock {\em arXiv preprint arXiv:2002.00864}, 2020.

\bibitem{lacotte2019faster}
Jonathan Lacotte and Mert Pilanci.
\newblock Faster least squares optimization.
\newblock {\em arXiv preprint arXiv:1911.02675}, 2019.

\bibitem{lacotte2020effective}
Jonathan Lacotte and Mert Pilanci.
\newblock Effective dimension adaptive sketching methods for faster regularized
  least-squares optimization.
\newblock {\em arXiv preprint arXiv:2006.05874}, 2020.

\bibitem{lacotte2020optimal}
Jonathan Lacotte and Mert Pilanci.
\newblock Optimal randomized first-order methods for least-squares problems.
\newblock {\em arXiv preprint arXiv:2002.09488}, 2020.

\bibitem{lacotte2019high}
Jonathan Lacotte, Mert Pilanci, and Marco Pavone.
\newblock High-dimensional optimization in adaptive random subspaces.
\newblock In {\em Advances in Neural Information Processing Systems}, pages
  10847--10857, 2019.

\bibitem{MokhtariLR17}
Aryan Mokhtari, Qing Ling, and Alejandro Ribeiro.
\newblock Network newton distributed optimization methods.
\newblock {\em Trans. Sig. Proc.}, 65(1):146--161, January 2017.

\bibitem{ozaslan2019regularized}
Ibrahim~Kurban Ozaslan, Mert Pilanci, and Orhan Arikan.
\newblock Regularized momentum iterative hessian sketch for large scale linear
  system of equations.
\newblock {\em arXiv preprint arXiv:1912.03514}, 2019.

\bibitem{pilanci2016fast}
Mert Pilanci.
\newblock {\em Fast Randomized Algorithms for Convex Optimization and
  Statistical Estimation}.
\newblock PhD thesis, UC Berkeley, 2016.

\bibitem{PilWai14a}
Mert Pilanci and Martin~J. Wainwright.
\newblock Randomized sketches of convex programs with sharp guarantees.
\newblock {\em IEEE Trans. Info. Theory}, 9(61):5096--5115, September 2015.

\bibitem{PilWai14b}
Mert Pilanci and Martin~J Wainwright.
\newblock Iterative hessian sketch: Fast and accurate solution approximation
  for constrained least-squares.
\newblock {\em The Journal of Machine Learning Research}, 17(1):1842--1879,
  2016.

\bibitem{pilanci2017newton}
Mert Pilanci and Martin~J Wainwright.
\newblock Newton sketch: A near linear-time optimization algorithm with
  linear-quadratic convergence.
\newblock {\em SIAM Journal on Optimization}, 27(1):205--245, 2017.

\bibitem{aide}
Sashank~J. {Reddi}, Jakub {Konecn{\'y}}, Peter {Richt{\'a}rik}, Barnab{\'a}s
  {P{\'o}cz{\'o}s}, and Alex {Smola}.
\newblock {AIDE: Fast and Communication Efficient Distributed Optimization}.
\newblock {\em arXiv e-prints}, page arXiv:1608.06879, Aug 2016.

\bibitem{dane}
Ohad Shamir, Nati Srebro, and Tong Zhang.
\newblock Communication-efficient distributed optimization using an approximate
  {N}ewton-type method.
\newblock In Eric~P. Xing and Tony Jebara, editors, {\em Proceedings of the
  31st International Conference on Machine Learning}, volume~32 of {\em
  Proceedings of Machine Learning Research}, pages 1000--1008, Bejing, China,
  22--24 Jun 2014. PMLR.

\bibitem{sridhar2020lower}
Srivatsan Sridhar, Mert Pilanci, and Ayfer {\"O}zg{\"u}r.
\newblock Lower bounds and a near-optimal shrinkage estimator for least squares
  using random projections.
\newblock {\em arXiv preprint arXiv:2006.08160}, 2020.

\bibitem{matrix-tail-bounds}
Joel~A. Tropp.
\newblock User-friendly tail bounds for sums of random matrices.
\newblock {\em Foundations of Computational Mathematics}, 12(4):389--434,
  August 2012.

\bibitem{sketched-ridge-regression}
Shusen Wang, Alex Gittens, and Michael~W. Mahoney.
\newblock Sketched ridge regression: Optimization perspective, statistical
  perspective, and model averaging.
\newblock In Doina Precup and Yee~Whye Teh, editors, {\em Proceedings of the
  34th International Conference on Machine Learning}, volume~70 of {\em
  Proceedings of Machine Learning Research}, pages 3608--3616, International
  Convention Centre, Sydney, Australia, 06--11 Aug 2017. PMLR.

\bibitem{mahoney2018giant}
Shusen Wang, Fred Roosta, Peng Xu, and Michael~W Mahoney.
\newblock Giant: Globally improved approximate newton method for distributed
  optimization.
\newblock In {\em Advances in Neural Information Processing Systems 31}, pages
  2332--2342. Curran Associates, Inc., 2018.

\bibitem{yang2017randomized}
Yun Yang, Mert Pilanci, Martin~J Wainwright, et~al.
\newblock Randomized sketches for kernels: Fast and optimal nonparametric
  regression.
\newblock {\em The Annals of Statistics}, 45(3):991--1023, 2017.

\bibitem{disco}
Yuchen Zhang and Xiao Lin.
\newblock Disco: Distributed optimization for self-concordant empirical loss.
\newblock In Francis Bach and David Blei, editors, {\em Proceedings of the 32nd
  International Conference on Machine Learning}, volume~37 of {\em Proceedings
  of Machine Learning Research}, pages 362--370, Lille, France, 07--09 Jul
  2015. PMLR.

\end{thebibliography}

\newpage
\appendix
\section{Expectation Formulas for Surrogate Sketches}
\label{sec:expectation_formulas}

In this section show the expectation formulas given in Lemma \ref{l:least-squares}. First, we derive the normalization constant of the determinantal design introduced in Definition \ref{d:det}. For this, we rely on the framework of \emph{determinant preserving random matrices} recently introduced by \cite{surrogate-design}. The proofs here roughly follow the techniques from \cite{surrogate-design}, the main difference being that we consider regularized matrices, whereas they focus on the unregularized case. 

A square random matrix is determinant preserving (d.p.) if taking expectation commutes with computing a determinant for that matrix and all its submatrices. Consider the matrix $\X\sim\mu^M$ for an isotropic $d$-variate measure $\mu$ and $M\sim\mathrm{Poisson}(\gamma)$, as in Definition \ref{d:det}. In Lemma 5, \cite{surrogate-design} show that the matrix $\A^\top\X^\top\X\A$ is determinant preserving. Thus, using closure under addition (Lemma 4 in \cite{surrogate-design}), the matrix $\A^\top\X^\top\X\A + \lambda\gamma\I$ is also d.p., so the normalization constant for the probability defined in Definition \ref{d:det} is:
\begin{align*}
    \E\big[\det(\A^\top\X^\top\X\A+\lambda\gamma\I)\big] 
    = \det\big(\A^\top\E[\X^\top\X]\A+\lambda\gamma\I\big) = \det(\gamma\A^\top\A+\lambda\gamma\I).
\end{align*}
\begin{proofof}{Lemma}{\ref{l:least-squares}}
By definition, any d.p.~matrix $\C$ satisfies $\E[\adj(\C)]=\adj(\E[\C])$, where $\adj(\C)$ denotes the adjugate of a square matrix, which for any positive definite matrix is given by $\adj(\C)=\det(\C)\C^{-1}$. This allows us to show the second expectation formula from Lemma \ref{l:least-squares} for $\Xb\sim\Det_\mu^\gamma(\A,\lambda)$. Note that the proof is analogous to the proof of Lemma 11 in \cite{surrogate-design}.
\begin{align*}
    \E\Big[\big(\A^\top\Xb^\top\Xb\A + \lambda\gamma\I\big)^{-1}\Big]
    &=\frac{\E[\det(\A^\top\X^\top\X\A+\lambda\gamma\I)\cdot(\A^\top\X^\top\X\A+\lambda\gamma\I)^{-1}]}
    {\det(\gamma\A^\top\A+\lambda\gamma\I)}
    \\
    &=\frac{\E[\adj(\A^\top\X^\top\X\A+\lambda\gamma\I)]}{\det(\gamma\A^\top\A+\lambda\gamma\I)}
    \\
    &=\frac{\adj(\gamma\A^\top\A+\lambda\gamma\I)}{\det(\gamma\A^\top\A+\lambda\gamma\I)}
    =(\gamma\A^\top\A+\lambda\gamma\I)^{-1}.
\end{align*}
We next prove the first expectation formula from Lemma \ref{l:least-squares}, by following the steps outlined by \cite{surrogate-design} in the proof of their Lemma 13. Let $\b$ denote any vector in $\R^n$. The $i$th entry of the vector $\big(\A^\top\X^\top\X\A+\lambda\gamma\I\big)^{-1}\A^\top\X^\top\X\b$ can be obtained by left multiplying it by the $i$th standard basis vector $\e_i^\top$. We will use the following observation (Fact 2.14.2 from \cite{matrix-mathematics}):
\begin{align*}
    \e_i^\top\adj\big(\A^\top\X^\top\X\A+\lambda\gamma\I\big)^{-1}\A^\top\X^\top\X\b
    &= \det(\A^\top\X^\top\X\A+\lambda\gamma\I + \A^\top\X^\top\X\b\e_i^\top)
    \\
    & \quad -\det(\A^\top\X^\top\X\A+\lambda\gamma\I).
\end{align*}
Combining this with the fact that both the matrices $\A^\top\X^\top\X\A+\lambda\gamma\I$ and $\A^\top\X^\top\X(\A+\b\e_i^\top)+\lambda\gamma\I$ are determinant preserving for $\X$ defined as before, we obtain that:
\begin{align*}
    \E\big[&\e_i^\top\big(\A^\top\Xb^\top\Xb\A+\lambda\gamma\I\big)^{-1}\A^\top\Xb^\top\Xb\b\big]
    =\frac{\E\big[\e_i^\top\adj\big(\A^\top\X^\top\X\A+\lambda\gamma\I\big)^{-1}\A^\top\X^\top\X\b\big]}
    {\det(\gamma\A^\top\A+\lambda\gamma\I)}
    \\
    &=\frac{\E[\det(\A^\top\X^\top\X(\A+\b\e_i^\top)+\lambda\gamma\I\big)
    -\det(\A^\top\X^\top\X\A+\lambda\gamma\I\big)]}{\det(\gamma\A^\top\A+\lambda\gamma\I)}
    \\
    &=\frac{\det(\gamma\A^\top(\A+\b\e_i^\top)+\lambda\gamma\I\big)-\det(\gamma\A^\top\A+\lambda\gamma\I)}
    {\det(\gamma\A^\top\A+\lambda\gamma\I)}
    \\
    &=\frac{\det(\gamma\A^\top\A+\lambda\gamma\I)\e_i^\top(\gamma\A^\top\A+\lambda\gamma\I)^{-1}\gamma\A^\top\b}
    {\det(\gamma\A^\top\A+\lambda\gamma\I)}
    = \e_i^\top(\A^\top\A+\lambda\I)^{-1}\A^\top\b.
\end{align*}
Since the above holds for all indices $i$ and all vectors $\b$, this completes the proof.
\end{proofof}

We next use Lemma \ref{l:least-squares} to prove Theorems \ref{t:least-squares} and \ref{t:hessian}.
\begin{proofof}{Theorem}{\ref{t:least-squares}}
Suppose that, in Lemma \ref{l:least-squares}, parameter $\gamma$ is chosen as in Definition
\ref{d:surrogate} and let $\Xb\sim\Det_\mu^\gamma(\A,\lambda)$. Then we have
$m=\E[\#(\Xb)] = \gamma + d_{\lambda}(\A)$ and the surrogate sketch
in Theorem \ref{t:least-squares} is given by $\S=\frac1{\sqrt m}\Xb$. 
Note that we have $\lambda^\prime = \lambda\cdot\gamma/m$, so we can write:
\begin{align*}
  \E\big[\Hbt_t^{-1}\tilde\g_t\big]
  &= \E\Big[\big(\A_t^\top\S^\top\S\A_t+\lambda^{\prime}\I\big)^{-1}
\big(\A_t^\top\S^\top\S\b_t + \lambda^\prime\x_t\big)\Big]\\
  &= \E\Big[\Big(\tfrac1m\A_t^\top\Xb^\top\Xb\A_t
    +\lambda\tfrac{\gamma}{m}\,\I\Big)^{-1}
    \!\tfrac1m\A_t^\top\Xb^\top\Xb\b_t\Big] + 
    \E\Big[\Big(\tfrac1m\A_t^\top\Xb^\top\Xb\A_t
    +\lambda\tfrac{\gamma}{m}\,\I\Big)^{-1}
    \!\lambda\tfrac\gamma m \x_t\Big]\\
  &= \E\Big[\Big(\A_t^\top\Xb^\top\Xb\A_t
    +\lambda\gamma\,\I\Big)^{-1}
    \!\A_t^\top\Xb^\top\Xb\Big]\b_t + 
    \E\Big[\Big(\A_t^\top\Xb^\top\Xb\A_t
    +\lambda\gamma\,\I\Big)^{-1}\Big]
    \lambda\gamma\x_t\\
  &\overset{(*)}{=}(\A_t^\top\A_t+\lambda\I)^{-1}\A_t^\top\b 
  + (\A_t^\top\A_t+\lambda\I)^{-1}\lambda\x_t 
  = \H^{-1}(\x_t)\g(\x_t),
\end{align*}
where in $(*)$ we used both formulas from Lemma \ref{l:least-squares}. 
This concludes the proof.
\end{proofof}

The proof of Theorem \ref{t:hessian} follows analogously.
\begin{proofof}{Theorem}{\ref{t:hessian}}
Suppose that, in Lemma \ref{l:least-squares}, parameter $\gamma$ is chosen as in Definition
\ref{d:surrogate} and let $\Xb\sim\Det_\mu^\gamma(\A,\lambda)$,
with $\S=\frac1{\sqrt m}\Xb$.
Once again, we have $\lambda^\prime = \lambda\cdot\gamma/m$, so we can write:
\begin{align*}
  \E\big[\Hbh_t^{-1}\big]
  &= \E\Big[\big(\tfrac m\gamma\A_t^\top\S^\top\S\A_t+\lambda\I\big)^{-1}\Big]
  = \gamma\E\Big[\big(\A_t^\top\Xb^\top\Xb\A_t
    +\lambda\gamma\,\I\big)^{-1}\Big]
  \overset{(*)}{=}(\A_t^\top\A_t+\lambda\I)^{-1},
\end{align*}
where in $(*)$ we used the second formula from Lemma \ref{l:least-squares}. 
This concludes the proof.
\end{proofof}

\section{Efficient Algorithms for Surrogate Sketches}
\label{app:fast-dpp}
In this section, we provide a framework for implementing surrogate sketches by relying on the algorithmic techniques from the DPP sampling literature. We then use these results to give the input-sparsity time implementation of the surrogate leverage score sampling sketch.
\begin{definition}
  Given a probability measure $\mu$ over domain $\Omega$ and a kernel
  function $K:\Omega\times\Omega\rightarrow\R_{\geq 0}$, we define a
  determinantal point process $\Xc\sim\DPP_{\mu}(K)$ as a distribution over
  finite subsets of $\Omega$, such that for any $k\geq 0$ and event $E\subseteq
  {\Omega \choose k}$:
  \begin{align*}
    \Pr\{\Xc\in E\}\propto
\frac1{k!}\,\E_{\mu^k}\Big[\one_{[\{x_1,...,x_k\}\in
    E]}\det\!\big([K(x_i,x_j)]_{ij}\big)\Big].
  \end{align*}
\end{definition}
\begin{remark}
If $\Omega$ is the set of row vectors of an $n\times d$ matrix $\A$ and
$K(\a_1,\a_2)=\a_1^\top\a_2$, then $\Xc\sim\DPP_{\mu}(K)$, with $\mu$ being the
uniform measure over $\Omega$, reduces to a standard
L-ensemble DPP \cite{dpp-ml}. In particular, let $S\sim\DPP(\A\A^\top)$ denote a random subset of $[n]$ sampled so that $\Pr(S)\propto\det(\A_S\A_S^\top)$. Then the set of rows of $\A_S$ is distributed identically to $\Xc$.
\end{remark}
A key property of L-ensembles, which relates them to the $\lambda$-effective dimension is that if $\Xc\sim\DPP_{\mu}(K)$ for an isotropic measure $\mu$ and $K(\x,\y) = \frac1\lambda\x^\top\A\A^\top\y$, then $\E[|\Xc|]=d_\lambda(\A)$.

We next show that our determinantal design (Definition \ref{d:det}) can be decomposed into a DPP portion and an i.i.d.~portion, which enables efficient sampling for surrogate sketches. A similar result was previously shown by \cite{surrogate-design} for their determinantal design (which is different from ours in that it is not regularized). The below result also immediately leads to the formula for the expected size of a surrogate sketch $\Xb\sim\Det_\mu^\gamma(\A,\lambda)$ given in Section~\ref{s:surrogate}, which states that $\E[\#(\Xb)]=\gamma+d_\lambda(\A)$.
\begin{lemma}\label{l:composition}
Let $\mu$ be a probability measure over $\R^n$. Given scalars
$\lambda,\gamma>0$ and a matrix $\A\in\R^{n\times d}$, let
$\Xc\sim\DPP_{\mu}(K)$, where $K(\x,\y) =
\frac1\lambda\x^\top\A\A^\top\y$ and $\X\sim\mu^M$ for
$M\sim\Poisson(\gamma)$. Then the matrix $\Xb$ formed by adding the elements
of $\Xc$ as rows into the matrix $\X$ and then randomly permuting the
rows of the obtained matrix, is distributed as $\Det_\mu^\gamma(\A,\lambda)$.
\end{lemma}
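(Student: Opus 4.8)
The plan is to show that the composed process and $\Det_\mu^\gamma(\A,\lambda)$ have proportional \emph{Janossy densities} (unnormalized likelihoods over finite configurations of rows), with a proportionality constant independent of the configuration; since both are probability distributions, this forces equality. The final random permutation in the statement makes the composed process exchangeable, matching the fact that the weight $\det(\A^\top\X^\top\X\A+\lambda\gamma\I)$ defining $\Det_\mu^\gamma(\A,\lambda)$ is permutation-symmetric, so it suffices to compare symmetric densities.

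First I would establish the algebraic backbone: the determinantal weight factorizes over subsets of rows. Writing the rows of $\X$ as $\x_1,\dots,\x_M$ and setting $\v_i=\A^\top\x_i$, so that $\A^\top\X^\top\X\A=\sum_i\v_i\v_i^\top=\V\V^\top$ with $\V=\A^\top\X^\top$, the identity $\det(c\I+\V\V^\top)=\sum_{S\subseteq[M]}c^{\,d-|S|}\det(\V_S^\top\V_S)$ (obtained from $\det(c\I+\V\V^\top)=c^d\det(\I+\tfrac1c\V^\top\V)$ and the principal-minor expansion of the characteristic polynomial) gives, with $c=\lambda\gamma$ and $\v_i^\top\v_j=\x_i^\top\A\A^\top\x_j=\lambda K(\x_i,\x_j)$,
\[
\det(\A^\top\X^\top\X\A+\lambda\gamma\I)=\lambda^d\sum_{S\subseteq[M]}\gamma^{\,d-|S|}\det\!\big([K(\x_i,\x_j)]_{i,j\in S}\big).
\]
This exhibits the weight as a sum indexed by a DPP-like subset $S$ of the rows, times a power of $\gamma$ for the remaining rows.

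Next I would translate both processes into Janossy densities with respect to $\mu$. The base object $\X\sim\mu^M$, $M\sim\Poisson(\gamma)$, is a Poisson process with intensity $\gamma\mu$, whose Janossy density is the constant $e^{-\gamma}\gamma^N$ at an $N$-point configuration; $\Det_\mu^\gamma(\A,\lambda)$ reweights this by the determinant above, so its unnormalized Janossy density is $e^{-\gamma}\lambda^d\gamma^d\sum_{S\subseteq[N]}\gamma^{\,N-|S|}\det([K]_S)$. The DPP $\Xc\sim\DPP_\mu(K)$ has unnormalized Janossy density $\det([K(y_i,y_j)]_{i,j})$ by definition (the $1/k!$ is the symmetrization factor), with some finite normalizer $Z_{\DPP}$. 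Superposing two independent point processes convolves their Janossy densities, i.e. sums over all ways to split the $N$ points between them:
\[
j^{\mathrm{comb}}_N(z_{1:N})=\sum_{S\subseteq[N]} j^{\DPP}_{|S|}(z_S)\,j^{\Pi}_{N-|S|}(z_{\bar S})=\frac{e^{-\gamma}}{Z_{\DPP}}\sum_{S\subseteq[N]}\gamma^{\,N-|S|}\det([K]_S).
\]
Comparing the two expressions, they agree up to the factor $\lambda^d\gamma^d Z_{\DPP}$, independent of $N$ and of the points, so the normalized distributions coincide, which is exactly the claim. As a by-product, linearity of expectation with $\E[\#\X]=\gamma$ and $\E[|\Xc|]=d_\lambda(\A)$ (the effective-dimension identity noted above for this kernel) yields $\E[\#(\Xb)]=\gamma+d_\lambda(\A)$.

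The main obstacle is the point-process bookkeeping rather than any deep idea: I must state and apply the superposition/convolution formula for Janossy densities correctly, reconcile the $1/k!$ of the DPP definition with the symmetric densities, and track the constants ($e^{-\gamma}$, $\lambda^d\gamma^d$, $Z_{\DPP}$) carefully enough to confirm that the proportionality constant is genuinely configuration-independent. One should also note that $Z_{\DPP}$ is finite since the DPP is well-defined, so no normalization issue arises; everything else reduces to the algebraic identity of the first step.
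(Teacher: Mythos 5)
Your proof is correct, and its core is the same as the paper's: both arguments decompose the combined process according to which rows came from $\DPP_\mu(K)$ and which from the Poisson i.i.d.\ component, and both hinge on the identical rank-$d$ expansion
\[
\det(\A^\top\X^\top\X\A+\lambda\gamma\I)\;=\;\lambda^d\sum_{S}\gamma^{\,d-|S|}\det\!\big([K(\x_i,\x_j)]_{i,j\in S}\big),
\]
which in the paper appears in the equivalent form $\sum_{S\subseteq[t]}\det(\tfrac1{\lambda\gamma}\X_S\A\A^\top\X_S^\top)=\det(\tfrac1{\lambda\gamma}\A^\top\X^\top\X\A+\I)$. Where you differ is in the packaging, and the difference is not purely cosmetic. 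The paper hand-rolls the superposition bookkeeping --- the Poisson weight $\tfrac{\gamma^{t-s}\ee^{-\gamma}}{(t-s)!}$, the $\binom{t}{s}^{-1}$ interleaving factor contributed by the random permutation, and the sum over position subsets --- and it tracks explicit normalizing constants throughout, using $\det(\I+\tfrac1\lambda\A^\top\A)$ for the DPP and $\det(\gamma\A^\top\A+\lambda\gamma\I)$ for the determinantal design; both of those evaluations tacitly rely on the standing assumption that $\mu$ is isotropic. You instead absorb that bookkeeping into the standard Janossy-density convolution formula for the superposition of independent point processes, and you never evaluate either normalizer: you only observe that the two symmetric unnormalized densities differ by the configuration-independent factor $\lambda^d\gamma^d Z_{\DPP}$, so the normalized laws must coincide. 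This buys a marginally more general and more robust argument (only finiteness of $Z_{\DPP}$ is needed, hence no isotropy assumption), at the cost of importing the superposition formula as a black box --- which is precisely the step the paper proves inline --- and your handling of the row permutation via exchangeability of both matrix laws is the correct way to pass between the matrix-level statement and the point-process one.
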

\begin{proof}
  Let $E\subseteq \R^{t\times n}$ be an event measurable with respect
  to $\mu^t$. We have:
  \begin{align*}
\Pr\{\Xb\in E\}
 &=
 \sum_{s=0}^t\frac{\gamma^{t-s}\ee^{-\gamma}}{(t-s)!}\cdot\frac1{{t\choose s}}
\sum_{S\in{[t]\choose s}}
\frac{\E_{\mu^t}[\one_{[\X\in E]}\cdot\det(\frac1\lambda\X_S\A\A^\top\X_S^\top)]}
{s!\,\det(\frac1\lambda\A^\top\A+\I)}\\
    &=\frac{\gamma^t\ee^{-\gamma}}{t!}\sum_{s=0}^t\sum_{S\in{[t]\choose s}}
      \frac{\E_{\mu^t}[\one_{[\X\in E]}\cdot\det(\frac1{\lambda\gamma}\X_S\A\A^\top\X_S^\top)]}
{\det(\frac1\lambda\A^\top\A+\I)}\\
    &=\frac{\gamma^t\ee^{-\gamma}}{t!
      \det(\frac1\lambda\A^\top\A+\I)}\,
      \E_{\mu^t}\bigg[\one_{[\X\in E]}\cdot\sum_{S\subseteq
      [t]}\det\big(\tfrac1{\lambda\gamma}\X_S\A\A^\top\X_S^\top\big)\bigg]\\
&=\frac{\gamma^t\ee^{-\gamma}}{t!
\det(\frac1\lambda\A^\top\A+\I)}\E_{\mu^t}
\Big[\one_{[\X\in E]}\cdot\det\big(
\tfrac1{\lambda\gamma}\A^\top\X^\top\X\A + \I\big)\Big]\\
&= \frac{\gamma^t\ee^{-\gamma}}{t!}\cdot\frac{\E_{\mu^t}\big[\one_{[\X\in E]}\cdot \det(\A^\top\X^\top\X\A+\lambda\gamma\I)\big]}{\det(\gamma\A^\top\A+\lambda\gamma\I)},
  \end{align*}
  which concludes the proof.
\end{proof}
We now give an algorithm for sampling a surrogate of the
i.i.d.~row-sampling sketch where the importance sampling distribution
is given by $p:[n]\rightarrow\R_{\geq 0}$. Here, the probability
measure $\mu$ is defined so that
$\mu\big(\{\frac1{\sqrt{p_i}}\e_i^\top\}\big) = p_i$
for each $i\in[n]$. The surrogate sketch
$\bar\Sc_\mu^m(\A,\lambda)$ for this $\mu$
can be constructed as follows:
\begin{enumerate}
  \item Sample set $S\subseteq[n]$ so that $\Pr\{S\}\propto
    \det(\frac1\lambda\A_{S}\A_{S}^\top)$, i.e., according to $\DPP(\frac1\lambda\A\A^\top)$.
  \item Draw $M\sim\Poisson(\gamma)$ for $\gamma = m-d_\lambda$ and sample $i_1,...,i_M$
    i.i.d.~from $p$.
\item Let $\sigma_1,...,\sigma_{M+|S|}$ be a sequence consisting of $S$ and
  $i_1,...,i_M$ randomly permuted together. 
\item Then, the $i$th row of $\bar
  \S$ is $\frac1{\sqrt {m p_{\sigma_i}}}\e_{\sigma_i}^\top$ for $i\in[M+|S|]$.
\end{enumerate}

We next present an implementation of the surrogate row sampling sketch which runs in input-sparsity time for tall matrices $\A$ (i.e., when $n\gg d$). 
The algorithm samples exactly from the surrogate sketching distribution, which is crucial for the analysis. Our algorithm is based on two recent papers on DPP sampling \cite{dpp-intermediate,dpp-sublinear}, however we use a slight modification due to \cite{alpha-dpp}, which ensures exact sampling in input sparsity time.
\begin{algorithm}
\caption{Sampling from $\DPP(\A\A^\top)$}\label{alg:dpp}
  \begin{algorithmic}[1]
    \STATE \textbf{input:} $\A\in\R^{n\times d}$, \
    $\C\in\R^{d\times d}$, $(\tilde{l}_1,\dots,\tilde{l}_n)$,
    $s =\sum_i\tilde l_i$, $\tilde s=\tr\big(\C(\C+\I)^{-1}\big)$\\[1mm] 
    \STATE \textbf{repeat}\label{line:rep1}
    \vspace{1mm}
    \STATE \quad sample $u \sim
    \mathrm{Poisson}(r\ee^{1/r}2s)$\label{line:poisson1}
    \vspace{1mm}
    \STATE \quad sample $\rho_1,\tinydots,\rho_u\simiid (\tilde
    l_1/s,\tinydots,\tilde l_n/ s)$,
    \vspace{1mm}
    \STATE\quad\textbf{for} $j=\{1,...,u\}$ \textbf{do}
    \STATE\quad\quad compute
    $l_{\rho_j}=\a_{\rho_j}^\top(\C+\I)^{-1}\a_{\rho_j}$
    \STATE \quad\quad sample
    $z_j\sim\mathrm{Bernoulli}\big(l_{\rho_j}/(2\tilde
    l_{\rho_j})\big)$
    \STATE\quad\textbf{end for}
    \STATE\quad set $\sigma =\{\rho_j:z_j=1\}$, $t=|\sigma|$,
    $\tilde\A_{i,:}=\frac1{\sqrt{r l_{\sigma_i}}}\A_{i,:}$\label{line:poisson2}
    \STATE \quad sample $\textit{Acc}\sim\!
    \text{Bernoulli}\Big(\frac{\ee^{\tilde s}\det(\I+\tilde\A_\sigma^\top\tilde\A_\sigma)}  
{\ee^{t/r}\det(\I+\C)}\Big)$
\vspace{1mm}
    \STATE \textbf{until} $\textit{Acc}=\text{true}$\hfill
    \RETURN $\sigma_{\St}$,\quad where $\St\sim \DPP\big(\tilde\A_\sigma\tilde\A_\sigma^\top\big)$ \label{line:sub}
 \end{algorithmic}
\end{algorithm}

\begin{lemma}\label{l:alg}
After an $O(\nnz(\A)\log(n) + d^4\log(d))$ preprocessing step, we can construct matrix $\C$ and distribution $(\tilde l_1,...,\tilde l_n)$ such that:
\begin{align*}
    (1-\tfrac1{4\sqrt d})\A^\top\A&\preceq \C\preceq (1+\tfrac1{4\sqrt d})\A^\top\A,
    \\
    \frac12\a_i^\top(\C+\I)^{-1}\a_i&\leq \tilde l_i\leq \frac32\a_i^\top(\C+\I)^{-1}\a_i\quad\text{ for all $i$.}
\end{align*}
Moreover, with those inputs, Algorithm~\ref{alg:dpp} returns $S\sim\DPP(\A\A^\top)$ and with probability $1-\delta$ runs in time $O(d^4\log^2(1/\delta))$.
\end{lemma}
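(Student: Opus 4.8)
The plan is to prove the two halves of the lemma separately: first that the preprocessing produces $\C$ and $(\tilde l_1,\dots,\tilde l_n)$ with the stated spectral and leverage-score guarantees within the claimed time, and then that, fed these inputs, Algorithm~\ref{alg:dpp} is an \emph{exact} rejection sampler for $\DPP(\A\A^\top)$ whose running time concentrates as stated. For the preprocessing I would obtain $\C$ from a sparse subspace embedding $\Pi\in\R^{k\times n}$ (an OSNAP/CountSketch matrix) with $k=O(d^2\log d)$ rows, setting $\C=(\Pi\A)^\top\Pi\A$: a subspace embedding of distortion $\epsilon=\frac1{4\sqrt d}$ is exactly the Loewner bound $(1-\epsilon)\A^\top\A\preceq\C\preceq(1+\epsilon)\A^\top\A$, applying $\Pi$ costs $O(\nnz(\A)\log n)$, and forming $\C$ costs $O(kd^2)=O(d^4\log d)$. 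For the scores I would use $\a_i^\top(\C+\I)^{-1}\a_i=\|(\C+\I)^{-1/2}\a_i\|^2$ together with a Johnson--Lindenstrauss projection $\G(\C+\I)^{-1/2}$ with $O(\log n)$ rows; a union bound over the $n$ points makes every $\tilde l_i$ a simultaneous $(1\pm\frac14)$ estimate, hence in $[\frac12,\frac32]$ of the true value. Eigendecomposing $\C$ costs $O(d^3)$ and applying the projection to all rows costs $O(\nnz(\A)\log n)$, giving total preprocessing $O(\nnz(\A)\log n+d^4\log d)$; I would also cache $(\C+\I)^{-1}$ for the main loop.

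For exactness, the first observation is that since the proposal draws index $i$ with probability $\tilde l_i/s$ and thins it with probability $l_i/(2\tilde l_i)$, Poisson thinning makes the multiset $\sigma$ contain item $i$ a $\Poisson(re^{1/r}l_i)$ number of times, independently across $i$ and independently of the approximate scores (the $\tilde l_i$ cancel). Thus only the exact scores $l_i=\a_i^\top(\C+\I)^{-1}\a_i$ drive the law of $\sigma$, while $\tilde l_i$ affect only efficiency. The heart of the argument is then a Poisson determinantal identity: expanding $\det(\I+\tilde\A_\sigma^\top\tilde\A_\sigma)$ into square-free principal minors (repeated rank-one rows contribute $0$) and using that the $k$-th factorial moment of a Poisson equals $\lambda^k$ gives $\E_\sigma[\det(\I+\tilde\A_\sigma^\top\tilde\A_\sigma)]=\det(\I+e^{1/r}\A^\top\A)$, and more generally, weighting by $e^{-t/r}=\prod_{i\in\sigma}e^{-1/r}$ to strip the $e^{1/r}$ from each intensity, a Mecke--Campbell computation yields $\E_\sigma[e^{-t/r}\det(\I+\tilde\A_\sigma^\top\tilde\A_\sigma)]=e^{\sum_i\lambda_i(e^{-1/r}-1)}\det(\I+\A^\top\A)$. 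Since the acceptance weight is exactly $\propto e^{-t/r}\det(\I+\tilde\A_\sigma^\top\tilde\A_\sigma)$ and the downsampling step selects $B\subseteq\sigma$ with probability $\det(\tilde\A_B\tilde\A_B^\top)/\det(\I+\tilde\A_\sigma^\top\tilde\A_\sigma)$, the determinant normalizer cancels; collecting the Poisson weights of the indices outside $B$ and using $\prod_{i\in B}(re^{1/r}l_i)e^{-1/r}\cdot\det(\tilde\A_B\tilde\A_B^\top)=\det(\A_B\A_B^\top)$ leaves the returned $T$ with $\Pr(T)\propto\det(\A_T\A_T^\top)$, i.e.\ exactly $\DPP(\A\A^\top)$. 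This is the template of \cite{dpp-intermediate,dpp-sublinear}, adapted to the regularized kernel and to approximate scores as in \cite{alpha-dpp}.

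For validity and runtime, I would first check that the acceptance probability never exceeds $1$ via a determinant inequality: writing $X=(\C+\I)^{-1}+(\C+\I)^{-1/2}\tilde\A_\sigma^\top\tilde\A_\sigma(\C+\I)^{-1/2}$ one has $\det(\I+\tilde\A_\sigma^\top\tilde\A_\sigma)=\det(\I+\C)\det(X)$, while $\tr(X)=d-\tilde s+t/r$ (each rescaled row has squared ridge-norm $\frac1r$). Since $\det(X)\le e^{\tr(X)-d}=e^{t/r-\tilde s}$ for positive definite $X$, the acceptance probability $e^{\tilde s-t/r}\det(X)\le 1$. For the round count, the expected acceptance equals $e^{\tilde s}e^{\sum_i\lambda_i(e^{-1/r}-1)}\det(\I+\A^\top\A)/\det(\I+\C)$, and here the accuracy $\frac1{4\sqrt d}$ is calibrated precisely so that the first-order discrepancy between $\tilde s-\hat s$ (with $\hat s=\sum_i l_i$) and $\log\det(\I+\C)-\log\det(\I+\A^\top\A)$ cancels, leaving a second-order residual of size $O(\epsilon^2 d)=O(1)$; hence the expected acceptance is $\Theta(1)$ and the number of rounds is $O(\log(1/\delta))$ with probability $1-\delta$. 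Choosing $r=\Theta(d)$ makes $\E[u]=\Theta(d^2)$ and $\E[t]=\Theta(d\,d_\lambda)=O(d^2)$, so Poisson tails give $u,t=O(d^2+\log(1/\delta))$ per round; each round costs $O(ud^2)$ for the exact scores plus $O(td^2+d^3)$ for the acceptance determinant, i.e.\ $O(d^4)$, and the final $\DPP(\tilde\A_\sigma\tilde\A_\sigma^\top)$ downsample costs $O(d^3)$ via an eigendecomposition of the $d\times d$ matrix $\tilde\A_\sigma^\top\tilde\A_\sigma$. Multiplying per-round cost by round count gives the claimed $O(d^4\log^2(1/\delta))$.

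The main obstacle is the exactness computation: correctly handling multiplicities in the Poisson multiset $\sigma$ so that the square-free determinant expansion and factorial-moment bookkeeping go through, and verifying that the acceptance weight together with the downsampling DPP reproduces $\DPP(\A\A^\top)$ \emph{exactly} rather than approximately. A secondary but delicate point is the calibration in the runtime bound: showing that $\frac1{4\sqrt d}$ is exactly the spectral accuracy at which the two $O(\sqrt d)$ log-determinant terms cancel to leave an $O(1)$ residual, which is what prevents the expected acceptance probability --- and hence the running time --- from degrading exponentially in $d$.
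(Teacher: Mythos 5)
Your proposal is correct and takes essentially the same route as the paper: the paper's own proof of Lemma~\ref{l:alg} consists almost entirely of deferring to the intermediate-sampling analyses of \cite{dpp-intermediate,dpp-sublinear}, with its one substantive observation being exactly your first one --- that by Poisson thinning (cf.\ Theorem~2 of \cite{alpha-dpp}) the approximate scores $\tilde l_i$ cancel, so lines \ref{line:poisson1}--\ref{line:poisson2} of Algorithm~\ref{alg:dpp} are equivalent to exact-score Poisson sampling while avoiding the $O(nd^2)$ cost of computing $\sum_i l_i$. The remainder of your write-up (the subspace-embedding/JL preprocessing, the square-free determinant and Mecke--Campbell identities for exactness, the bound $\det(X)\le e^{\tr(X)-d}$ certifying the acceptance probability is at most $1$, and the observation that the $\tfrac1{4\sqrt d}$ calibration keeps the acceptance rate $\Theta(1)$) is a correct, self-contained reconstruction of the cited arguments that the paper itself does not reproduce.
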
 
\begin{proof}
The lemma follows nearly identically as the main results of \cite{dpp-intermediate,dpp-sublinear}. One small difference is in the lines \ref{line:poisson1}-\ref{line:poisson2}. Exploiting properties of the Poisson distribution (see proof of Theorem 2 in \cite{alpha-dpp}), these lines can be rewritten as follows:
\begin{algorithmic}
    \STATE \quad compute $\bar s=\sum_i l_i=\tr(\A^\top\A(\C+\I)^{-1})$
    \STATE \quad sample $t \sim
    \mathrm{Poisson}(r\ee^{1/r}\bar s)$,\qquad \label{line:poisson}
    \vspace{1mm}
    \STATE \quad sample $\sigma_1,\tinydots,\sigma_u
    \simiid (l_1/\bar s,\tinydots,\tilde l_n/\bar s)$
\end{algorithmic}
After this modification, we can follow the analysis of \cite{dpp-sublinear} to show the correctness of the algorithm. However, note that the above rewriting requires computing $\bar s=\tr(\A^\top\A(\C+\I)^{-1})$ which takes $O(nd^2)$ time, whereas Algorithm \ref{alg:dpp} avoids that step. The rest of the time complexity analysis for both the preprocessing step and for Algorithm \ref{alg:dpp} is identical to that of \cite{dpp-intermediate} (see version v1 of that paper). 
\end{proof}

Combining Lemmas \ref{l:composition} and \ref{l:alg}, we can now prove Theorem \ref{t:fast-surrogate}, showing that a surrogate row sampling sketch can be constructed in input sparsity time.
\begin{proofof}{Theorem}{\ref{t:fast-surrogate}}
Given matrix $\A$ and a row sampling distribution $p$, we can construct the surrogate row sampling sketch by using Algorithm \ref{alg:dpp} to sample from $\DPP(\frac1\lambda\A\A^\top)$ in time $O(\nnz(\A)\log(n) + d^4\log(d))$, and then using the procedure from Lemma \ref{l:composition}.
\end{proofof}
\section{Matrix Concentration Guarantees for Surrogate Sketches}
\label{sec:concentration_guarantees}

Throughout this section, we will let $C>0$ be a sufficiently large absolute constant. 
Also, we use the notation $\H=\A^\top\A+\lambda\I$ for the Hessian, and we let $\kappa$ be the condition number of $\H$.
Consider the determinantal design $\Det_\mu^\gamma(\A,\lambda)$ where the probability
measure $\mu$ is defined so that
$\mu\big(\{\frac1{\sqrt{p_i}}\e_i^\top\}\big) = p_i$
and $p_i\geq \frac12 \a_i^\top\H^{-1}\a_i/d_\lambda$, for each $i\in[n]$. 

We analyze the concentration properties of the random matrix of the form:
\begin{align*}    
\Z=\H^{-\frac12}(\gamma^{-1}\A^\top\Xb^\top\Xb\A+\lambda\I)\H^{-\frac12}\qquad\text{for}\quad \Xb\sim\Det_\mu^\gamma(\A,\lambda).
\end{align*} 
While standard matrix concentration results apply to sums of independent matrices, recent work of \cite{kyng2018matrix} extended these results to a class of non-i.i.d.~distributions known as Strongly Rayleigh measures. Since determinantal point processes are Strongly Rayleigh, our determinantal designs can also be expressed this way, which allows us to obtain the following guarantee.

\begin{lemma} \label{l:subspace_embedding}
    If $\gamma\geq C d_\lambda\eta^{-2} \log(n/\delta)$ for $\eta\in(0,1)$, then with probability $1-\delta$ we have
        $\|\Z-\I\|\leq \eta$.
\end{lemma}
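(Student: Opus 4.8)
The plan is to conjugate by $\H^{-\frac12}$, split $\Z$ into its mean and a fluctuation, bound the mean bias by elementary means, and control the fluctuation with a matrix Chernoff bound for Strongly Rayleigh measures \cite{kyng2018matrix}. Write $\widetilde{\a}_i=\H^{-\frac12}\a_i$, so that $\|\widetilde{\a}_i\|^2=\a_i^\top\H^{-1}\a_i=\ell_i$ is exactly the $i$-th $\lambda$-ridge leverage score and $\sum_{i=1}^n\widetilde{\a}_i\widetilde{\a}_i^\top=\H^{-\frac12}\A^\top\A\,\H^{-\frac12}=\I-\lambda\H^{-1}$. Since $\A^\top\Xb^\top\Xb\A=\sum_{\text{sampled }i}\frac1{p_i}\a_i\a_i^\top$, we have
\[
\Z=\gamma^{-1}\!\!\sum_{\text{sampled }i}\tfrac1{p_i}\widetilde{\a}_i\widetilde{\a}_i^\top+\lambda\H^{-1},
\]
where, by the composition in Lemma~\ref{l:composition}, the sampled rows are the disjoint union of a draw from $\DPP(\frac1\lambda\A\A^\top)$ and an independent batch of $\Poisson(\gamma)$-many rows sampled i.i.d.\ from $p$. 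The leverage lower bound $p_i\ge\frac12\ell_i/d_\lambda$ controls each rank-one block,
\[
\tfrac1{p_i}\widetilde{\a}_i\widetilde{\a}_i^\top\preceq\tfrac{2d_\lambda}{\|\widetilde{\a}_i\|^2}\,\widetilde{\a}_i\widetilde{\a}_i^\top\preceq 2d_\lambda\,\I,
\]
so the effective boundedness parameter is $R=2d_\lambda$.

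First I would compute the mean and show $\|\E[\Z]-\I\|\le\frac\eta2$. The i.i.d.\ batch contributes in expectation $\gamma\,\E_p[\frac1{p_i}\widetilde{\a}_i\widetilde{\a}_i^\top]=\gamma(\I-\lambda\H^{-1})$, which after the $\gamma^{-1}$ scaling and the additive $\lambda\H^{-1}$ yields exactly $\I$. The entire bias therefore comes from the DPP, whose inclusion marginals are $\Pr\{i\in\Xc\}=\ell_i$ (the $\frac1{p_i}$ rescalings cancel in the determinantal kernel, leaving $\DPP(\frac1\lambda\A\A^\top)$, whose marginals are the ridge leverage scores). Hence
\[
\E[\Z]-\I=\gamma^{-1}\sum_i\ell_i\,\tfrac1{p_i}\widetilde{\a}_i\widetilde{\a}_i^\top\preceq\tfrac{2d_\lambda}{\gamma}\sum_i\widetilde{\a}_i\widetilde{\a}_i^\top\preceq\tfrac{2d_\lambda}{\gamma}\,\I,
\]
which is at most $\frac\eta2\I$ once $\gamma\ge Cd_\lambda\eta^{-2}\log(n/\delta)$. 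Using the \emph{exact} marginals $\ell_i=\|\widetilde{\a}_i\|^2$ here, rather than the crude bound $\Pr\{i\in\Xc\}\le1$, is precisely what pins the bias at level $d_\lambda/\gamma$ instead of $d_\lambda^2/\gamma$.

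It then remains to prove the fluctuation bound $\|\Z-\E[\Z]\|\le\frac\eta2$ with probability $1-\delta$. Since determinantal point processes are Strongly Rayleigh, and since conditioning the determinantal design $\Det_\mu^\gamma(\A,\lambda)$ on its row-size leaves a determinantal (hence Strongly Rayleigh) measure, the matrix Chernoff inequality of \cite{kyng2018matrix} applies to the rescaled rank-one blocks. With boundedness $R=2d_\lambda$, a mean of spectral norm $O(\gamma)$, and the $\gamma^{-1}$ normalization, it yields a deviation bounded by $C'\sqrt{d_\lambda\log(n/\delta)/\gamma}+C'\,d_\lambda\log(n/\delta)/\gamma$, both terms being at most $\frac\eta4$ under $\gamma\ge Cd_\lambda\eta^{-2}\log(n/\delta)$ for $C$ large. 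Combining with the mean bound by the triangle inequality gives $\|\Z-\I\|\le\eta$.

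I expect the concentration step to be the main obstacle. The Kyng--Song bound is cleanest for homogeneous (fixed-cardinality) Strongly Rayleigh distributions, whereas our process has random size and is a superposition of a DPP with an i.i.d.\ sampling-with-replacement batch. I would resolve this by conditioning on the total row-size $\#(\Xb)$, arguing the conditional law is Strongly Rayleigh so the bound holds uniformly in the size, and then integrating over sizes using that $\#(\Xb)$ concentrates sharply around $\gamma+d_\lambda=m$; the delicate point is to verify that the size-conditional means stay within the $\frac\eta2$ budget of the unconditional target $\I$. An alternative route is a Poissonized matrix Chernoff bound that accommodates the random batch size directly.
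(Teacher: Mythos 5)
Your decomposition and mean computation are sound: by Lemma \ref{l:composition} the i.i.d.\ Poisson batch contributes exactly $\I$ in expectation (after the $\gamma^{-1}$ scaling and the $\lambda\H^{-1}$ shift), the DPP index marginals are indeed the ridge leverage scores, and the resulting bias bound $\|\E[\Z]-\I\|\leq 2d_\lambda/\gamma = O(\eta^2)$ is correct. The genuine gap is the fluctuation step. You invoke the matrix Chernoff bound of \cite{kyng2018matrix} (Theorem \ref{t:kyng}) for the \emph{combined} process, but that theorem requires a homogeneous (fixed-cardinality) Strongly Rayleigh measure on $\{0,1\}^V$ for a finite ground set $V$, and the combined process fails both hypotheses in ways your proposed patches do not repair. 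First, the i.i.d.\ batch samples \emph{with replacement}, so the joint process carries multiplicities and is not a measure on $\{0,1\}^V$ at all; the Strongly Rayleigh hypothesis is not even well-posed for it without a lifting argument you would have to supply. Second, conditioning on the total row-size $\#(\Xb)$ does not obviously produce a Strongly Rayleigh law: the conditional distribution is a \emph{mixture} over the random split between DPP rows and i.i.d.\ rows, and Strongly Rayleigh (like negative dependence generally) is not preserved under mixtures --- your parenthetical claim that ``the conditional law is Strongly Rayleigh'' is precisely the unproven point. Finally, even the DPP part alone, $\DPP(\frac1\lambda\A\A^\top)$, has random cardinality, so homogeneity fails for it too; one needs the symmetric homogenization device of \cite{borcea2009negative} before Theorem \ref{t:kyng} can be applied, a step absent from your outline.

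The paper's proof avoids all of this by never asking for two-sided concentration of the DPP part. It splits $\Xb$ via Lemma \ref{l:composition} and treats the two halves with different tools: the i.i.d.\ half is handled by the standard matrix Chernoff bound for independent summands (Theorem \ref{t:tropp}), applied conditionally on $M\sim\Poisson(\gamma)$ and then integrated using Poisson tail bounds, giving $\|\H^{-1/2}(\frac1\gamma\A^\top\Xb_{\IID}^\top\Xb_{\IID}\A+\lambda\I)\H^{-1/2}-\I\|\leq O(\eta)$ with probability $1-O(\delta)$. Since the DPP contribution is positive semidefinite, this alone yields the lower bound $\Z\succeq(1-O(\eta))\I$. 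For the upper bound, the DPP half's \emph{entire} contribution --- mean included --- is bounded in norm by $Cd_\lambda\log(n/\delta)$ via Theorem \ref{t:kyng} applied after symmetric homogenization, and after the $\gamma^{-1}$ scaling this is at most $\eta^2\leq\eta$; this is the same $d_\lambda/\gamma$ smallness that powers your bias estimate, but used one-sidedly, so no concentration of the DPP part around its mean is ever needed. If you want to keep your mean-plus-fluctuation architecture, the viable repair is exactly this split: run the Chernoff argument only on the i.i.d.\ half, where independence makes \cite{matrix-tail-bounds} available, and bound the DPP half crudely in one direction, rather than seeking a single Strongly Rayleigh concentration statement for the superposed process.
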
 
We first quote the two matrix concentration results which we use to prove Lemma \ref{l:subspace_embedding}. The more standard result applies to sums of independent random matrices, and can be stated as follows.
\begin{theorem}[\cite{matrix-tail-bounds}]\label{t:tropp}
Consider a finite sequence $\Y_1,\Y_2,...$ of independent random positive semi-definite $d\times d$ matrices that satisfy $\|\Y_i\|\leq R$ and $\mu_{\min}\I\preceq \E[\sum_i\Y_{i}]\preceq \mu_{\max}\I$. Then,
\begin{align*}
\Pr\bigg(\lambda_{\max}\Big(\sum_i\Y_i\Big)\geq(1+\epsilon)\mu_{\max}\bigg)
&\leq d\exp\Big(-\frac{\epsilon^2\mu_{\max}}{3R(1+\epsilon)}\Big)\quad\text{for }\epsilon>0,
\\
\Pr\bigg(\lambda_{\min}\Big(\sum_i\Y_i\Big)\leq(1-\epsilon)\mu_{\min}\bigg)
&\leq d\exp\Big(-\frac{\epsilon^2\mu_{\min}}{2R}\Big)\quad\text{for }\epsilon\in(0,1).
\end{align*}
\end{theorem}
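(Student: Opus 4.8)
The statement is the matrix Chernoff inequality, and the plan is to prove it by the matrix Laplace transform method, the matrix analogue of the classical Chernoff argument. Write $\W\defeq\textstyle\sum_i\Y_i$ and treat the upper tail first. The starting point is that for any symmetric $d\times d$ matrix $\mathbf{X}$ and any $\theta>0$ we have $e^{\theta\lambda_{\max}(\mathbf{X})}=\lambda_{\max}(e^{\theta\mathbf{X}})\le\tr\exp(\theta\mathbf{X})$, since $e^{\theta\mathbf{X}}$ is positive definite and its largest eigenvalue is dominated by its trace. Applying Markov's inequality to the nonnegative random variable $\tr\exp(\theta\W)$ gives, for any $t$,
\begin{align*}
  \Pr\big(\lambda_{\max}(\W)\ge t\big)
  \le e^{-\theta t}\,\E\big[\tr\exp(\theta\W)\big].
\end{align*}
Everything then reduces to controlling the trace matrix moment generating function $\E[\tr\exp(\theta\W)]$ and optimizing over $\theta$.

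The main obstacle is precisely this trace-MGF: unlike the scalar case, $\exp(\theta\sum_i\Y_i)$ does not factor as a product of the $\exp(\theta\Y_i)$, so independence cannot be exploited directly. The plan is to invoke Lieb's concavity theorem, which asserts that the map $\mathbf{A}\mapsto\tr\exp(\mathbf{K}+\log\mathbf{A})$ over positive-definite $\mathbf{A}$ is concave for every fixed symmetric $\mathbf{K}$. Peeling off the summands one at a time, conditioning on all but $\Y_i$ and passing the expectation through the exponential via Lieb's theorem and Jensen's inequality, replaces each $\theta\Y_i$ inside the exponential by $\log\E[e^{\theta\Y_i}]$, yielding the subadditivity bound
\begin{align*}
  \E\Big[\tr\exp\Big(\textstyle\sum_i\theta\Y_i\Big)\Big]
  \le \tr\exp\Big(\textstyle\sum_i\log\E\big[e^{\theta\Y_i}\big]\Big).
\end{align*}

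It remains to bound each cumulant $\log\E[e^{\theta\Y_i}]$ using the two hypotheses. Since $\zero\preceq\Y_i\preceq R\I$, the scalar convexity estimate $e^{\theta y}\le 1+\tfrac{e^{\theta R}-1}{R}y$ on $[0,R]$ transfers to the semidefinite inequality $e^{\theta\Y_i}\preceq\I+\tfrac{e^{\theta R}-1}{R}\Y_i$, so $\E[e^{\theta\Y_i}]\preceq\I+g(\theta)\E[\Y_i]$ with $g(\theta)=\tfrac{e^{\theta R}-1}{R}$, and $\log\E[e^{\theta\Y_i}]\preceq g(\theta)\E[\Y_i]$ by operator monotonicity of the logarithm together with $\log(\I+\M)\preceq\M$. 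Summing and using $\sum_i\E[\Y_i]\preceq\mu_{\max}\I$ gives $\sum_i\log\E[e^{\theta\Y_i}]\preceq g(\theta)\mu_{\max}\I$; since $\tr\exp$ is monotone in the Loewner order, $\tr\exp(\sum_i\log\E[e^{\theta\Y_i}])\le d\,e^{g(\theta)\mu_{\max}}$. Substituting back yields
\begin{align*}
  \Pr\big(\lambda_{\max}(\W)\ge(1+\epsilon)\mu_{\max}\big)
  \le d\,\exp\Big(-\theta(1+\epsilon)\mu_{\max}+\tfrac{e^{\theta R}-1}{R}\mu_{\max}\Big),
\end{align*}
and the choice $\theta=\tfrac1R\log(1+\epsilon)$ collapses the right-hand side to $d\,\big[e^{\epsilon}(1+\epsilon)^{-(1+\epsilon)}\big]^{\mu_{\max}/R}$. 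The stated form then follows from the elementary scalar bound $(1+\epsilon)\log(1+\epsilon)-\epsilon\ge\tfrac{\epsilon^2}{3(1+\epsilon)}$, which holds for all $\epsilon>0$.

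For the lower tail I would run the identical argument with $\theta=-s$, $s>0$, using $e^{-s\lambda_{\min}(\mathbf{X})}=\lambda_{\max}(e^{-s\mathbf{X}})\le\tr\exp(-s\mathbf{X})$ and the convexity estimate $e^{-sy}\le 1+\tfrac{e^{-sR}-1}{R}y$ on $[0,R]$. Here the coefficient $\tfrac{e^{-sR}-1}{R}$ is negative, so it is the lower bound $\sum_i\E[\Y_i]\succeq\mu_{\min}\I$ that enters, and the negative factor reverses the semidefinite inequality in the correct direction. Optimizing at $s=-\tfrac1R\log(1-\epsilon)$ and invoking $(1-\epsilon)\log(1-\epsilon)+\epsilon\ge\tfrac{\epsilon^2}{2}$ for $\epsilon\in(0,1)$ produces the bound with denominator $2R$. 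The only genuinely nontrivial ingredient throughout is Lieb's theorem underlying the subadditivity step; the per-term semidefinite estimates, the trace monotonicity, and the final scalar optimizations are all routine.
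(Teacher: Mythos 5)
Your proof is correct: it is precisely the matrix Laplace transform argument of the cited reference \cite{matrix-tail-bounds} (Lieb's concavity theorem for the subadditivity of the trace-MGF, the semidefinite chord bound $e^{\theta\Y}\preceq\I+\frac{e^{\theta R}-1}{R}\Y$, and the standard scalar optimizations), with the final bounds following from $(1+\epsilon)\log(1+\epsilon)-\epsilon\geq\frac{\epsilon^2}{3(1+\epsilon)}$ and $(1-\epsilon)\log(1-\epsilon)+\epsilon\geq\frac{\epsilon^2}{2}$ as you state. The paper itself quotes this theorem without proof, so there is nothing internal to compare against; your reconstruction faithfully reproduces the source's approach.
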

The other matrix concentration result applies to sums of non-independent random matrices, and it can be viewed as a partial extension of the above theorem, except with an additional logarithmic factor. 
\begin{theorem}[\cite{kyng2018matrix}]\label{t:kyng}
Suppose $(\xi_1,...,\xi_n)\in\{0,1\}^n$ is a random vector whose distribution is $k$-homogeneous (i.e., exactly $k$-sparse almost surely) and Strongly Rayleigh. Given $d\times d$ p.s.d.~matrices $\C_1,...,\C_n$ such that $\|\C_i\|\leq R$ and $\|\E[\sum_i\xi_i\C_i]\|\leq \mu$, for any $\epsilon>0$ we have:
\begin{align*}
    \Pr\bigg(\Big\|\sum_i\xi_i\C_i -\E\Big[\sum_i\xi_i\C_i\Big]\Big\|\geq\epsilon\mu\bigg)
    \leq d\exp\Big(-\frac{\epsilon^2\mu}{R(\log k + \epsilon)}\cdot\Theta(1)\Big).
\end{align*}
\end{theorem}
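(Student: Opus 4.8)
The plan is to run the matrix Laplace-transform (matrix Chernoff) machinery, but to replace the independence-based factorization of the matrix moment-generating function (MGF) by an argument adapted to the Strongly Rayleigh (SR) structure of $\xi$. Write $\S=\sum_i\xi_i\C_i$ and $\M=\E[\S]$, so that $\|\M\|\leq\mu$; since $\|\S-\M\|\geq\epsilon\mu$ forces either $\lambda_{\max}(\S-\M)\geq\epsilon\mu$ or $\lambda_{\min}(\S-\M)\leq-\epsilon\mu$, it suffices to control each one-sided tail. By the matrix Laplace-transform method, for any $\theta>0$,
\[
\Pr\big(\lambda_{\max}(\S-\M)\geq\epsilon\mu\big)\leq e^{-\theta\epsilon\mu}\,\E\big[\tr\exp\!\big(\theta(\S-\M)\big)\big],
\]
and symmetrically for $\lambda_{\min}$, so the whole problem reduces to bounding the centered trace-MGF $\E[\tr\exp(\theta(\S-\M))]$.

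The obstacle is that $\xi_1,\dots,\xi_n$ are \emph{not} independent, so the single-shot Lieb-concavity factorization that produces the clean product bound of Theorem \ref{t:tropp} is unavailable. This is exactly where the SR hypothesis enters. I would use three structural facts about $k$-homogeneous SR laws: (i) they are closed under conditioning on any coordinate $\xi_i\in\{0,1\}$; (ii) they are negatively associated, so conditioning on previously revealed coordinates keeps the remaining inclusion masses summable to $O(\mu)$; and (iii) the stochastic covering property, which couples the law after fixing one coordinate to the unconditional law by moving at most a single element. Realizing the SR measure through its sequential sampling process — reveal the $k$ chosen coordinates one at a time — I would set up a Doob martingale for $\tr\exp(\theta(\S-\M))$ and bound its multiplicative increments step by step.

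The core estimate is a per-step bound on how much revealing one more coordinate inflates the conditional trace-MGF. At each of the (at most) $k$ peeling steps one must push a conditional expectation through a matrix exponential; because the $\C_i$ do not commute, one controls each step individually rather than combining them in one stroke via Lieb concavity. Feeding in $\|\C_i\|\leq R$, operator convexity of the matrix exponential, and the negative-association control on the conditional masses, each step multiplies the MGF by $\exp\!\big(O(\theta^2 R)\cdot(\text{conditional mass})\big)$ for $\theta R$ small, and the masses telescope to a total of order $\mu$. The extra $\log k$ factor in the exponent is precisely the price of accumulating these $k$ per-step matrix bounds rather than fusing them, and this logarithmic loss is genuine for SR sums (in contrast to the independent case). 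Optimizing $\exp\!\big(-\theta\epsilon\mu + O(\theta^2 R\mu\log k)\big)$ over $\theta$, which gives $\theta=\Theta(\epsilon/(R\log k))$, and including the dimensional factor $d$ from $\tr(\cdot)\leq d\,\lambda_{\max}(\cdot)$ applied to the positive matrix exponential, yields the stated bound $d\exp\!\big(-\Theta(1)\,\epsilon^2\mu/(R(\log k+\epsilon))\big)$, the $+\epsilon$ in the denominator covering the large-$\epsilon$ regime.

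The hard part is the per-step matrix increment bound under dependence: unlike the independent case one cannot simply multiply conditional MGFs, and the interplay between the single-element stochastic-covering coupling and the non-commutativity of the matrix exponential must be handled by a matrix-analytic inequality rather than scalar algebra. Showing that accumulating these $k$ conditional steps costs only a $\log k$ factor — rather than a polynomial one — is the crux of the argument and the source of the logarithmic gap relative to the independent-sum bound of Theorem \ref{t:tropp}.
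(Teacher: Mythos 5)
First, note that the paper does not prove Theorem \ref{t:kyng} at all: it is quoted verbatim from \cite{kyng2018matrix} and used as a black box in the proof of Lemma \ref{l:subspace_embedding}, so there is no internal proof to compare your sketch against; it must be judged against the cited source. Your scaffolding does match the architecture of that source -- matrix Laplace transform, sequential revelation of the $k$ elements as a Doob martingale, the stochastic covering property to get norm-bounded increments, and negative association to control conditional masses -- but the sketch has a genuine internal inconsistency precisely at the step that produces the theorem's shape. You assert that each revelation multiplies the trace-MGF by $\exp\big(O(\theta^2R)\cdot(\text{conditional mass})\big)$ and that ``the masses telescope to a total of order $\mu$,'' and then declare the $\log k$ to be ``the price of accumulating these $k$ per-step matrix bounds.'' But if the per-step masses really summed to $O(\mu)$, optimizing $\exp(-\theta\epsilon\mu+O(\theta^2R\mu))$ over $\theta$ would reproduce the independent-case bound of Theorem \ref{t:tropp} with \emph{no} logarithmic loss; nothing in your accounting generates $\log k$. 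The actual mechanism is different: negative association only gives that the conditional expected \emph{remaining} mass is at most $\mu$ at every step (it need not decrease by the amount already revealed), so the expected mass revealed at step $i$ is bounded by $\mu/(k-i+1)$, and summing these yields a variance proxy of order $R\mu\sum_{i=1}^{k}\frac{1}{k-i+1}\approx R\mu\log k$; a Freedman-type optimization with increments of norm $O(R)$ (from the single-element coupling) then gives exactly the stated denominator $R(\log k+\epsilon)$. Identifying this harmonic-sum source of $\log k$ is not a cosmetic detail -- it is the crux, and your sketch misattributes it.

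Two further points. Your appeal to ``operator convexity of the matrix exponential'' is not a valid tool: $e^X$ is neither operator monotone nor operator convex, which is exactly why the per-step conditional bound requires trace inequalities (Lieb-type arguments applied conditionally, as in Tropp's framework) rather than any pointwise operator inequality; as written, the ``core estimate'' of your second paragraph is asserted rather than proved, and it constitutes essentially all of the technical content of \cite{kyng2018matrix}. Finally, your claim that the logarithmic loss ``is genuine for SR sums'' overstates what is known: the cited work leaves open whether the $\log k$ factor is necessary, so you should not present its optimality as established.
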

\begin{proofof}{Lemma}{\ref{l:subspace_embedding}}
To perform the analysis, we separate the matrix $\Xb$ into two parts as described in Lemma \ref{l:composition}: the rows coming from $\DPP(\frac1\lambda\A\A^\top)$, and the remaining i.i.d. rows. Denote the former as $\Xb_{\DPP}$ and the latter as $\Xb_{\IID}$. The i.i.d.~part can be analyzed via the  matrix concentration result for independent matrices (Theorem \ref{t:tropp}) by setting 
\[\Y_i=\H^{-\frac12}\big(\tfrac1{\gamma p_{j_i}}\a_{j_i}\a_{j_i}^\top+\tfrac\lambda \gamma\I\big)\H^{-\frac12},\] 
where $M$ is the number of rows in $\Xb_{\IID}$, $\a_j^\top$ denotes the $j$th row of $\A$ and $j_i$ is the row index sampled according to the approximate $\lambda$-ridge leverage score $\{p_j\}$ distribution. Since the number of rows in $\Xb_{\IID}$ is a random variable $M\sim\Poisson(\gamma)$, to apply Theorem \ref{t:tropp} we first condition on $M$. Note that we have 
$\sum_{i=1}^M\Y_i=\H^{-\frac12}(\frac1\gamma\A^\top\Xb_{\IID}^\top\Xb_{\IID}\A+\frac M{\gamma}\lambda\I)\H^{-\frac12}$ and $\E[\sum_{i=1}^m\Y_i\mid M]=\frac M\gamma\I$. Furthermore, using the assumption on the probabilities $p_j$, we have 
$\|\Y_i\|\leq \frac3\gamma$. Thus, Theorem~\ref{t:tropp} implies that (conditioned on $M$):  
\begin{align}
    \Pr\Big(\big\|
\H^{-\frac12}(\gamma^{-1}\A^\top\Xb_{\IID}^\top\Xb_{\IID}\A+\tfrac M\gamma\lambda\I)\H^{-\frac12} - \tfrac M\gamma\I\big\|\geq \eta\cdot \tfrac M\gamma\Big)\leq 2d\exp\big(-\eta^2 M/12\big).\label{eq:iid-tail}
\end{align}
For sufficiently large constant $C$, standard tail bounds for the Poisson distribution (e.g., Theorem 1 in \cite{poisson-tail}) imply that with probability $1-\delta$, we have $|M-\gamma|\leq \eta\gamma$ and, conditioned on this event, \eqref{eq:iid-tail} can be bounded by $\delta$. Putting this together, we conclude that with probability $1-2\delta$:
\begin{align*}
    \|\H^{-\frac12}(\tfrac1\gamma&\A^\top\Xb_{\IID}^\top\Xb_{\IID}\A+\lambda\I)
    \H^{-\frac12} - \I\|
    \\
    &=\|\H^{-\frac12}(\tfrac1\gamma\A^\top\Xb_{\IID}^\top\Xb_{\IID}\A+\tfrac M\gamma\lambda\I)
    \H^{-\frac12} -\tfrac M\gamma\I + (1-\tfrac M\gamma)\lambda\H^{-1} + (\tfrac M\gamma-1)\I \|
    \\
    &\leq\|\H^{-\frac12}(\tfrac1\gamma\A^\top\Xb_{\IID}^\top\Xb_{\IID}\A+\tfrac M\gamma\lambda\I)
    \H^{-\frac12} -\tfrac M\gamma\I\| + 2\,|\tfrac M\gamma-1| \leq 4\eta,
\end{align*}
where we used that $\|\lambda\H^{-1}\|\leq 1$ and $\eta\frac M\gamma\leq\eta(1+\eta)\leq 2\eta$.
 From this it immediately follows that:
\begin{align}
    \Z\succeq \H^{-\frac12}(\gamma^{-1}\A^\top\Xb_{\IID}^\top\Xb_{\IID}\A+\lambda\I)\H^{-\frac12}\succeq (1-4\eta)\I.\label{eq:concent1}
\end{align}
Next, to bound $\Z$ from above we must analyze the contribution of $\Xb_{\DPP}$. Note that the matrix concentration result of \cite{kyng2018matrix} applies to homogeneous Strongly Rayleigh distributions (i.e., where the sample size is fixed) whereas $S\sim \DPP(\frac1\lambda\A\A^\top)$ is non-homogeneous, because the size of $S$ is randomized. However, we can apply a standard transformation known as symmetric homogenization to transform $S$ from a non-homogeneous distribution over subsets of $[n]$ into $\tilde S$ which is an $n$-homogeneous distribution over subsets of $[2n]$, in such a way that $\tilde S\cap [n]$ is distributed identically to $S$ and is Strongly Rayleigh (see Definition 2.12 in \cite{borcea2009negative}).  Now, we can apply Theorem~\ref{t:kyng} with 
$\C_i=\frac1 p_i\H^{-\frac12}\a_i\a_i^\top\H^{-\frac12}$ for $i\in[n]$ and $\C_i=\zero$ for $i>n$, and defining $\xi_i=\one_{[i\in \tilde S]}$. Note that $\sum_i\xi_i\C_i=\H^{-\frac12}\A^\top\Xb_{\DPP}^\top\Xb_{\DPP}\A\H^{-\frac12}$ and $\|\C_i\|\leq 2$. To obtain the expectation of the sum, we use the fact that the marginal probabilities of a DPP are the ridge leverage scores, i.e., $\Pr(i\in S) = \a_i^\top(\A^\top\A+\lambda\I)^{-1}=:l_i(\lambda)$. We obtain that:
\begin{align*}
    \E\Big[\sum_i\xi_i\C_i\Big] = \sum_i\frac{l_i(\lambda)}{p_i}\H^{-\frac12}\a_i\a_i^\top\H^{-\frac12}\preceq 2d_\lambda\H^{-\frac12}\A^\top\A\H^{-\frac12}\preceq 2d_\lambda\I.
\end{align*}
Setting $\epsilon=C\log(n/\delta)/2$ for large enough $C$, Theorem~\ref{t:kyng} implies that with probability $1-\delta$:
\begin{align*}
    \|\H^{-\frac12}\A^\top\Xb_{\DPP}^\top\Xb_{\DPP}\A\H^{-\frac12}\|
    \leq C d_\lambda\log(n/\delta). 
\end{align*}
Note that $\gamma^{-1}C d_\lambda\log(n/\delta)\leq \eta$, so with probability $1-3\delta$:
\begin{align}
    \Z= \tfrac1\gamma\H^{-\frac12}\A^\top\Xb_{\DPP}^\top\Xb_{\DPP}\A\H^{-\frac12} + \H^{-\frac12}(\tfrac1\gamma\A^\top\Xb_{\IID}^\top\Xb_{\IID}\A+\lambda\I)\H^{-\frac12} \preceq (1+5\eta)\I.
    \label{eq:concent2}
\end{align}
Combining \eqref{eq:concent1} and \eqref{eq:concent2} we get $\|\Z-\I\|\leq 5\eta$. Adjusting the constants concludes the proof.
\end{proofof}

 Next, we we use the above matrix concentration result to derive moment bounds for the spectral norm of the random matrix $\Z^{-1}-\E[\Z^{-1}]$. Note that by Lemma \ref{l:least-squares} we have $\E[\Z^{-1}]=\I$. Also, recall that we use $\kappa$ to denote the condition number of $\H$.

\begin{lemma}\label{l:moments}
    If $\gamma\geq C d_\lambda \eta^{-2}p\log(n\kappa/\eta)$ then:
    \begin{align*}
        \E\Big[\big\|\Z^{-1}-\E[\Z^{-1}]\big\|^p\Big]^{\frac1p}\leq \eta.
    \end{align*}
\end{lemma}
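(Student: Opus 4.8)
The plan is to combine the high-probability subspace-embedding bound of Lemma~\ref{l:subspace_embedding} with a deterministic worst-case bound on $\|\Z^{-1}\|$, and then integrate the resulting tail. Recall that by Lemma~\ref{l:least-squares} we have $\E[\Z^{-1}]=\I$, so the target is $\E[\|\Z^{-1}-\I\|^p]^{1/p}\le\eta$. Two ingredients drive the argument. First, a deterministic bound: since $\A^\top\Xb^\top\Xb\A\succeq\zero$, we have $\Z=\H^{-\frac12}(\gamma^{-1}\A^\top\Xb^\top\Xb\A+\lambda\I)\H^{-\frac12}\succeq\lambda\H^{-1}$, hence $\|\Z^{-1}\|\le\lambda_{\max}(\H)/\lambda=:B$ always, and so $\|\Z^{-1}-\I\|\le 1+B\le 2B$. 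Second, a high-probability bound: inverting Lemma~\ref{l:subspace_embedding}, for every $s\in(0,1)$ one gets $\Pr(\|\Z-\I\|>s)\le n\exp(-\gamma s^2/(Cd_\lambda))$, and whenever $\|\Z-\I\|=s<1$ the inverse-perturbation identity $\|\Z^{-1}-\I\|=\|\Z^{-1}(\I-\Z)\|\le\|\Z^{-1}\|\,s\le s/(1-s)$ converts a bound on $\Z-\I$ into one on $\Z^{-1}-\I$. In particular $\|\Z^{-1}-\I\|>u$ forces $\|\Z-\I\|>u/(1+u)$.

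Next I would bound the $p$-th moment by truncating at the scale $\eta$. Writing $X=\|\Z^{-1}-\I\|$,
\begin{align*}
\E[X^p] = \E[X^p\,\one_{[X\le\eta/2]}] + \E[X^p\,\one_{[X>\eta/2]}] \le (\eta/2)^p + (2B)^p\,\Pr(X>\eta/2),
\end{align*}
where the good-event term is controlled by the truncation level and the bad-event term uses the deterministic cap $X\le 2B$. By the second ingredient, $\Pr(X>\eta/2)\le\Pr(\|\Z-\I\|>\eta/4)\le n\exp(-\gamma\eta^2/(16Cd_\lambda))$. It then suffices to make the second term at most $(\eta/2)^p$, after which $\E[X^p]\le 2(\eta/2)^p\le\eta^p$ for $p\ge1$, giving the claim.

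Finally, the second term is at most $(\eta/2)^p$ exactly when $\gamma\ge 16Cd_\lambda\eta^{-2}\big(p\log(4B/\eta)+\log n\big)$; since $B=\lambda_{\max}(\H)/\lambda$ enters only through its logarithm, this is absorbed into the hypothesized bound $\gamma\ge Cd_\lambda\eta^{-2}p\log(n\kappa/\eta)$ (identifying $\kappa$ with the relevant condition-number scale $\lambda_{\max}(\H)/\lambda$ and adjusting the absolute constant $C$). The delicate point, and the main obstacle, is precisely this bad-event contribution: on the rare event that the sketch fails to be a subspace embedding, $\Z$ may be nearly singular and $\|\Z^{-1}\|$ as large as $\lambda_{\max}(\H)/\lambda$, so the worst case is raised to the $p$-th power. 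Controlling $(2B)^p\,\Pr(X>\eta/2)$ therefore requires the exponentially small failure probability of Lemma~\ref{l:subspace_embedding} to dominate this $(\cdot)^p$ blow-up, which is exactly why the factor $p$ multiplies the logarithm in the required lower bound on $\gamma$. A minor technical care is that Lemma~\ref{l:subspace_embedding} is stated at a fixed confidence level, so I would first invert it into a genuine tail function of the threshold $s$ before truncating.
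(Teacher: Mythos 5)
Your proof is correct and takes essentially the same route as the paper's: condition on the high-probability event of Lemma~\ref{l:subspace_embedding}, on which $\|\Z^{-1}-\I\|$ is of order $\eta$, cap the complementary event by the deterministic almost-sure bound on $\|\Z^{-1}-\I\|$, and make the failure probability exponentially small in $p$ so that the worst-case term raised to the $p$-th power still contributes at most $\eta^p$ — which is exactly why $p$ multiplies the logarithm in the lower bound on $\gamma$. The paper's version is simply terser (it plugs a failure probability of the form $(\eta/\kappa)^p$ into Lemma~\ref{l:subspace_embedding} and writes $\E[\|\Z^{-1}-\E[\Z^{-1}]\|^p]\leq \eta^p+\delta\kappa^p$), whereas you additionally spell out the tail inversion, the perturbation step $\|\Z-\I\|\leq s\Rightarrow\|\Z^{-1}-\I\|\leq s/(1-s)$, and the identification of the almost-sure cap $\lambda_{\max}(\H)/\lambda$ with $\kappa$, all of which the paper glosses over via "adjusting the constants."
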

\begin{proof}
From Lemma \ref{l:subspace_embedding} we know that if $\gamma\geq C d_\lambda\eta^{-2} \log(n/\delta)$, then with probability $1-\delta$ we have $\|\Z-\I\|\leq\eta$, which also implies that $\|\Z^{-1}-\E[\Z^{-1}]\|\leq\eta$, since $\E[\Z^{-1}]=\I$. Also, note that $\|\Z^{-1}-\E[\Z^{-1}]\|\leq\kappa$ almost surely. It follows that:
\begin{align*}
    \E\Big[\big\|\Z^{-1}-\E[\Z^{-1}]\big\|^p\Big] \leq \eta^p + \delta\kappa^p.
\end{align*}
Letting $\delta = \eta/\kappa^p$, we obtain that $\E[\|\Z^{-1}-\E[\Z^{-1}]\|^p]^{\frac1p}\leq 2\eta$. Adjusting the constants appropriately, we obtain the desired result.
\end{proof}
Using this moment bound, we can show that the average of the inverses of $q$ i.i.d. copies of $\Z$, denoted $\Z_1,...,\Z_q$ exhibit concentration around the mean with high probability.
We first quote a Khintchine/Rosenthal type inequality from \cite{cgt12}, bounding the moments of a sum of random matrices.
\begin{lemma}[\cite{cgt12}]\label{l:rosenthal}
  Suppose that $p\geq 2$ 
  and $r=\max\{p,2\log d\}$. Consider a finite
   sequence $\{\Y_k\}$ of independent, symmetrically random, self-adjoint
   matrices with dimension $d\times d$. Then,
   \begin{align*}
     \E\Big[\big\|\sum\nolimits_k\Y_k\big\|^p\Big]^{\frac 1p}
     \leq\sqrt{\ee r}\,
     \Big\|\sum\nolimits_k\E[\Y_k^2]\Big\|^{\frac 12}+
     2\ee r \,\E\big[\!\max\nolimits_k\|\Y_k\|^p\big]^{\frac 1p}.
   \end{align*}
 \end{lemma}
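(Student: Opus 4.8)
The statement is a dimension-dependent matrix Rosenthal--Pinelis inequality, quoted from \cite{cgt12}, so the plan is to reproduce the standard three-step route to such a bound: reduce the operator norm to a Schatten norm, apply a noncommutative Khintchine/Rosenthal inequality at that Schatten exponent, and then convert back to the operator norm. The only purpose of the choice $r=\max\{p,2\log d\}$ is to make the dimensional factors in the two conversions harmless (bounded by $\sqrt{e}$).

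First I would pass from the operator norm to the Schatten-$r$ norm $\|\cdot\|_{S_r}$. Since $\|\M\|\le\|\M\|_{S_r}$ for every matrix $\M$, and since $\E[X^p]^{1/p}\le\E[X^r]^{1/r}$ because $p\le r$, we obtain $\E[\|\sum_k\Y_k\|^p]^{1/p}\le\E[\|\sum_k\Y_k\|_{S_r}^r]^{1/r}$. This moves the problem to the regime of matching probability and trace exponent $r$, where the martingale inequality below is available in sharp form.

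Second --- and this is the analytic heart --- I would invoke the noncommutative Rosenthal--Pinelis inequality at Schatten exponent $r$ for independent, self-adjoint, symmetric (hence mean-zero) matrices:
\[
\E\big[\,\|\sum\nolimits_k\Y_k\|_{S_r}^r\,\big]^{1/r}
\le \sqrt{r-1}\,\big\|(\sum\nolimits_k\E\Y_k^2)^{1/2}\big\|_{S_r}
+ (r-1)\,\E\big[\,\max_k\|\Y_k\|_{S_r}^r\,\big]^{1/r}.
\]
I would derive this by symmetrization: the hypothesis that each $\Y_k$ is symmetrically distributed lets me insert i.i.d.\ Rademacher signs $\varepsilon_k$ without changing the law of $\sum_k\Y_k$, after which, conditioning on the matrices, the noncommutative Khintchine inequality applied to $\sum_k\varepsilon_k\Y_k$ produces the random quadratic variation $\|(\sum_k\Y_k^2)^{1/2}\|_{S_r}$ with the $\sqrt{r}$ scaling. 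A second, self-bounding step then replaces $\sum_k\Y_k^2$ by its expectation $\sum_k\E\Y_k^2$, and the discrepancy between the random and predictable quadratic variations is exactly what gets absorbed into the $\max_k$ term with its linear-in-$r$ prefactor.

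Finally I would convert both Schatten norms back to operator norms using $r\ge2\log d$. For the variance term, $\|(\sum_k\E\Y_k^2)^{1/2}\|_{S_r}^2=\|\sum_k\E\Y_k^2\|_{S_{r/2}}\le d^{2/r}\|\sum_k\E\Y_k^2\|\le e\|\sum_k\E\Y_k^2\|$, which together with $\sqrt{r-1}\le\sqrt{r}$ produces the factor $\sqrt{er}$; and $\|\Y_k\|_{S_r}\le d^{1/r}\|\Y_k\|\le\sqrt{e}\,\|\Y_k\|$ turns the maximum term into $\sqrt{e}\,\E[\max_k\|\Y_k\|^r]^{1/r}$, whose prefactor is below $2er$. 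The moment of this maximum term emerges at the Schatten exponent $r$, so matching it to the exponent $p$ displayed in the statement is precisely the exponent/constant bookkeeping that the sharp \cite{cgt12} formulation is built to control. The main obstacle is the second step: the noncommutative Rosenthal--Pinelis inequality is not elementary, and obtaining the correct $\sqrt{r}$ and $r$ dependence (rather than a crude absolute constant) requires either a matrix Burkholder--Davis--Gundy argument or a delicate pairing of symmetrization with the optimal noncommutative Khintchine constant; the surrounding norm comparisons are routine by contrast.
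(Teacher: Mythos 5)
The paper itself offers no proof of this lemma---it is imported verbatim from \cite{cgt12}---so your attempt is really measured against the original argument there. Your overall plan (symmetrization using the symmetry of the $\Y_k$, noncommutative Khintchine at Schatten exponent $r$, and conversion between Schatten and operator norms via $d^{1/r}\leq d^{1/(2\log d)}=\sqrt{\ee}$) is the right family of ideas, and it correctly explains where the choice $r=\max\{p,2\log d\}$ and the factor $\sqrt{\ee r}$ come from. However, the execution has a genuine gap, and it sits exactly where you wave it off as ``bookkeeping.'' Your first step passes the \emph{entire} moment to exponent $r$ via $\E[\|\sum_k\Y_k\|^p]^{1/p}\leq\E[\|\sum_k\Y_k\|_{S_r}^r]^{1/r}$, which forces the maximum term in your final bound to appear as $\E[\max_k\|\Y_k\|^r]^{1/r}$. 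But the lemma claims $\E[\max_k\|\Y_k\|^p]^{1/p}$, and since $r\geq p$, Lyapunov's inequality goes the wrong way: $\E[\max_k\|\Y_k\|^p]^{1/p}\leq\E[\max_k\|\Y_k\|^r]^{1/r}$, and the ratio between the two sides can be arbitrarily large (take $\|\Y_1\|=t$ with small probability $q$ and $0$ otherwise; the ratio is $q^{1/r-1/p}\to\infty$). So what you derive is a strictly weaker statement, and no constant adjustment closes the distance. A secondary issue: the ``noncommutative Rosenthal--Pinelis inequality'' you invoke, with a $\max_k$ term and constants $\sqrt{r-1}$ and $(r-1)$, is not an off-the-shelf result; the standard noncommutative Rosenthal inequalities (Junge--Zheng type) carry $\big(\sum_k\E\|\Y_k\|_{S_r}^r\big)^{1/r}$ in place of the maximum, so invoking the max form essentially assumes the hard part of the lemma.

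The repair requires restructuring rather than patching: keep the outer expectation at exponent $p$ throughout, and use the Schatten-$r$ norm only inside the conditional Rademacher average where Khintchine needs it. Concretely: conditionally on the $\Y_k$, apply Khintchine in $S_r$ and immediately convert to the operator norm (cost $\sqrt{\ee}$), giving
\begin{align*}
\E\Big[\big\|\sum\nolimits_k\Y_k\big\|^p\Big]^{\frac1p}\;\lesssim\;\sqrt{\ee r}\;\E\Big[\big\|\sum\nolimits_k\Y_k^2\big\|^{\frac p2}\Big]^{\frac1p}.
\end{align*}
Then center the quadratic variation, symmetrize, apply conditional Khintchine once more, and use the operator inequality $\sum_k\Y_k^4\preceq\max_k\|\Y_k\|^2\cdot\sum_k\Y_k^2$ followed by Cauchy--Schwarz, which is precisely the step that produces the maximum at exponent $p$: $\E\big[\max_k\|\Y_k\|^{p/2}\,\|\sum_k\Y_k^2\|^{p/4}\big]^{2/p}\leq\E[\max_k\|\Y_k\|^p]^{1/p}\,\E[\|\sum_k\Y_k^2\|^{p/2}]^{1/p}$. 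This yields a self-bounding inequality of the form $F\leq a+bF^{1/2}$ for $F=\E[\|\sum_k\Y_k^2\|^{p/2}]^{2/p}$, whose resolution $F^{1/2}\leq\sqrt a+b$ delivers the variance term with prefactor $\sqrt{\ee r}$ and the maximum term at exponent $p$ with prefactor $O(\ee r)$, as stated. Your surrounding norm comparisons (steps one and three) are fine; it is the routing of the exponents, not the constants, that has to be done differently.
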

We are now ready to prove Lemma \ref{l:matrix_concent}. We state here a more precise version of the result. The proof follows by combining Lemmas \ref{l:moments} and \ref{l:rosenthal}, following a standard conversion from moment bounds to high-probability guarantees (our proof is based on the proof of Corollary 13 from \cite{determinantal-averaging}).
\begin{lemma}[restated Lemma \ref{l:matrix_concent}]\label{l:matrix_concent2}
    If $\gamma\geq Cd_\lambda\eta^{-2}\log^4(\frac{n\kappa}{\eta\delta})$ then with probability $1-\delta$, we have:
    \begin{align*}
        \Big\|\frac1q\sum_{k=1}^q(\Z_k^{-1}-\E[\Z_k^{-1}])\Big\|\leq \frac{\eta}{\sqrt q}.
    \end{align*}
\end{lemma}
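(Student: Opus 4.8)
The plan is to convert the per-copy moment bound of Lemma~\ref{l:moments} into a high-probability bound on the averaged fluctuation via the matrix Rosenthal inequality (Lemma~\ref{l:rosenthal}), following the moment-to-tail conversion used for Corollary~13 in \cite{determinantal-averaging}. Write $\M_k=\Z_k^{-1}-\E[\Z_k^{-1}]$; these are i.i.d., self-adjoint, and by Lemma~\ref{l:least-squares} mean-zero, since $\E[\Z^{-1}]=\I$. First I would symmetrize: replacing each $\M_k$ by $\M_k-\M_k'$ for independent copies $\M_k'$ (equivalently, attaching Rademacher signs) costs only universal constants, via the standard inequality $\E[\|\sum_k\M_k\|^p]\le\E[\|\sum_k(\M_k-\M_k')\|^p]$, and produces the symmetrically-random self-adjoint summands required by Lemma~\ref{l:rosenthal}, while merely doubling the second-moment term and inflating the uniform bound by a factor $2$.

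Then I would apply Lemma~\ref{l:rosenthal} with $r=\max\{p,2\log d\}$ to bound $\E[\|\sum_k\M_k\|^p]^{1/p}$ by a variance term $\sqrt{er}\,\|\sum_k\E[\M_k^2]\|^{1/2}$ plus a uniform term $2er\,\E[\max_k\|\M_k\|^p]^{1/p}$. For the variance term, since $\M_k^2\succeq\zero$ I would use $\|\E[\M^2]\|\le\E[\|\M\|^2]\le\E[\|\M\|^p]^{2/p}$ and invoke Lemma~\ref{l:moments} (at moment order $p$ and target accuracy $\eta_0$) to get $\|\E[\M^2]\|\le\eta_0^2$, whence $\|\sum_k\E[\M_k^2]\|^{1/2}\le\sqrt q\,\eta_0$. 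For the uniform term I would bound the maximum by the sum, $\E[\max_k\|\M_k\|^p]\le q\,\E[\|\M\|^p]\le q\,\eta_0^p$, giving $\E[\max_k\|\M_k\|^p]^{1/p}\le q^{1/p}\eta_0$. Dividing through by $q$ yields
$$\E\Big[\big\|\tfrac1q\textstyle\sum_k\M_k\big\|^p\Big]^{1/p}\ \le\ \sqrt{er}\,\frac{\eta_0}{\sqrt q}\ +\ 2er\,\frac{\eta_0}{q^{1-1/p}}.$$
Because $p\ge 2$ forces $q^{1-1/p}\ge\sqrt q$, both contributions are $O(r\,\eta_0/\sqrt q)$.

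It remains to tune the free parameters. I would pick the moment order $p=\Theta(\log(1/\delta))$ so that Markov's inequality, $\Pr(X\ge e\,\E[X^p]^{1/p})\le e^{-p}\le\delta$, converts the moment estimate into an $O(r\,\eta_0/\sqrt q)$ bound holding with probability $1-\delta$. Since the summands are $d\times d$ and $d\le n$, we have $r=\max\{p,2\log d\}=O(\log(n/\delta))$; to make the final bound at most $\eta/\sqrt q$ I would then set $\eta_0=\eta/(C'r)=\Theta(\eta/\log(n/\delta))$. Finally I would check that the stated lower bound on $\gamma$ suffices to apply Lemma~\ref{l:moments} at this $(p,\eta_0)$: substituting $\eta_0\asymp\eta/\log(n/\delta)$ and $p\asymp\log(1/\delta)$ into its hypothesis $\gamma\ge C d_\lambda\eta_0^{-2}p\log(n\kappa/\eta_0)$ gives $\gamma\ge C d_\lambda\eta^{-2}\log^4(\tfrac{n\kappa}{\eta\delta})$, exactly the claim (a $\log^2$ from $\eta_0^{-2}$, one $\log$ from $p$, and one $\log$ from $\log(n\kappa/\eta_0)$).

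The main obstacle, and the part needing the most care, is the bookkeeping in this last step: one must simultaneously control the moment order $p$, the dimensional factor $r$, and the per-copy accuracy $\eta_0$ so that (i) the variance term dominates and carries the correct $1/\sqrt q$ rate while the uniform term is genuinely lower order, and (ii) the back-substitution into Lemma~\ref{l:moments} collapses to exactly the advertised fourth power of the logarithm rather than a larger power. The symmetrization and the Rosenthal application themselves are routine; the delicate point is that the target accuracy $\eta_0$ must be shrunk by the logarithmic factor $r$ relative to $\eta$, which is precisely what forces the $\log^4$ dependence in the requirement on $\gamma$.
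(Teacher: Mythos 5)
Your proposal is correct and follows essentially the same route as the paper's proof: symmetrization with Rademacher signs, the matrix Rosenthal inequality of Lemma~\ref{l:rosenthal} applied to the per-copy moment bounds from Lemma~\ref{l:moments}, and a Markov argument at moment order $p=\Theta(\max\{\log(1/\delta),\log d\})$ with the per-copy accuracy shrunk by the logarithmic factor, which is exactly how the paper arrives at the $\log^4$ requirement on $\gamma$. The only differences are cosmetic (the paper folds $r$ into $p$ by choosing $p=2\lceil\max\{\log(1/\delta),\log(d)\}\rceil$ and sets $\tilde\eta=\eta/(4C'p)$, while you keep $r$ and $p$ separate), so there is nothing substantive to change.
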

\begin{proof}
    From Lemma \ref{l:moments}, we know that if $\gamma\geq C d_\lambda \tilde\eta^{-2}p\log(n\kappa/\tilde\eta)$, then for any $2\leq s\leq p$ we have $\E[\|\Z_k^{-1}-\E[\Z_k^{-1}]\|^s]^{\frac1s}\leq \tilde\eta$. Following a standard symmetrization argument, we can write the following:
    \begin{align*}
        \E\Big[\big\|\frac1q\sum_{k=1}^q(\Z_k^{-1}-\E[\Z_k^{-1}])\big\|^p\Big]^{\frac1p}
        \leq 2\cdot\E\Big[\big\|\sum_{k=1}^q\frac{r_k}{q}(\Z_k^{-1}-\E[\Z_k^{-1}])\big\|^p\Big]^{\frac1p},
    \end{align*}
    where $r_k$ are independent Rademacher random variables. We now apply Lemma \ref{l:rosenthal} to the symmetrically random matrices $\Y_k=\frac{r_k}{q}(\Z_k^{-1}-\E[\Z_k^{-1}])$, obtaining:
    \begin{align*}
    \E\Big[\big\|\sum_{k=1}^q\Y_k\big\|^p\Big]^{\frac1p}
    &\leq\sqrt{\ee r}\,
     \Big\|\sum\nolimits_k\E[\Y_k^2]\Big\|^{\frac 12}+
     2\ee r \,\E\big[\!\max\nolimits_k\|\Y_k\|^p\big]^{\frac 1p}
     \\
    &\leq \sqrt{\frac{\ee r}{q}}\cdot\E\big[\|\Z_k^{-1}-\E[\Z_k^{-1}]\|^2\big]^{\frac12}
    +\frac{2\ee r}{\sqrt q}\E\big[\|\Z_k^{-1}-\E[\Z_k^{-1}]\|^p\big]^{\frac1p}
    \leq \frac{C'p}{\sqrt m}\cdot\tilde\eta,
    \end{align*}
    for $p\geq 2\log(d)$. Finally, following the proof of Corollary 13 of \cite{determinantal-averaging}, we use Markov's inequality with $\alpha=\frac{\eta}{\sqrt q}$, $\tilde\eta = \frac{\eta}{4C'p}$ and $p=2\lceil\max\{\log(1/\delta),\log(d)\}\rceil$:
    \begin{align*}
        \Pr\bigg(\Big\|\frac1q\sum_{k=1}^q(\Z_k^{-1}-\E[\Z_k^{-1}])\Big\|\ge\alpha\bigg)
        \leq \alpha^{-p}\cdot
        \E\Big[\big\|\frac1q\sum_{k=1}^q(\Z_k^{-1}-\E[\Z_k^{-1}])\big\|^p\Big]\leq \bigg(\frac{2C'p\tilde\eta}{\alpha\sqrt q}\bigg)^p\leq \delta,
    \end{align*}
    which completes the proof.
\end{proof}
Finally, note that the sketched Hessian matrix $\Hbh_{t}$, defined in Section \ref{sec:unbiased-ns} and used in Lemma \ref{l:matrix_concent}, where recall that $\lambda'=\lambda\cdot(1-\frac{d_\lambda}{m})$, satisfies the following:
\begin{align*}
    \Hbh_{t} = \frac{\lambda}{\lambda'}\A^\top\Sb^\top\Sb\A + \lambda\I 
    = \frac{m}{m-d_\lambda}\A^\top\frac1m\Xb^\top\Xb\A+\lambda\I = \gamma^{-1}\A^\top\Xb^\top\Xb\A+\lambda\I = \H^{\frac12}\Z\H^{\frac12},
\end{align*}
so Lemma \ref{l:matrix_concent} follows immediately from Lemma \ref{l:matrix_concent2} by setting $\eta=\frac1{\sqrt\alpha}$ and assuming that $m\leq n$.

\section{Additional Numerical Results}
\label{sec:additional_numerical}
In this section, we present additional numerical results. 
Figure \ref{fig:intro-debiasing-supplement} complements Figures \ref{fig:intro-debiasing}~and~\ref{fig:regularized_ls_cifar}, demonstrating on two additional datasets that rescaling the local regularizer helps reduce the estimation error of distributed averaging for Gaussian sketch, Rademacher sketch, uniform sampling, and the surrogate sketch.
Figure \ref{fig:regularized_ls_detavg_comparison_supplement} shows the comparison of surrogate sketch, uniform sampling with unweighted averaging, and determinantal averaging, on the Boston housing prices dataset, similarly to Figure \ref{fig:regularized_ls_detavg_comparison} but for different sketch sizes. We observe that the surrogate sketch still empirically outperforms the other methods in the case of different sketch sizes.

\begin{figure} [ht]
\begin{minipage}[b]{0.48\linewidth}
  \centering
  \centerline{\includegraphics[width=\columnwidth]{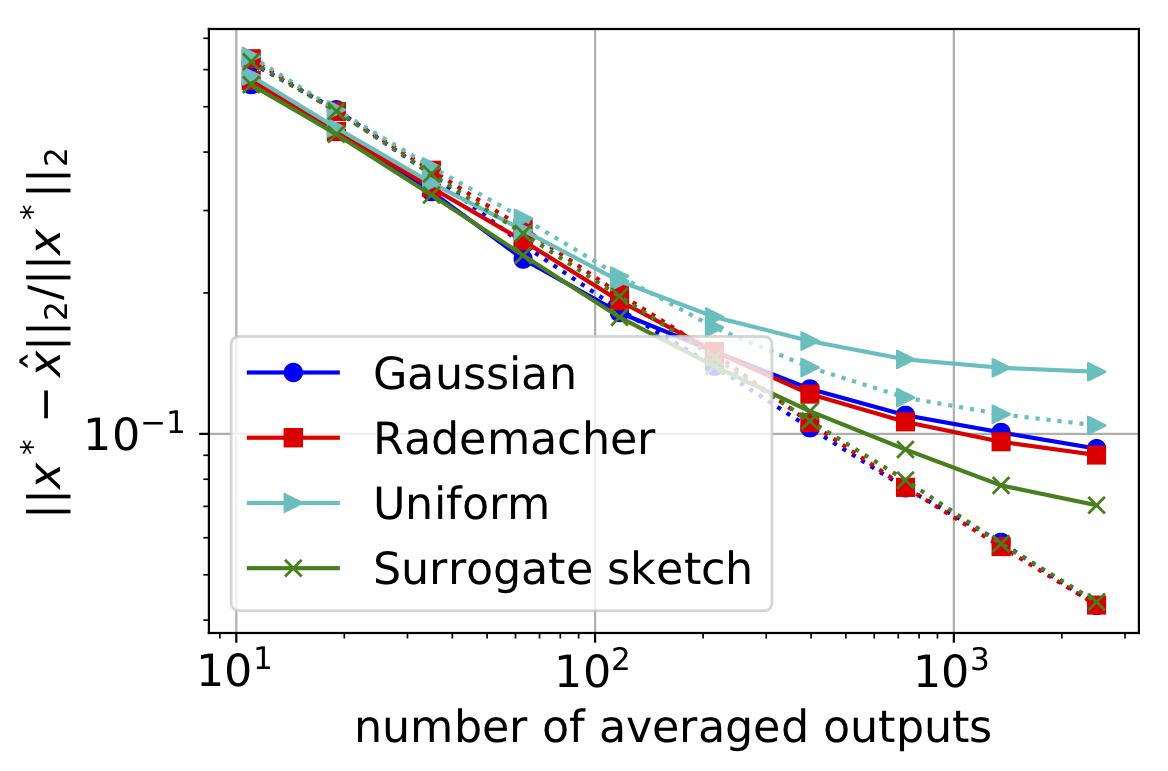}}
  \centerline{(a) diabetes}\medskip
\end{minipage}
\hfill
\begin{minipage}[b]{0.48\linewidth}
  \centering
  \centerline{\includegraphics[width=\columnwidth]{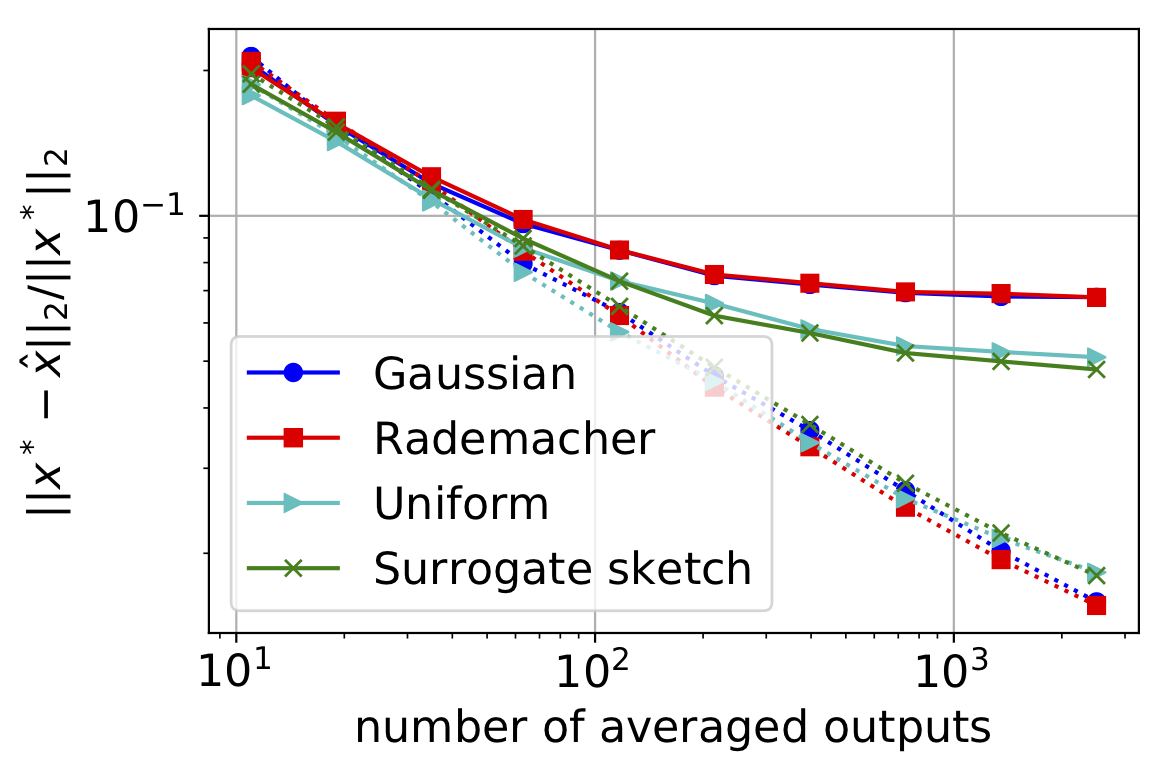}}
  \centerline{(b) monks-1}\medskip
\end{minipage}
\caption{Estimation error against the number of averaged outputs for two different datasets; diabetes and monks-1. The dotted lines show the error when the local regularizer $\lambda^\prime$ is picked using the expression in Theorem \ref{t:intro-unbiased} whereas the straight lines correspond to using the value of the global regularizer as the local regularizer. These plots show the performances of the same sketches as Figure~\ref{fig:intro-debiasing} on two additional datasets. Problem parameters are as follows: Plot a: $n=440$, $d=10$, $m=20$, $\lambda=1$. Plot b: $n=124$, $d=6$, $m=20$, $\lambda=100$.}
\label{fig:intro-debiasing-supplement}
\end{figure}

\begin{figure} [ht]
\begin{minipage}[b]{0.48\linewidth}
  \centering
  \centerline{\includegraphics[width=\columnwidth]{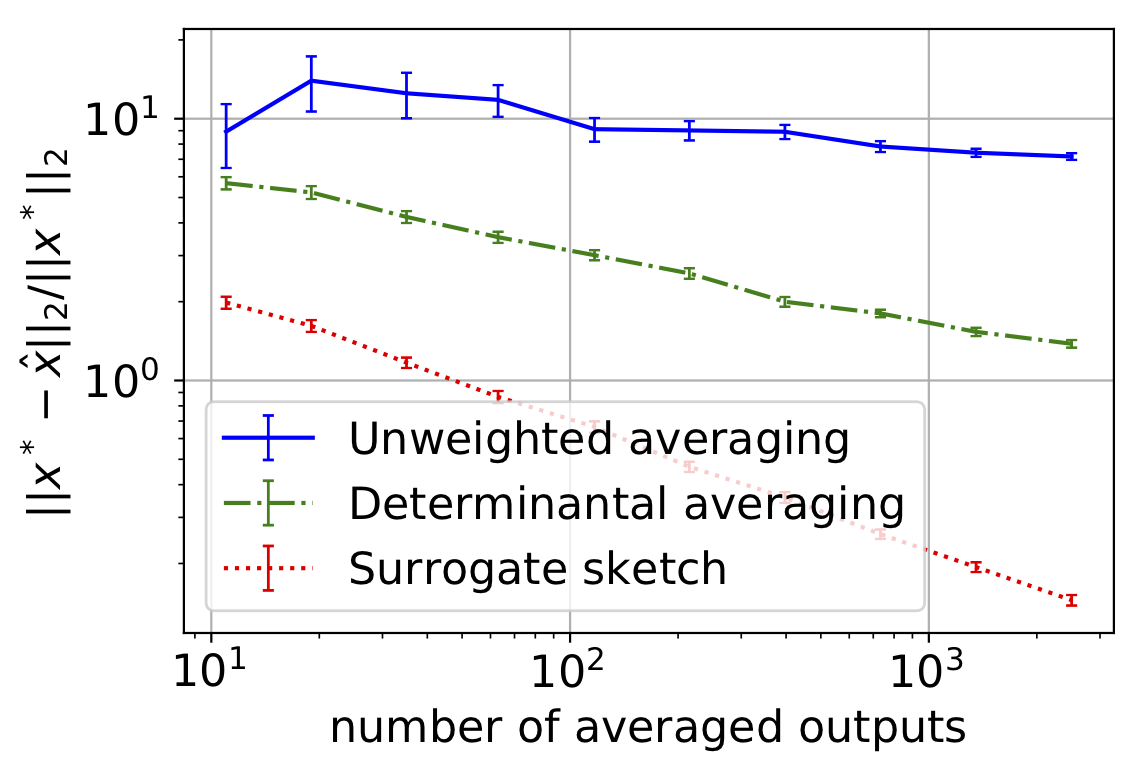}}
  \centerline{(a) $m=20$}\medskip
\end{minipage}
\hfill
\begin{minipage}[b]{0.48\linewidth}
  \centering
  \centerline{\includegraphics[width=\columnwidth]{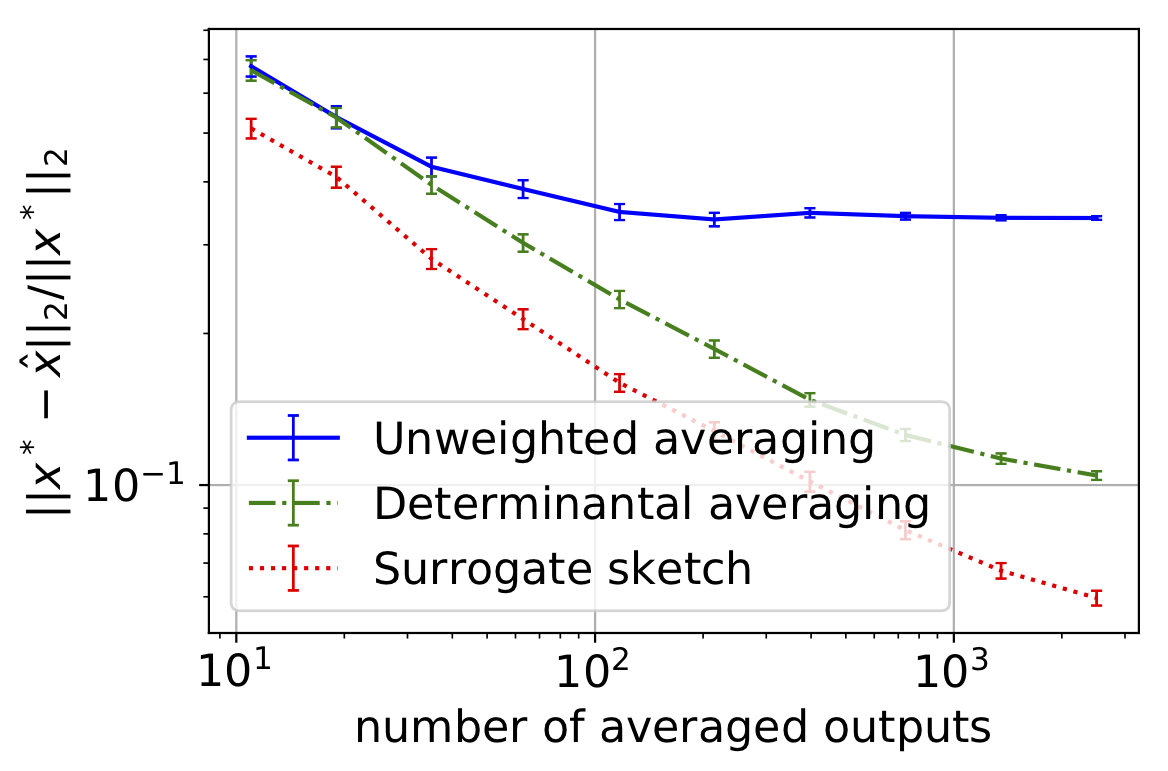}}
  \centerline{(b) $m=100$}\medskip
\end{minipage}
\caption{Estimation error of the \emph{surrogate sketch}, against uniform sampling with \emph{unweighted averaging} \cite{mahoney2018giant} and \emph{determinantal averaging} \cite{determinantal-averaging}. The plots in this figure follow the same setting as Figure \ref{fig:regularized_ls_detavg_comparison} except for the sketch size. In these plots, different sketch sizes are used, which are given in the captions of each plot.}
\label{fig:regularized_ls_detavg_comparison_supplement}
\end{figure}

We next give the exact form of the optimization problem solved in the experiments of Figure \ref{fig:dist_newton_sketch} in the main body of the paper.
The problem solved in Figure \ref{fig:dist_newton_sketch} is a  logistic regression problem with $\ell_2$ regularization:
\begin{align}
     \textrm{minimize}_{\x} \, -\sum_{i=1}^n \left( b_i \log(p_i) + (1-b_i)\log(1-p_i)  \right) + \frac{\lambda}{2} \|\x\|^2,
\end{align}
where $\p \in \mathbb{R}^{n}$ is defined such that $p_i = 1/(1+\exp(-\a_i^\top \x))$. Here $\a_i^\top$ represents the $i$'th row of the data matrix $\A \in \mathbb{R}^{n\times d}$. The output vector is denoted by $\b \in \mathbb{R}^n$. 

In addition to the plots in Figure \ref{fig:dist_newton_sketch}, we include more experimental results for the distributed Newton sketch algorithm. Instead of regularized logistic regression, we consider a different convex optimization problem which has inequality constraints $\|\A\x\|_{\infty} \leq t$ and a quadratic objective $\|\x - \mathbf{c}\|^2$. Problems of this form can be converted into an unconstrained optimization problem using the log-barrier method \cite{bartan2020distributed} to obtain:
\begin{align} \label{log_barrier_opt_prob_2}
    \textrm{minimize}_{\x} \, - \sum_{i=1}^n \log(-\a_i^\top \x+t) -\sum_{i=1}^n \log(\a_i^\top \x+t) + \lambda \|\x\|^2 - 2\lambda \mathbf{c}^\top \x + \lambda \|\mathbf{c}\|^2,
\end{align} 
where $t > 0$, $\mathbf{c} \in \mathbb{R}^d$, and $\a_i$ is the $i$'th row of the data matrix $\A \in \mathbb{R}^{n\times d}$. 
Figure \ref{fig:dist_newton_sketch_logbarrier} shows the normalized error against iterations for the distributed Newton sketch algorithm when it is used to solve the problem \eqref{log_barrier_opt_prob_2}. We note that rescaling the local regularizer as in Theorem \ref{t:intro-unbiased} leads to speedups in convergence for the problem \eqref{log_barrier_opt_prob_2} for Gaussian sketch and surrogate sketch. 

We note that the step sizes for the distributed Newton sketch algorithm in the experiments of both Figure \ref{fig:dist_newton_sketch} and Figure \ref{fig:dist_newton_sketch_logbarrier} have been determined using backtracking line search. The same set of backtracking line search parameters has been used in all of the distributed Newton sketch experiments: Initial step size is set to $\alpha_0=1$, update parameter is $\tau=2$ (where the step size updates are of the form $\alpha \leftarrow \alpha / \tau$), and the control parameter is $c = 0.1$.

\begin{figure}[ht]
\begin{minipage}[b]{0.48\linewidth}
  \centering
  \centerline{\includegraphics[width=\columnwidth]{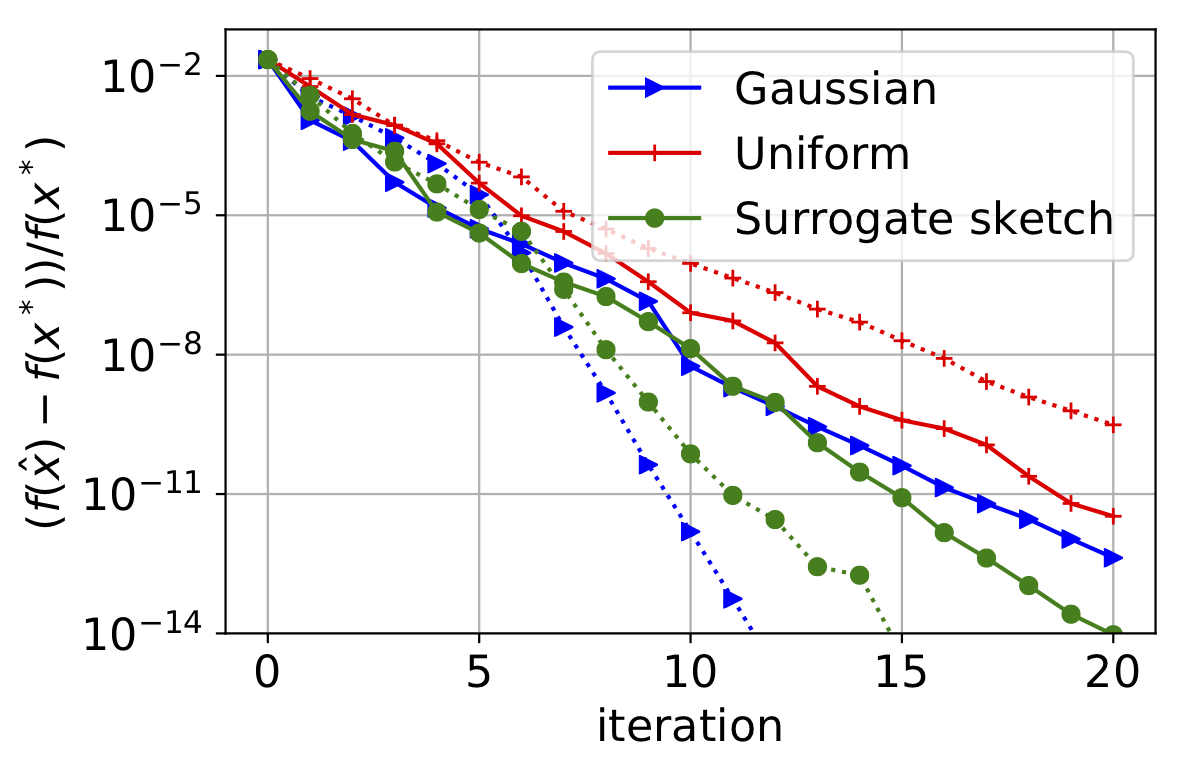}}
  \centerline{(a) $q=20$ workers}\medskip
\end{minipage}
\hfill
\begin{minipage}[b]{0.48\linewidth}
  \centering
  \centerline{\includegraphics[width=\columnwidth]{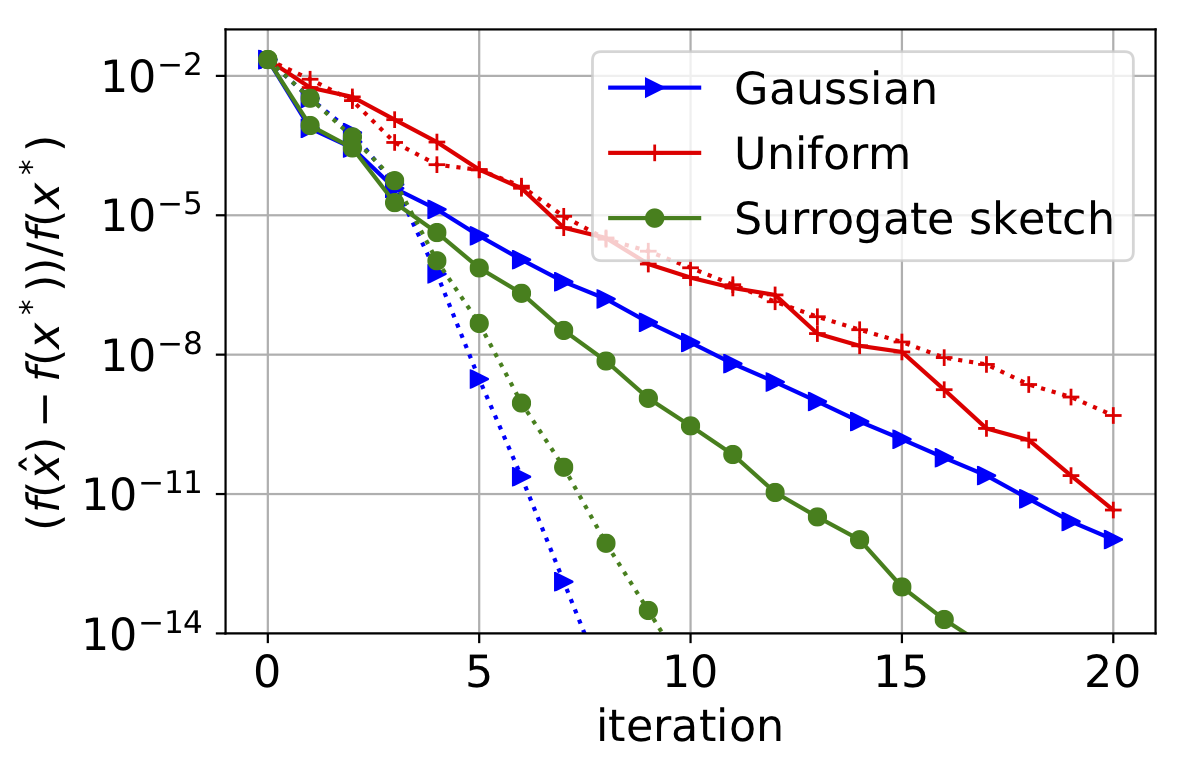}}
  \centerline{(b) $q=100$ workers}\medskip
\end{minipage}
\caption{Distributed Newton sketch algorithm. The problem being solved is an inequality constrained optimization problem, which is transformed into an unconstrained problem using the log-barrier method, as given in \eqref{log_barrier_opt_prob_2}. Data dimensions: $n=500$, $d=50$, $m=100$. $\lambda=10$. The step size for distributed Newton sketch updates has been determined via backtracking line search with parameters $\tau = 2$, $c = 0.1$, $a_0 = 1$. The dotted lines show the error when the local regularizer $\lambda^\prime$ is picked using the expression in Theorem \ref{t:intro-unbiased} whereas the straight lines correspond to using the value of the global regularizer as the local regularizer.}
\label{fig:dist_newton_sketch_logbarrier}
\end{figure}

Figure \ref{fig:regularized_ls_lm2_sweep} shows for various sketches that choosing the local regularizer as stated in Theorem \ref{t:intro-unbiased} in fact leads to the lowest estimation error possible that one could hope to achieve by optimizing over the local regularization parameter. The dotted blue lines show the estimation error if we were to choose the local regularizer to be values from $1$ to $10$. The straight red line shows the error when the local regularizer $\lambda^\prime$ is picked using the expression in Theorem \ref{t:intro-unbiased}. We observe that in the regime where the number of workers is high, the lowest estimation error is achieved by the red line. We also see that this empirical observation is true for not only the surrogate sketch but also Gaussian sketch and uniform sampling.

\begin{figure}[ht]
\begin{minipage}[b]{0.32\linewidth}
  \centering
  \centerline{\includegraphics[width=\columnwidth]{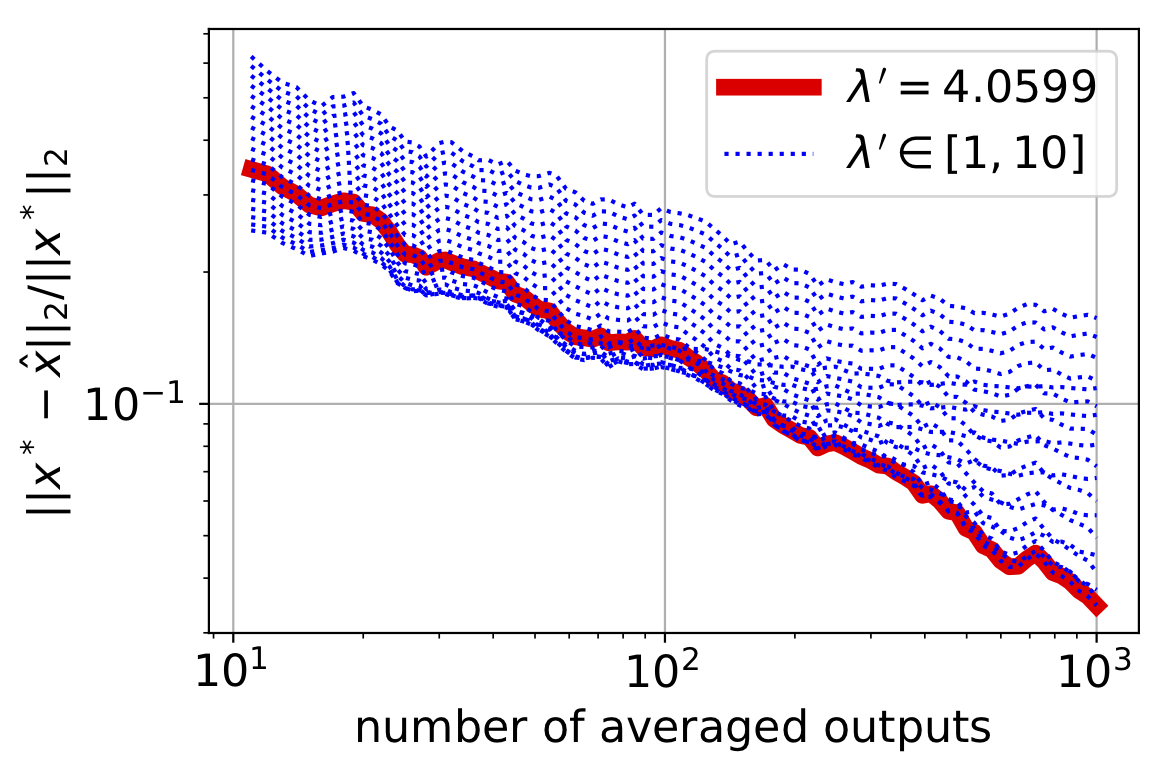}}
  \centerline{(a) Gaussian}\medskip
\end{minipage}
\hfill
\begin{minipage}[b]{0.34\linewidth}
  \centering
  \centerline{\includegraphics[width=\columnwidth]{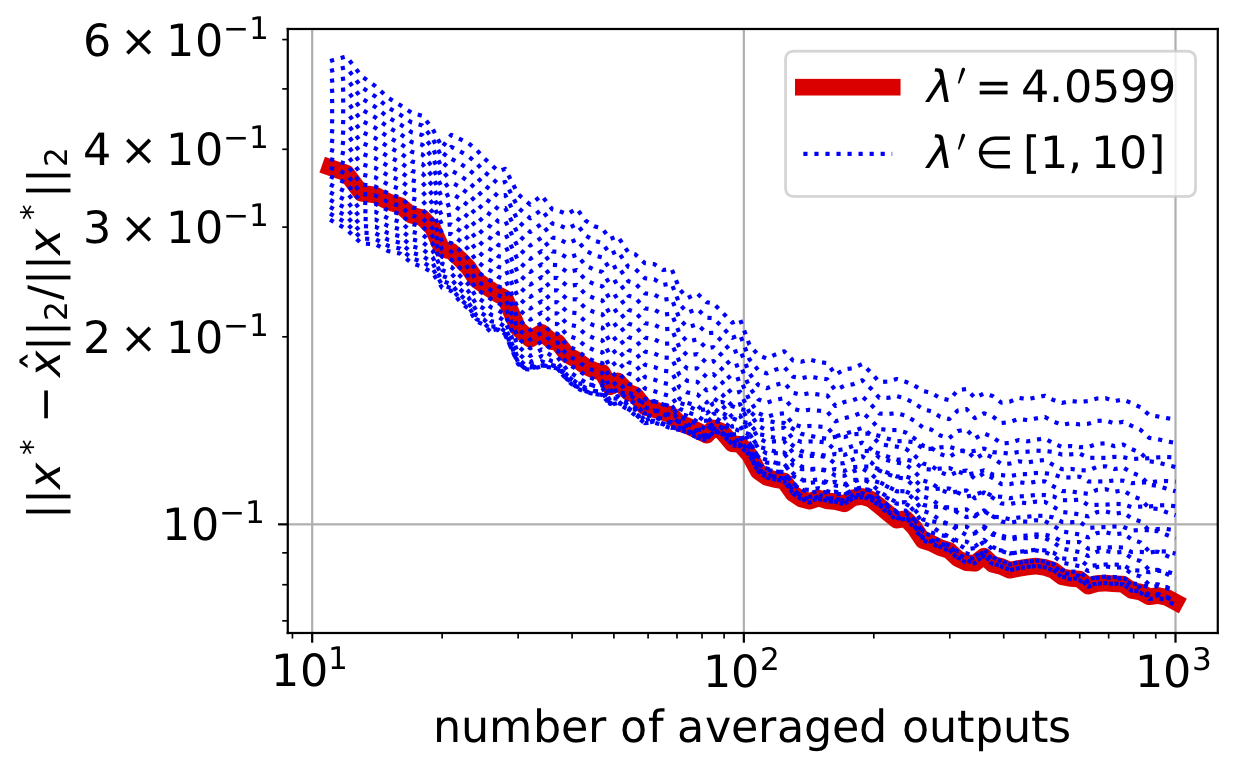}}
  \centerline{(b) Uniform}\medskip
\end{minipage}
\hfill
\begin{minipage}[b]{0.32\linewidth}
  \centering
  \centerline{\includegraphics[width=\columnwidth]{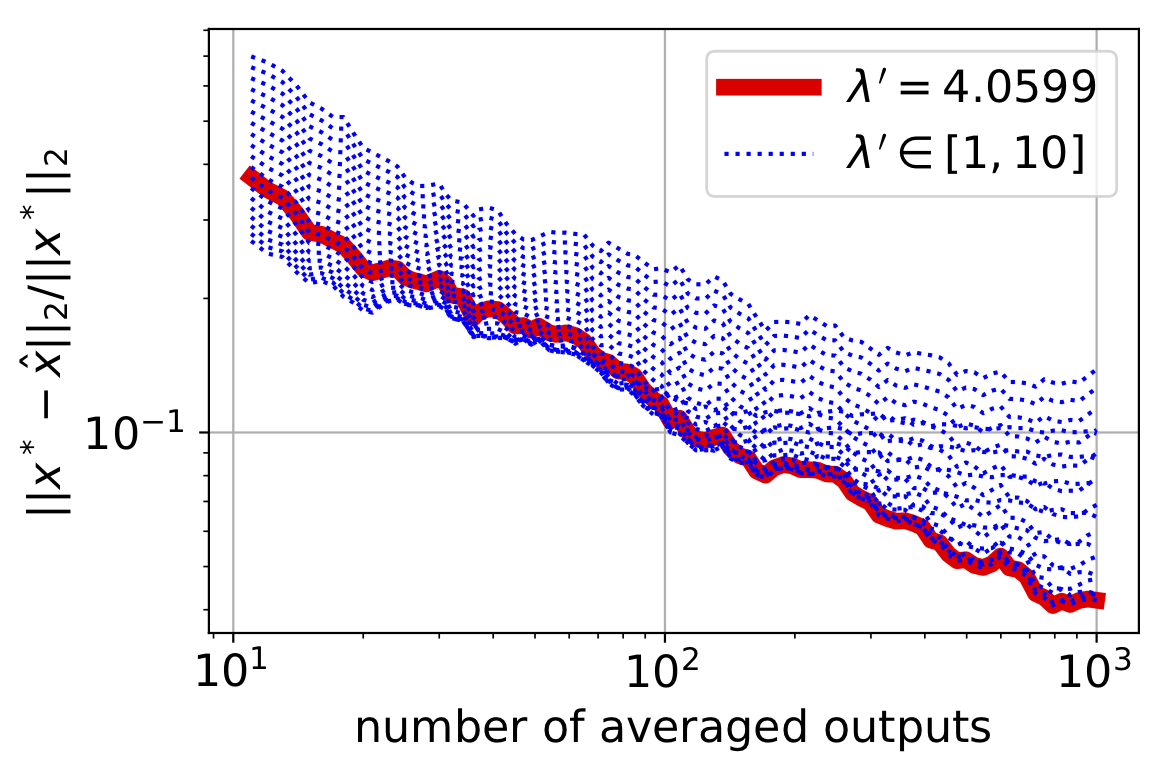}}
  \centerline{(c) Surrogate sketch}\medskip
\end{minipage}
\caption{Estimation error against the number of workers for various sketches for the Boston housing prices dataset. The red line shows the error for the debiased version according to Theorem \ref{t:intro-unbiased} and the dotted blue lines correspond to using different local regularization parameter values from $[1, 10]$. All of the curves are averaged over $25$ independent trials. Sketch size is $m=20$ and the global regularization parameter is $\lambda=10$. }
\label{fig:regularized_ls_lm2_sweep}
\end{figure}

\end{document}